\newtheorem{example}{Example}
\newtheorem{theorem}{Theorem}
\title{On Sufficient and Necessary Conditions in Bounded \CTL: A Forgetting Approach}
\author{%
Renyan Feng$^{1,2}$\and
Erman Acar$^{2,*}$\and
Stefan Schlobach$^{2}$\and
Yisong Wang$^{1,}$\footnote{Corresponding author(s).} \and
Wanwei Liu $^3$\\
\affiliations
$^{1}$Guizhou University, P. R. China\\
$^{2}$Vrije Universiteit Amsterdam, Netherlands\\
$^{3}$National University of Defense Technology, P. R. China\\
\emails
fengrenyan@gmail.com,
\{Erman.Acar, k.s.schlobach\}@vu.nl,
yswang@gzu.edu.cn,
wwliu@nudt.edu.cn
}
\begin{document}

\newcommand{\tuple}[1]{{\langle{#1}\rangle}}
\newcommand{\Mod}{\textit{Mod}}
\newcommand\ie{{\it i.e. }}
\newcommand\eg{{\it e.g.}}
\renewcommand{\st}{s.t.}
\newtheorem{definition}{Definition}
\newtheorem{lemma}{Lemma}
\newtheorem{proposition}{Proposition}
\newtheorem{corollary}[theorem]{Corollary}
\newcommand{\rto}{\rightarrow}
\newcommand{\lto}{\leftarrow}
\newcommand{\lrto}{\leftrightarrow}
\newcommand{\Rto}{\Rightarrow}
\newcommand{\Lto}{\Leftarrow}
\newcommand{\LRto}{\Leftrightarrow}
\newcommand{\Var}{\textit{Var}}
\newcommand{\Forget}{\textit{Forget}}
\newcommand{\KForget}{\textit{KForget}}
\newcommand{\TForget}{\textit{TForget}}
\newcommand{\Fst}{\textit{Fst}}
\newcommand{\dep}{\textit{dep}}
\newcommand{\term}{\textit{term}}
\newcommand{\literal}{\textit{literal}}

\newcommand{\Atom}{\mathcal{A}}
\newcommand{\SFive}{\textbf{S5}}
\newcommand{\MPK}{\textsc{k}}
\newcommand{\MPB}{\textsc{b}}
\newcommand{\MPT}{\textsc{t}}
\newcommand{\MPA}{\forall}
\newcommand{\MPE}{\exists}

\newcommand{\DNF}{\textit{DNF}}
\newcommand{\CNF}{\textit{CNF}}

\newcommand{\degree}{\textit{degree}}
\newcommand{\sunfold}{\textit{sunfold}}

\newcommand{\Pos}{\textit{Pos}}
\newcommand{\Neg}{\textit{Neg}}
\newcommand\wrt{{\it w.r.t.}}
\newcommand{\Hm} {{\cal M}}
\newcommand{\Hw} {{\cal W}}
\newcommand{\Hr} {{\cal R}}
\newcommand{\Hb} {{\cal B}}
\newcommand{\Ha} {{\cal A}}

\newcommand{\Dsj}{\triangledown}

\newcommand{\wnext}{\widetilde{\bigcirc}}
\newcommand{\nex}{\bigcirc}
\newcommand{\ness}{\square}
\newcommand{\qness}{\boxminus}
\newcommand{\wqnext}{\widetilde{\circleddash}}
\newcommand{\qnext}{\circleddash}
\newcommand{\may}{\lozenge}
\newcommand{\qmay}{\blacklozenge}
\newcommand{\unt} {{\cal U}}
\newcommand{\since} {{\cal S}}
\newcommand{\SNF} {\textit{SNF$_C$}}
\newcommand{\start}{\textbf{start}}
\newcommand{\Elm}{\textit{Elm}}
\newcommand{\simp}{\textbf{simp}}
\newcommand{\nnf}{\textbf{nnf}}

\newcommand{\CTL}{\textrm{CTL}}
\newcommand{\Ind}{\textrm{Ind}}
\newcommand{\Tran}{\textrm{Tran}}
\newcommand{\Sub}{\textrm{Sub}}
\newcommand{\NI}{\textrm{NI}}
\newcommand{\Inst}{\textrm{Inst}}
\newcommand{\Com}{\textrm{Com}}
\newcommand{\Rp}{\textrm{Rp}}
\newcommand{\forget}{{\textsc{f}_\CTL}}
\newcommand{\ALL}{\textsc{a}}
\newcommand{\EXIST}{\textsc{e}}
\newcommand{\NEXT}{\textsc{x}}
\newcommand{\FUTURE}{\textsc{f}}
\newcommand{\UNTIL}{\textsc{u}}
\newcommand{\GLOBAL}{\textsc{g}}
\newcommand{\UNLESS}{\textsc{w}}
\newcommand{\Def}{\textrm{def}}
\newcommand{\IR}{\textrm{IR}}
\newcommand{\Tr}{\textrm{Tr}}
\newcommand{\dis}{\textrm{dis}}
\def\PP{\ensuremath{\textbf{PP}}}
\def\NgP{\ensuremath{\textbf{NP}}}
\def\W{\ensuremath{\textbf{W}}}
\newcommand{\Pre}{\textrm{Pre}}
\newcommand{\Post}{\textrm{Post}}

\newcommand{\CTLsnf}{{\textsc{SNF}_{\textsc{ctl}}^g}}
\newcommand{\ResC}{{\textsc{R}_{\textsc{ctl}}^{\succ, S}}}
\newcommand{\CTLforget}{{\textsc{F}_{\textsc{ctl}}}}
\newcommand{\degex}{{\textsc{def}_{\textsc{ex}}}}
\newcommand{\Refine}{\textsc{Refine}}
\newcommand{\cf}{\textrm{cf.}}
\newcommand{\NEXP}{\textmd{\rm NEXP}}
\newcommand{\EXP}{\textmd{\rm EXP}}
\newcommand{\coNEXP}{\textmd{\rm co-NEXP}}
\newcommand{\NP}{\textmd{\rm NP}}
\newcommand{\coNP}{\textmd{\rm co-NP}}
\newcommand{\Pol}{\textmd{\rm P}}
\newcommand{\BH}[1]{\textmd{\rm BH}_{#1}}
\newcommand{\coBH}[1]{\textmd{\rm co-BH}_{#1}}
\newcommand{\Empty}{\emptyset}
\newcommand{\NLOG}{\textmd{\rm NLOG}}
\newcommand{\DeltaP}[1]{\Delta_{#1}^{p}}
\newcommand{\PIP}[1]{\Pi_{#1}^{p}}
\newcommand{\SigmaP}[1]{\Sigma_{#1}^{p}}

\maketitle

\begin{abstract}
Computation Tree Logic (\CTL) is one of the central formalisms in formal verification. As a specification language, it is used to express a property
that the system at hand is expected to satisfy. From
both the verification and the system design points
of view, some information content of such property might become irrelevant for the system due to
various reasons, e.g., it might become obsolete by
time, or perhaps infeasible due to practical difficulties. Then, the problem arises on how to subtract such piece of information without altering the
relevant system behaviour or violating the existing
specifications over a given signature. Moreover, in such a scenario, two
crucial notions are informative: the strongest necessary condition (SNC) and the weakest sufficient
condition (WSC) of a given property.

To address such a scenario in a principled way, we
introduce a forgetting-based approach in \CTL\ and
show that it can be used to compute SNC and WSC
of a property under a given model and over a given signature. We study its
theoretical properties and also show that our notion
of forgetting satisfies existing essential postulates of knowledge forgetting.
Furthermore, we analyse the computational complexity of some basic reasoning tasks for the fragment $\CTL_{\ALL\FUTURE}$ in particular.
\end{abstract}

\section{Introduction}
\label{introduction}

Computation Tree Logic (\CTL)~\cite{clarke1981design} is one of the central formalisms in formal verification. As a specification language, it is used to express a property
that the system at hand is expected to satisfy. From
both the verification and the system design points
of view,  there might be situations in which some information content of such property might become irrelevant for the system due to various reasons e.g., it might be discarded or become obsolete by time, or just  become infeasible due to practical difficulties. As keeping such information would be highly space-inefficient, the problem arises on how to remove it without altering the
relevant system behaviour or violating the existing system
specifications over a given signature. Consider the following example.

\begin{example}[Car-Manufacturing Company]\label{car_manufacturing}
Assume a car-manufacturing company which produces two types of cars: a (se)dan car and a (sp)orts car. In each manufacturing cycle, the company has to (s)elect one of the three options: (1) produce $se$  first, and then $sp$; (2) produce $sp$ first, and then $se$; (3) produce $se$ and $sp$ at the same time. At the end of each selection, a final (d)ecision is taken.

In Figure~\ref{BVM}, this scenario is  represented by the Kripke structure $\Hm=(S, R, L)$ with the initial state $s_0$ (called labelled state transition graph),  and the corresponding atomic variables $V=\{d,s,se,sp\}$.
\begin{figure}[ht]
  \centering
  \includegraphics[width=3cm]{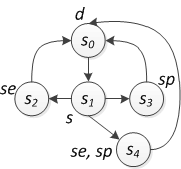}\\
  \caption{Car Engine Manufacturing Scenario }\label{BVM}
\end{figure}
Now assume a situation in which due to some problems (e.g., economic crises or new environmental regulations on the engine technology) company can no longer support the production of sports cars.
This means, all the manufacturing processes concerning $sp$ are no more necessary and should be dropped from both the specifications and the Kripke structure for simplification.
\end{example}

Similar scenarios  like the one presented in Example~\ref{car_manufacturing} may arise in many different domains such as business-process modelling, software development, concurrent systems and more~\cite{Baier:PMC:2008}. Yet dropping some restrictions in a large and complex  system or specification, without affecting the working system components or violating dependent specifications over a given signature, is a non-trivial task.   Moreover, in such a scenario, two logical notions introduced by E. Dijkstra in~\cite{DBLP:journals/cacm/Dijkstra75} are highly informative: the \emph{strongest necessary condition} (SNC) and the \emph{weakest sufficient condition}  (WSC)  of a given specification. These correspond to the \emph{most general consequence} and the \emph{most specific abduction} of such specification, respectively.

 To address these scenarios and to target the relevant notions SNC and WSC in a principled way, we employ a  method based on formal verification.\footnote{ This is  especially useful for abstracting away the domain-dependent problems, and focusing on conceptual ones.} In particular,
we introduce a \emph{forgetting}-based approach in \CTL\, and show that it can
be used to compute SNC and WSC on a restricted subset of the propositional variables, in the same spirit of~\cite{DBLP:Lin:AIJ:2001,doherty2001computing}.

  The rest of the paper is organised as follows. Next section reports about the related work. Section~\ref{preliminaries} introduces the notation and technical preliminaries. As key contributions, Section~\ref{forgetting}, introduces the notion of forgetting in bounded \CTL. Moreover, it provides a model-theoretic characterization of \CTL\  for (initial) Kripke structures, and studies the semantic properties of forgetting. In addition, a complexity analysis, concerning a relevant fragment  $\CTL_{\ALL\FUTURE}$, is carried out.
Section~\ref{ns_conditions} explores the relation between forgetting and SNC (WSC). Section~\ref{section_algorithm} gives a model-based algorithm for computing forgetting in \CTL\ and outline its complexity. Conclusion closes the paper.

Due to space restrictions, for most of the technical results, the actual proof is moved to the supplementary material~\footnote{https://github.com/fengrenyan/proof-of-CTL.git}, and instead an intuitive justification is put in place.

\section{Related Work}\label{related_work}
The notions of SNC and WSC were considered in the scope of formal verification among others,  in generating counterexamples~\cite{dailler2018instrumenting} and refinement of  system~\cite{woodcock1990refinement}.
In addition, the WSC and SNC provide a method to generate successor state axioms from causal theories. 
In~\cite{DBLP:Lin:AIJ:2001}, the SNC and WSC for a proposition $q$ on a restricted subset of the propositional variables under
a propositional theory $T$ are computed based on the notion of forgetting.
Besides, the SNC and WSC are generalized to first order logic (FOL) and a direct method that is based
on \emph{Second-Order Quantifier Elimination} (SOQE) technique has been proposed to automatically generate SNC and WSC in~\cite{doherty2001computing}.

\emph{Forgetting}, which was first formally defined in propositional and FOL by Lin and Reiter~\cite{lin1994forget,eiter2019brief}, can be traced back to the work of Boole on propositional
variable elimination and the seminal work of Ackermann~\cite{ackermann1935untersuchungen}.
Usually, the definition of forgetting can be defined from the perspective of Strong/Semantic Forgetting and Weak Forgetting  respectively~\cite{Zhang:KR:2010}.


In FOL, forgetting has often been studied as an instance of the SOQE problem. It is shown  in~\cite{lin1994forget} that the result of (strongly) forgetting an $n$-ary predicate $P$ from a FOL formula $\varphi$ is $\exists R \varphi[P/R]$, in which $R$ is an $n$-ary predicate variable and $\varphi[X/Y]$ is a result of replacing every occurrence of $X$ in $\varphi$ by $Y$.
The task of forgetting in FOL is to find a first-order formula that is equivalent to $\exists R \varphi[P/R]$.
It is obvious that this is a SOQE problem.
Similarly, the forgetting in description logics (DL) are also explored to create restricted views
of ontologies by eliminating concept and role symbols from DL-based
ontologies~\cite{Wang:AMAI:2010,Lutz:IJCAI:2011,Zhao:2017:IJCAI}.

In propositional logic (PL), forgetting has often been studied under the name of variable
elimination. In particular, the solution of forgetting a propositional variable $p$ from a PL formula $\varphi$ is $\varphi[p/\bot] \vee \varphi[p/\top]$~\cite{lin1994forget}. 
In~\cite{Yan:AIJ:2009}, the authors define the knowledge forgetting of \SFive\ modal logic from the \emph{strong forgetting} point of view to explore the relation between knowledge forgetting and knowledge update. Besides, they have proposed four general postulates (as we will revisit) for knowledge forgetting and shown that these four postulates precisely characterize the notion of knowledge forgetting described above in \SFive.
Moreover, forgetting in logic programs under answer-set semantics are considered in~\cite{DBLP:Zhang:AIJ2006,DBLP:journals/ai/EiterW08,Wong:PhD:Thesis,Yisong:JAIR,Yisong:IJCAI:2013}.

However, existing forgetting definitions in PL and answer set programming are not directly applicable in modal logics.
Moreover, existing forgetting techniques are not directly applicable in \CTL\ either because there are some temporal operators in \CTL\ but not in \SFive.
Similar to~\cite{Yan:AIJ:2009}, we research forgetting in bounded \CTL\ from the semantic forgetting point of view and show that the result of forgetting some propositions from a \CTL\ formula is always expressible in \CTL.
Furthermore, we show that our notion of forgetting satisfies those four postulates of forgetting presented in~\cite{Yan:AIJ:2009}.
And last, we demonstrate how forgetting can be used to compute the SNC and WSC on a set of the propositions.


\section{Notation and Preliminaries}
\label{preliminaries}
 Throughout this paper, we fix a finite set $\Ha$ of propositional variables (or atoms or propositions), use $V$, $V'$ for subsets of $\Ha$ and $\overline V = \Ha - V$.

\subsection{Kripke structures in $\CTL$}
In general, a transition system
 can be described by a \emph{Kripke \ structure} (see~\cite{Baier:PMC:2008} for details). A Kripke structure is a triple $\Hm=(S,R,L)$~\cite{emerson1990temporal}, where
\begin{itemize}
  \item $S$ is a finite nonempty set of states,\footnote{Since \CTL\ has finite model property~\cite{DBLP:journals/jcss/EmersonH85} we assume that the signature of states
  is fixed and finite, i.e., $S\subseteq\cal S$ with ${\cal S}=\{b_1,\ldots,b_m\}$,
  such that any \CTL\ formula with bounded length is satisfiable if and only if it is satisfiable in a such Kripke structure. Thus, there are only finite number of Kripke structures. },
  	
  \item $R\subseteq S\times S$ and, for each $s\in S$, there
  is $s'\in S$ such that $(s,s')\in R$,
  \item $L: S\rto 2^{\cal A}$ is a labeling function.
\end{itemize}

Given a Kripke structure $\Hm=(S,R,L)$, a \emph{path} $\pi$ of $\Hm$ is an infinite sequence
$\pi=(s_0, s_{1} s_{2},\dots)$ of states with
$(s_j, s_{j+1}) \in R$ for every $j\ge 0$.
By $s'\in \pi$, we mean that $s'$ is a state occurring in the path $\pi$.
In particular, we call $\pi_{s}$ 
a path of $\Hm$ starting  from $s$.
A state $s$ is {\em initial} if  there is a path $\pi_s$ of ${\cal M}$ \st\ $s'\in \pi_s$ for each state $s'\in S$.
If $s_0$ is an initial state of $\Hm$, then we denote this Kripke structure $\Hm$ as $(S,R,L,s_0)$ and call it an \emph{initial structure}.

For a given initial structure $\Hm=(S,R,L,s_0)$ and $s\in S$,
the {\em computation tree}
$\Tr_n^{\cal M}(s)$ of $\cal M$ (or simply $\Tr_n(s)$), that has depth $n$ and is rooted at $s$, is recursively defined as in~\cite{browne1988characterizing}, for $n\ge 0$,
\begin{itemize}
  \item $\Tr_0(s)$ consists of a single node $s$ with label $L(s)$.
  \item $\Tr_{n+1}(s)$ has as its root a node $s$ with label  $L(s)$, and
  if $(s,s')\in R$ then the node $s$ has a subtree $\Tr_n(s')$.
\end{itemize}

A {\em \MPK-structure} (or {\em \MPK-interpretation}) $\mathcal{K}$ consists of an initial structure
${\cal M}=(S, R, L, s_0)$ and a state $s\in S$, i.e.,  $\mathcal{K} = (\mathcal{M}, s)$.
If in addition $s=s_0$ (i.e., $\mathcal{K} = (\mathcal{M}, s_0)$), then the \MPK-structure is called an {\em initial} \MPK-structure.

\subsection{Syntax and Semantics of \CTL}
In the following we briefly review the basic syntax and semantics
of the \CTL~\cite{DBLP:journals/toplas/ClarkeES86}.
The {\em signature} of the language $\cal L$ of \CTL\ includes:
\begin{itemize}
  \item a finite set of Boolean variables, called {\em atoms} of $\cal L$: $\cal A$;
  \item constant symbols: $\bot$ and $\top$;
  \item the classical connectives: $\lor$ and $\neg$;
  \item the path quantifiers: $\ALL$ and $\EXIST$;
  \item the temporal operators: \NEXT, \FUTURE, \GLOBAL\ and \UNTIL, that
  means `neXt state', `some Future state', `all future states (Globally)' and `Until', respectively;
  \item parentheses: ( and ).
\end{itemize}

The priorities for the \CTL\ connectives are assumed to be in order as follows:
\begin{equation*}
  \neg, \EXIST\NEXT, \EXIST\FUTURE, \EXIST\GLOBAL, \ALL\NEXT, \ALL\FUTURE, \ALL\GLOBAL
 ,\land, \lor, \EXIST\UNTIL, \ALL\UNTIL, \rto,
\end{equation*}
where the leftmost (rightmost) symbol has the highest (lowest) priority.
Then the {\em existential normal form (or ENF in short) formulas} of
$\cal L$ are inductively defined via a Backus Naur form:
\begin{equation}\label{def:CTL:formulas}
  \phi ::=  \bot \mid \top \mid p \mid\neg\phi \mid \phi\lor\phi \mid
    \EXIST \NEXT \phi \mid
    \EXIST \GLOBAL \phi \mid
    \EXIST (\phi\ \UNTIL\ \phi)
\end{equation}
where $p\in\cal A$. The formulas $\phi\land\psi$ and $\phi\rto\psi$
are defined in a standard manner of propositional logic.
The other form formulas of $\cal L$ are abbreviated
using the forms of (\ref{def:CTL:formulas}).


Throughout this article we shall assume that every formula of $\cal L$ has bounded size, where
the size $|\varphi|$ of formula $\varphi$ is its length over the alphabet of $\cal L$~\cite{DBLP:journals/jcss/EmersonH85}.
As we will see later, this constraint will enable us to express the result of forgetting in \CTL\  in the form of a (disjunctive) \CTL\ formula. A  {\em theory} of $\cal L$ is a finite set of formulas of $\cal L$. By abusing the notation, we identify a theory $\Pi$ as the formula $\bigwedge\Pi$ whenever the context is clear.

We are now in the position to recall the semantics of $\cal L$.
Let ${\cal M}=(S,R,L,s_0)$ be an initial structure, $s\in S$ and $\phi$ a formula of $\cal L$.
The {\em satisfiability} relation between $({\cal M},s)$ and $\phi$,
written $({\cal M},s)\models\phi$, is 
defined
as follows:

\begin{itemize}
  \item $({\cal M},s)\not\models\bot$ \ and\  $({\cal M},s)\models\top$;
  \item $({\cal M},s)\models p$ iff $p\in L(s)$;
  \item $({\cal M},s)\models \phi_1\lor\phi_2$ iff
    $({\cal M},s)\models \phi_1$ or $({\cal M},s)\models \phi_2$;
  \item $({\cal M},s)\models \neg\phi$ iff  $({\cal M},s)\not\models\phi$;
  \item $({\cal M},s)\models \EXIST\NEXT\phi$ iff
    $({\cal M},s_1)\models\phi$ for some $(s,s_1)\in R$;
  \item $({\cal M},s)\models \EXIST\GLOBAL\phi$ iff
    $\cal M$ has a path $(s_1=s,s_2,\ldots)$ such that
    $({\cal M},s_i)\models\phi$ for each $i\ge 1$;
  \item $({\cal M},s)\models \EXIST(\phi_1\UNTIL\phi_2)$ iff
    $\cal M$ has a path $(s_1=s,s_2,\ldots)$ such that, for some $i\ge 1$,
    $({\cal M},s_i)\models\phi_2$ and
    $({\cal M},s_j)\models\phi_1$ for each $j~(1\leq j<i)$.
\end{itemize}

Similar to the work in \cite{browne1988characterizing,Bolotov:1999:JETAI},
only initial \MPK-structures are considered to be candidate models
in the following, unless otherwise noted. Formally,
an initial \MPK-structure $\cal K$ is a {\em model} of a formula $\phi$
whenever ${\cal K}\models\phi$.
We denote $\Mod(\phi)$ the set of models of $\phi$.
The formula
$\phi$  is {\em satisfiable}
if $\Mod(\phi)\neq\emptyset$.
Given two formulas $\phi_1$ and $\phi_2$,  by $\phi_1\models\phi_2$ we mean $\Mod(\phi_1)\subseteq\Mod(\phi_2)$, by $\phi_1\equiv\phi_2$ we mean $\phi_1\models\phi_2$ and $\phi_2\models\phi_1$.
In this case, $\phi_1$ is {\em equivalent} to $\phi_2$.
The set of atoms occurring in $\phi_1$ is denoted by $\Var(\phi_1)$.
The formula $\phi_1$ is {\em irrelevant to} the atoms in a set $V$ (or simply $V$-{\em irrelevant}), written $\IR(\phi_1,V)$,
if there is a formula $\psi$ with
$\Var(\psi)\cap V=\emptyset$ such that $\phi_1\equiv\psi$.

\section{Forgetting in \CTL}
\label{forgetting}
In this section, we present the notion of forgetting in \CTL\ and report its properties.
First, we give a general definition of \emph{bisimulation} between $\MPK$-structures, called $V$-bisimulation, to define forgetting in \CTL.
The notion of bisimulation captures the idea that the computation trees of two structures are behaviourally same.

 Second, the characterizing formula of an initial $\MPK$-structure on some set $V$ of propositions will be given. Then we will show that each initial $\MPK$-structure can be captured by a \CTL\ formula, and hence the result of forgetting $V$ from formula $\varphi$ can be expressed as a disjunction of the characterizing formulas of initial $\MPK$-structures which are $V$-bisimilar with some models of $\varphi$.
And last, the related properties, which include representation theorem, algebraic properties (i.e., Modularity, Commutativity and Homogeneity) of the forgetting operator,  and the complexity results on the fragment $\CTL_{\ALL\FUTURE}$, will be explored.

\subsection{$V$-bisimulation}

In our framework, we  will need to express bisimulation w.r.t. different sets of atomic variables explicitly under a single setting. Therefore, in this subsection, we define the notion of  $V$-bisimulation $\mathcal{B}^V$ which is a \emph{bisimulation w.r.t. a set $V$ of atomic propositions}. Hence, we also call it a set-based bisimulation.

In order to introduce  the actual notion,  we start with  the construction of $V$-bisimulation up to a certain degree (of depth) $n \in \mathbb{N}$  in the computation trees (denoted by $\Hb^V_n$)  which we will introduce next:


Let $V \subseteq \Ha$ and ${\cal K}_i=({\cal M}_i,s_i)$ with $i\in\{1,2\}$ and $\Hm_i=(S_i, R_i,L_i, s_0^i)$. 
\begin{itemize}
  \item $({\cal K}_1,{\cal K}_2)\in\Hb_0^V$ if $L_1(s_1)- V=L_2(s_2)- V$;  
  \item for $n\ge 0$, $({\cal K}_1,{\cal K}_2)\in\Hb_{n+1}^V$ if:
  \begin{itemize}
    \item $({\cal K}_1,{\cal K}_2)\in\Hb_0^V$,
    \item for every $(s_1,s_1')\in R_1$, there is a $(s_2,s_2')\in R_2$
    such that $({\cal K}_1',{\cal K}_2')\in \Hb_n^V$, and
    \item for every $(s_2,s_2')\in R_2$, there is a $(s_1,s_1')\in R_1$
    such that $({\cal K}_1',{\cal K}_2')\in \Hb_n^V$,
  \end{itemize}
  where ${\cal K}_i'=({\cal M}_i,s_i')$ with $i\in\{1,2\}$, and $n\in \mathbb{N}$.
\end{itemize}

In the rest of the paper, by bisimulation, we shall only refer to $V$-bisimulation. So to ease the notation, from now on we will omit the superscript $V$  in $\Hb_i^V$ and write $\Hb_i$ instead.

Now, we are ready to define the notion of $V$-bisimulation between \MPK-structures.
\begin{definition}[$V$-bisimulation]
  \label{def:V-bisimulation}
   Let $V\subseteq\cal A$. Given  two \MPK-structures ${\cal K}_1$ and ${\cal K}_2$ are $V$-{\em bisimilar},  denoted ${\cal K}_1 \lrto_V {\cal K}_2$,
 if and only if $ ({\cal K}_1,{\cal K}_2)\in {\Hb_n}\mbox{ for all } n\ge 0.$ Moreover, let $i\in \{1,2\}$, then two paths $\pi_i=(s_{i,1},s_{i,2},\ldots)$ of $\Hm_i$
 are $V$-{\em bisimilar} if
$ {\cal K}_{1,j} \lrto_V {\cal K}_{2,j}\mbox { for every $j \in  \mathbb{N}_{\geq 1}$ }$
 where ${\cal K}_{i,j}=(\Hm_i,s_{i,j})$.
\end{definition}

On the one hand,  this notion can be considered as a simple generalization of the classical
bisimulation-equivalence of Definition~7.1 in \cite{Baier:PMC:2008} when $V=\cal A$ and there is only one initial state (as in our case).

On the other hand, our definition of ${\Hb_n}$ is similar to
the state equivalence (i.e., $E_n$) in \cite{browne1988characterizing}, yet it is
different in the sense that ours is defined on \MPK-structures,
while state-equivalence is defined on states.
Moreover, our notion is also different
from  the state-based bisimulation notion of Definition~7.7 in \cite{Baier:PMC:2008},
which is defined for states of a given \MPK-structure.\footnote{As reported to us by an anonymous reviewer, there is also a notion of $k$-bisimulation~\cite{kaushik2002updates} outside the realm of logic (but from database literature), which has a similar intuition to our $\mathcal{B}_n$, yet in the opposite direction: they consider bisimilarity through parents of a node (states), while we consider successors in relations. Again our notion is defined over \MPK-structures. }

\begin{example}[cont'd from Example~\ref{car_manufacturing}]\label{ex:2}
Let us call the model given in the previous example as ${\cal K}_1$ with initial state $s_0$, \ie ${\cal K}_1=((S,R,L,s_0),s_0)$, as illustrated in Figure~\ref{v1uv2}. Then, ${\cal K}_2$ is obtained from ${\cal K}_1$ by  removing  $sp$,\footnote{It removes $sp$ from $L(s)$ for every $s\in S$. Note that $L(s_4)-\{sp\}=L(s_2)$.} and  ${\cal K}_3$ is obtained from ${\cal K}_2$ by removing $se$.
Observe that ${\cal K}_1\lrto_{\{sp\}} {\cal K}_2$, ${\cal K}_2\lrto_{\{se\}} {\cal K}_3$ and ${\cal K}_1\lrto_{\{sp,se\}} {\cal K}_3$. Besides,  ${\cal K}_1$ is not bisimilar~\cite{Baier:PMC:2008} with either ${\cal K}_2$ or ${\cal K}_3$.
\begin{figure}[ht]
  \centering
  \includegraphics[width=8cm]{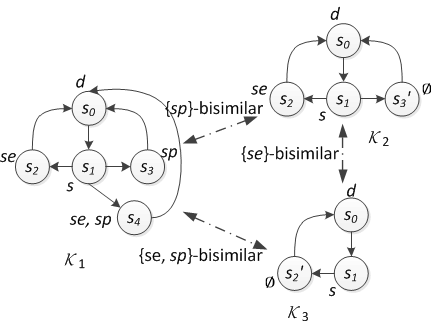}\\
  \caption{$V$-bisimulation between $\MPK$-structures}\label{v1uv2}
\end{figure}

\end{example}


 In the sequel, we shall simplify the notation further and write $s_1 \lrto_V s_2 $ to denote ${\cal K}_1 \lrto_V {\cal K}_2$ whenever the underlying initial structures are clear from the context.

\begin{lemma}\label{lem:equive}
  The relation $\lrto_V$ is an equivalence relation.
\end{lemma}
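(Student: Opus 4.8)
The plan is to reduce the claim to the standard fact that an intersection of equivalence relations is again an equivalence relation. Since, by Definition~\ref{def:V-bisimulation}, ${\cal K}_1\lrto_V{\cal K}_2$ holds exactly when $({\cal K}_1,{\cal K}_2)\in\Hb_n$ for every $n\ge 0$, we have $\lrto_V=\bigcap_{n\ge 0}\Hb_n$ as relations on the (finite, by our standing assumption on the signature) collection of $\MPK$-structures. Hence it suffices to show that each approximant $\Hb_n$ is reflexive, symmetric and transitive, and then transfer these three properties to the intersection.

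First I would prove the claim for $\Hb_n$ by induction on $n$. The base case $\Hb_0$ is immediate: reflexivity, symmetry and transitivity of the defining condition $L_1(s_1)-V=L_2(s_2)-V$ are just the corresponding properties of set equality.

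For the inductive step I would assume $\Hb_n$ is an equivalence relation and verify the three properties of $\Hb_{n+1}$. Reflexivity follows by using, for each transition $(s,s')\in R$, that same transition to witness both the forth and the back clause and invoking reflexivity of $\Hb_n$ on $({\cal K}',{\cal K}')$. Symmetry follows by interchanging the forth and back clauses and applying symmetry of $\Hb_n$ to each witnessed pair. The most involved case is transitivity: given $({\cal K}_1,{\cal K}_2),({\cal K}_2,{\cal K}_3)\in\Hb_{n+1}$, a transition $(s_1,s_1')\in R_1$ is first matched by some $(s_2,s_2')\in R_2$ through the forth clause of the first pair, and that matching transition is then matched by some $(s_3,s_3')\in R_3$ through the forth clause of the second pair; transitivity of $\Hb_n$ then gives $({\cal K}_1',{\cal K}_3')\in\Hb_n$, and the back clause is handled symmetrically. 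Combined with transitivity of the underlying $\Hb_0$ condition, this yields $({\cal K}_1,{\cal K}_3)\in\Hb_{n+1}$.

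Finally I would close by the intersection step: reflexivity, symmetry and transitivity of $\lrto_V$ each reduce to the same property holding simultaneously at every level $n$, which is precisely what the induction provides. I do not anticipate a genuine obstacle here; the only point requiring care is the bookkeeping in the transitivity step of the inductive case, where the intermediate transition in $R_2$ must be chained correctly before the inductive hypothesis on $\Hb_n$ is applied.
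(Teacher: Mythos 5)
Your proof is correct. The core content coincides with the paper's: both arguments rest on each approximant $\Hb_n$ being reflexive, symmetric and transitive (the paper states this in an auxiliary lemma in the appendix and calls it evident, whereas you actually carry out the induction, including the slightly delicate chaining of witnesses in the transitivity step, which is a welcome level of detail). Where you genuinely diverge is in how the three properties are transferred to $\lrto_V$ itself. The paper first proves that the chain is decreasing, $\Hb_{i+1}\subseteq\Hb_i$, concludes from finiteness of the state space that it stabilizes at some $k$ with $\Hb_{k}=\Hb_{k+1}=\lrto_V$, and then reads off the equivalence properties from that single level. You instead observe that $\lrto_V=\bigcap_{n\ge 0}\Hb_n$ by definition and invoke the elementary fact that an arbitrary intersection of equivalence relations is an equivalence relation. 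Your route is a little more general (it needs neither the monotonicity of the chain nor the finiteness of the collection of $\MPK$-structures, which you mention only parenthetically and never actually use), while the paper's route yields the stabilization facts as a by-product, and those are reused elsewhere (e.g.\ in the definition of the characterization number). Both are sound; for this lemma in isolation yours is the leaner argument.
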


Next, we give some further key properties of $\lrto_V$ w.r.t. different $V$s.
\begin{proposition}\label{div}
Let $i\in \{1,2\}$, $V_1,V_2\subseteq\cal A$, $s_1'$ and $s_2'$ be two states,
  $\pi_1'$ and $\pi_2'$ be two paths,
and ${\cal K}_i=({\cal M}_i,s_i)~(i=1,2,3)$ be \MPK-structures
 such that
${\cal K}_1\lrto_{V_1}{\cal K}_2$ and ${\cal K}_2\lrto_{V_2}{\cal K}_3$.
 Then:
 \begin{enumerate}[(i)]
   \item $s_1'\lrto_{V_i}s_2'~(i=1,2)$ implies $s_1'\lrto_{V_1\cup V_2}s_2'$;
   \item $\pi_1'\lrto_{V_i}\pi_2'~(i=1,2)$ implies $\pi_1'\lrto_{V_1\cup V_2}\pi_2'$;
   \item for each path $\pi_{s_1}$ of $\Hm_1$ there is a path $\pi_{s_2}$  of $\Hm_2$ such that $\pi_{s_1} \lrto_{V_1} \pi_{s_2}$, and vice versa;
   \item ${\cal K}_1\lrto_{V_1\cup V_2}{\cal K}_3$;
   \item If $V_1 \subseteq V_2$ then ${\cal K}_1 \lrto_{V_2} {\cal K}_2$.
 \end{enumerate}
\end{proposition}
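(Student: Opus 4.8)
The entire proposition rests on two elementary monotonicity facts about the approximants $\Hb_n$, both proved by a routine induction on $n$. First, the chain is \emph{decreasing}: $\Hb_{n+1}\subseteq\Hb_n$ for every $n$. The base step holds because $\Hb_1$ explicitly demands $\Hb_0$, and the inductive step holds because the forth/back clauses of $\Hb_{n+1}$ quantify over $\Hb_n$, which by the induction hypothesis is contained in $\Hb_{n-1}$, so the same witnesses establish membership in $\Hb_n$. Second, $\Hb_n$ is \emph{monotone in the forgotten set}: if $V\subseteq V'$ then $\Hb_n^{V}\subseteq\Hb_n^{V'}$. For the base case I would use the set identity $L(s)-V'=(L(s)-V)-V'$, which is valid precisely because $V\subseteq V'$, so that $L_1(s_1)-V=L_2(s_2)-V$ forces $L_1(s_1)-V'=L_2(s_2)-V'$; the inductive step is immediate since the successor clauses are untouched. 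Intersecting the second fact over all $n$ yields part (v) at once, and the same intersection applied to a single state pair gives part (i) — indeed either of its two hypotheses already suffices, since $V_i\subseteq V_1\cup V_2$. Part (ii) then follows by applying (i) coordinate-wise along the two paths, using the definition of $V$-bisimilarity of paths as pointwise $V$-bisimilarity of the corresponding $\MPK$-structures.

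For part (iv) I would avoid a fresh induction and instead compose the previous results with Lemma~\ref{lem:equive}. Monotonicity (part (v)) lifts the hypotheses $\mathcal K_1\lrto_{V_1}\mathcal K_2$ and $\mathcal K_2\lrto_{V_2}\mathcal K_3$ to $\mathcal K_1\lrto_{V_1\cup V_2}\mathcal K_2$ and $\mathcal K_2\lrto_{V_1\cup V_2}\mathcal K_3$, because $V_1,V_2\subseteq V_1\cup V_2$. Since $\lrto_{V_1\cup V_2}$ is transitive (Lemma~\ref{lem:equive} instantiated at the fixed set $V_1\cup V_2$), these two chain together to give $\mathcal K_1\lrto_{V_1\cup V_2}\mathcal K_3$.

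The substantive work is part (iii), the lifting of bisimilarity to full paths, and this is where I expect the main obstacle: the relation $\lrto_V$ is only the \emph{limit} of the level-bounded relations $\Hb_n$, so I first have to recover a genuine forth/back condition for $\lrto_V$ itself. Concretely I would prove the following limit lemma, using crucially that every $S_i$ is finite: if $s_1\lrto_{V_1}s_2$ and $(s_1,u')\in R_1$, then some $(s_2,v')\in R_2$ satisfies $u'\lrto_{V_1}v'$ (and symmetrically for $R_2$). The argument is a pigeonhole/compactness step: $s_2$ has finitely many successors $v'_1,\dots,v'_k$; for each $n$ the forth clause of $\Hb_{n+1}$ supplies some successor $v'_{m(n)}$ with $(u',v'_{m(n)})\in\Hb_n$; by finiteness one fixed successor $v'_{m^\ast}$ is selected for infinitely many $n$, and then the decreasing-chain fact forces $(u',v'_{m^\ast})\in\Hb_n$ for \emph{all} $n$, i.e.\ $u'\lrto_{V_1}v'_{m^\ast}$. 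With this lemma in hand I would construct the required matching path for $\pi_{s_1}=(s_1=u_0,u_1,\dots)$ by recursion: set $v_0=s_2$, and given $u_j\lrto_{V_1}v_j$ together with $(u_j,u_{j+1})\in R_1$, apply the limit lemma to obtain $v_{j+1}$ with $(v_j,v_{j+1})\in R_2$ and $u_{j+1}\lrto_{V_1}v_{j+1}$. The resulting sequence $\pi_{s_2}=(v_0,v_1,\dots)$ is a genuine path of $\Hm_2$ that is pointwise $V_1$-bisimilar to $\pi_{s_1}$, i.e.\ $\pi_{s_1}\lrto_{V_1}\pi_{s_2}$; the converse direction is identical using the back half of the limit lemma. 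The delicate point throughout is exactly the interchange of the ``for all $n$'' quantifier with the existential choice of successors, which is legitimate only because of the image-finiteness of the structures — without it the level-wise matchings need not cohere into a single limit matching.
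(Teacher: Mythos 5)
Your proof is correct, and in several places it takes a genuinely different (and cleaner) route than the paper's. For (i) and (v) the paper runs two separate inductions on $n$ (the one for (i) using both hypotheses simultaneously), whereas you isolate the single monotonicity fact $\Hb_n^{V}\subseteq\Hb_n^{V'}$ for $V\subseteq V'$ and observe that it yields (v) by intersection and (i) from either hypothesis alone --- a real simplification, since the paper's statement of (i) does not need both bisimilarity assumptions. For (iv) the paper explicitly constructs the composed relation $\Hb'=\{(w_1,w_3)\mid (w_1,w_2)\in\Hb,\ (w_2,w_3)\in\Hb''\}$ and verifies the three bisimulation clauses for it; your derivation via (v) plus transitivity of $\lrto_{V_1\cup V_2}$ (Lemma~\ref{lem:equive}) reaches the same conclusion with no new verification, and is non-circular since both ingredients are established independently of (iv). The most valuable divergence is in (iii): the paper asserts the forth/back characterization of the limit relation $\lrto_{V}$ by writing ``for each $(s_1,s_1')\in R_1$ there is $(s_2,s_2')\in R_2$ such that $({\cal K}_1',{\cal K}_2')\in\Hb_{i-1}$ for all $i>0$,'' which silently interchanges the universal quantifier over $i$ with the existential choice of successor; your pigeonhole argument over the finitely many successors, combined with the decreasing chain $\Hb_{n+1}\subseteq\Hb_n$, is exactly what is needed to justify that interchange, and you correctly identify image-finiteness as the hypothesis that makes it work. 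The trade-off is that the paper's composition argument for (iv) generalizes to settings where one wants an explicit witnessing relation, while your route is shorter but leans on the equivalence-relation lemma; for (iii) your version is simply more rigorous than the printed one.
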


In Proposition~\ref{div}, properties $(i)$ to $(iii)$ are the standard properties for $V$-bisimulation.
Property $(iv)$ shows that if a \MPK-structure is $V_1$ and $V_2$-bisimilar with the other two \MPK-structures, respectively, then those two \MPK-structures are $V_1 \cup V_2$-bisimilar. For an example, see Figure~\ref{v1uv2}. This property is crucial for forgetting.
And last, $(v)$ says that if two \MPK-structures are  $V_1$-bisimilar, then they are $V_2$-bisimilar for any $V_2$ with $V_1\subseteq V_2 \subseteq \Ha$.

Intuitively, if two \MPK-structures are $V$-bisimilar, then they satisfy the same formula $\varphi$ that does not contain any atoms in $V$, i.e., $\IR(\varphi, V)$. This idea has been formalized and shown in the following theorem.

\begin{theorem}\label{thm:V-bisimulation:EQ}
  Let $V\subseteq\cal A$, ${\cal K}_i~(i=1,2)$ be two \MPK-structures such that
  ${\cal K}_1\lrto_V{\cal K}_2$ and $\phi$ be a formula with $\IR(\phi,V)$. Then
  ${\cal K}_1\models\phi$ if and only if ${\cal K}_2\models\phi$.
\end{theorem}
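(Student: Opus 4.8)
The plan is to prove the theorem by induction, but the key question is: induction on what? Since $V$-bisimilarity ${\cal K}_1\lrto_V{\cal K}_2$ is defined as membership in $\Hb_n$ for \emph{all} $n\ge 0$, and the formula $\phi$ has bounded size, the natural strategy is a structural induction on the formula $\phi$, while carefully tracking the ``depth'' of bisimilarity that each subformula consumes. Because $\IR(\phi,V)$ means $\phi$ is equivalent to some $\psi$ with $\Var(\psi)\cap V=\emptyset$, I would first reduce to the case where $\phi$ \emph{itself} mentions no atom of $V$ (the equivalence $\phi\equiv\psi$ transfers satisfaction on both structures, so it suffices to prove the claim for $\psi$). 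Thus I may assume $\Var(\phi)\cap V=\emptyset$ throughout.

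First I would set up the induction over the ENF grammar in (\ref{def:CTL:formulas}), proving the stronger statement that for all $n$, if $({\cal K}_1,{\cal K}_2)\in\Hb_n$ and $\phi$ has temporal/modal nesting depth at most $n$, then $({\cal K}_1\models\phi \Leftrightarrow {\cal K}_2\models\phi)$; the theorem then follows since $\lrto_V$ gives membership in $\Hb_n$ for every $n$. The base cases $\bot,\top$ are immediate, and the atomic case $p$ is exactly where $\Hb_0$ is used: since $p\notin V$, the condition $L_1(s_1)-V=L_2(s_2)-V$ yields $p\in L_1(s_1)\Leftrightarrow p\in L_2(s_2)$, i.e.\ $({\cal K}_1,s_1)\models p \Leftrightarrow ({\cal K}_2,s_2)\models p$. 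The Boolean cases $\neg\phi$ and $\phi_1\lor\phi_2$ are routine propagations of the induction hypothesis at the same depth.

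The interesting cases are the three temporal constructs. For $\EXIST\NEXT\phi$, I would use the back-and-forth conditions in the definition of $\Hb_{n+1}$: if $({\cal K}_1,s_1)\models\EXIST\NEXT\phi$ via some successor $s_1'$, then the ``forth'' clause supplies a successor $s_2'$ with $({\cal K}_1',{\cal K}_2')\in\Hb_n$, and the induction hypothesis (applied at depth $n$ to $\phi$) gives $({\cal K}_2',s_2')\models\phi$, hence $({\cal K}_2,s_2)\models\EXIST\NEXT\phi$; symmetry handles the converse via the ``back'' clause. For $\EXIST\GLOBAL\phi$ and $\EXIST(\phi_1\UNTIL\phi_2)$, the argument requires lifting the single-step back-and-forth to whole paths: here I would invoke Proposition~\ref{div}(iii), which guarantees that for each path $\pi_{s_1}$ of $\Hm_1$ there is a $V$-bisimilar path $\pi_{s_2}$ of $\Hm_2$, and vice versa, with the states matched pointwise under $\lrto_V$. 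Given such a matched pair of paths, the induction hypothesis applied state-by-state transfers the eventuality ($\phi_2$ at some index $i$, $\phi_1$ at all earlier indices) from one path to the other.

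The main obstacle I anticipate is the interaction between the \emph{finite-depth} bookkeeping of $\Hb_n$ and the \emph{unbounded} (infinite-path) quantification in $\EXIST\GLOBAL$ and $\EXIST\UNTIL$. A naive structural induction that decrements $n$ by one per temporal operator does not obviously account for the unbounded index $i$ along an infinite path. The clean way around this is precisely to route the path-matching through full $\lrto_V$-bisimilarity of paths rather than through finite $\Hb_n$ approximations: Proposition~\ref{div}(iii) already produces genuinely $\lrto_V$-bisimilar companion paths (bisimilar at \emph{every} state for \emph{all} depths), so along such a path every state pair satisfies the induction hypothesis for the strictly-smaller subformulas $\phi,\phi_1,\phi_2$ at arbitrary index. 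Hence I would reorganize the induction so that the outer recursion is on formula structure and the path-level transfer is handled by (iii) as a black box, which sidesteps the depth-synchronization difficulty entirely and keeps the temporal cases parallel to the $\EXIST\NEXT$ case.
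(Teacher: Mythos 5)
Your final plan coincides with the paper's proof: after reducing to $\Var(\phi)\cap V=\emptyset$, the paper performs exactly the structural induction you describe over the ENF grammar, handling the atomic case via $\Hb_0$ and all three temporal cases ($\EXIST\NEXT$, $\EXIST\GLOBAL$, $\EXIST\UNTIL$) by invoking Proposition~\ref{div}(iii) to obtain a pointwise $\lrto_V$-bisimilar companion path and then applying the induction hypothesis state-by-state. Your closing observation --- that the depth-stratified $\Hb_n$ bookkeeping should be abandoned in favour of routing the temporal cases through full path bisimilarity --- is precisely the route the paper takes, so the proposal is correct and essentially identical in approach.
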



Below, we illustrate this idea over an example.

\begin{example}[cont'd from Example~\ref{ex:2}]\label{ex:3}
Let $\varphi_1= d \wedge \EXIST\FUTURE se \wedge \ALL\GLOBAL(se \rto \ALL\NEXT d)$ and $\varphi_2= d \wedge \ALL \NEXT se$ be two \CTL\ formulae.  They are  $\{sp\}$-irrelevant. One can see that ${\cal K}_1$ and ${\cal K}_2$ in Figure~\ref{v1uv2} satisfy $\varphi_1$, but not $\varphi_2$.

\end{example}
Next, we define the $V$-bisimulation between computation trees (of two initial structures). This construction will become useful when we define the characterizing formula of an initial $\MPK$-structure using the characterizing formula of a computation tree.

Let $V\subseteq\cal A$, ${\cal M}_i~(i=1,2)$ be  initial  structures.
A computation tree $\Tr_n(s_1)$ of ${\cal M}_1$ is $V$-{\em bisimilar}
to a computation tree $\Tr_n(s_2)$ of ${\cal M}_2$, written
$({\cal M}_1,\Tr_n(s_1))\lrto_V({\cal M}_2,\Tr_n(s_2))$ (or simply
$\Tr_n(s_1)\lrto_V\Tr_n(s_2)$), if 
\begin{itemize}
  \item $L_1(s_1)- V=L_2(s_2)- V$,
  \item For every subtree $\Tr_{n-1}(s_i')$ of $\Tr_n(s_i)$,
  $\Tr_n(s_{(i \mod 2)+1})$ has a subtree $\Tr_{n-1}(s_{(i \mod 2)+1}')$ such that
  $\Tr_{n-1}(s_i')\lrto_V\Tr_{n-1}(s_{(i \mod 2)+1}')$.
\end{itemize}
The last condition in the above definition
hold trivially for $n=0$.

\begin{proposition}\label{B_to_T}
  Let $V\subseteq\cal A$ and $({\cal M}_i,s_i)~(i=1,2)$ be two \MPK-structures.
  Then
  \[(s_1,s_2)\in{\cal B}_n\mbox{ iff }
  \Tr_j(s_1)\lrto_V\Tr_j(s_2)\mbox{ for every $0\le j\le n$}.\]
\end{proposition}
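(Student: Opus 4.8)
The plan is to prove both directions of the biconditional by induction on $n$, exploiting the recursive structure shared by the definitions of $\Hb_n$ and of $V$-bisimilarity of computation trees. Observe first that both sides of the claimed equivalence unfold over the same data: $(s_1,s_2)\in\Hb_n$ is defined by a forward/backward matching of $R$-successors down to depth $n$, while $\Tr_j(s_1)\lrto_V\Tr_j(s_2)$ is defined by a matching of depth-$(j-1)$ subtrees, each subtree being rooted at an $R$-successor. The base condition $L_1(s_1)-V=L_2(s_2)-V$ is literally the defining clause of both $\Hb_0$ and of $\Tr_0(s_1)\lrto_V\Tr_0(s_2)$, so the $n=0$ case is immediate in both directions.

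For the inductive step I would fix $n\ge 0$ and assume the equivalence holds for $n$; that is, the induction hypothesis states that for all states $t_1,t_2$, $(t_1,t_2)\in\Hb_n$ iff $\Tr_j(t_1)\lrto_V\Tr_j(t_2)$ for every $0\le j\le n$. I would then prove the equivalence for $n+1$. For the forward direction ($\Rto$), assume $(s_1,s_2)\in\Hb_{n+1}$. By definition this gives $(s_1,s_2)\in\Hb_0$ (handling $j=0$) and, for every $(s_1,s_1')\in R_1$, a matching $(s_2,s_2')\in R_2$ with $(s_1',s_2')\in\Hb_n$ (and symmetrically). By the induction hypothesis, $(s_1',s_2')\in\Hb_n$ yields $\Tr_{j'}(s_1')\lrto_V\Tr_{j'}(s_2')$ for all $j'\le n$; translating the successor-matching into subtree-matching, this is exactly what is needed to conclude $\Tr_j(s_1)\lrto_V\Tr_j(s_2)$ for each $1\le j\le n+1$, since a depth-$j$ tree's subtrees are the depth-$(j-1)$ trees rooted at successors. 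The backward direction ($\Lto$) runs symmetrically: from $\Tr_j(s_1)\lrto_V\Tr_j(s_2)$ holding for all $0\le j\le n+1$, the root condition gives $\Hb_0$, and the subtree-matching at depth $n+1$, combined with the induction hypothesis applied to successors (which satisfy the tree-bisimilarity for all depths up to $n$), recovers the successor-matching defining $\Hb_{n+1}$.

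The main obstacle I anticipate is the bookkeeping around the \emph{quantifier over $j$}: the right-hand side asserts tree-bisimilarity for \emph{all} $j$ in a range simultaneously, whereas the recursive clause of $\Hb_{n+1}$ only refers to $\Hb_n$ at successors. The care is needed to see that a single successor-matching witness $(s_1',s_2')\in\Hb_n$ must, via the induction hypothesis, serve uniformly as the subtree-matching witness for \emph{every} depth $j'\le n$ simultaneously, rather than allowing the matched successor to depend on $j'$. This uniformity is precisely what the induction hypothesis delivers — the single relation $\Hb_n$ captures tree-bisimilarity at all depths at once — so the argument goes through, but the statement must be set up so that the witness successor is chosen once and then shown adequate across all relevant depths. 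A secondary subtlety is the indexing in the tree-bisimulation definition (the $(i\bmod 2)+1$ device encoding the forward/backward symmetry), which I would simply read as the two matching conditions and mirror with the two clauses of $\Hb_{n+1}$.

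With these pieces in place, the induction closes and the equivalence $(s_1,s_2)\in\Hb_n$ iff $\Tr_j(s_1)\lrto_V\Tr_j(s_2)$ for all $0\le j\le n$ follows for every $n$. I would finally remark that this proposition is the bridge that lets the later development define the characterizing formula of an initial $\MPK$-structure through characterizing formulas of its finite computation trees, since it certifies that the state-level relation $\Hb_n$ is faithfully reflected by tree-level bisimilarity.
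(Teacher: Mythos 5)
Your induction is the same strategy the paper uses: unfold the recursive clause of $\Hb_{n+1}$ in parallel with the subtree-matching clause of tree $V$-bisimilarity, with the base case being the identical label condition $L_1(s_1)-V=L_2(s_2)-V$ on both sides; in fact you set up the induction hypothesis (the full biconditional for all state pairs at level $n$) more explicitly than the paper, whose proof just says ``use such method recursively''. The one step you assert without justification is in the backward direction: from $\Tr_{n+1}(s_1)\lrto_V\Tr_{n+1}(s_2)$ the matching hands you, for each successor $s_1'$, a witness $s_2'$ with $\Tr_n(s_1')\lrto_V\Tr_n(s_2')$ \emph{only at depth $n$}, whereas your induction hypothesis needs $\Tr_{j'}(s_1')\lrto_V\Tr_{j'}(s_2')$ for \emph{every} $j'\le n$ for that same witness --- the matchings you are given at lower depths $j\le n$ may pick different witnesses, so they do not supply this. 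The fix is a small downward-closure lemma for tree bisimilarity ($\Tr_{k}(s)\lrto_V\Tr_{k}(t)$ implies $\Tr_{m}(s)\lrto_V\Tr_{m}(t)$ for $m\le k$, proved by an easy separate induction, mirroring the paper's Lemma stating $\Hb_{i+1}\subseteq\Hb_i$); with that stated, your argument closes. The paper's own proof glosses over this point as well, so this is a refinement rather than a divergence, but your write-up leans on the claim explicitly enough that it should be proved rather than parenthesized.
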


Proposition~\ref{B_to_T} says that a state $s_1$ of an initial structure is $V$-bisimilar to a state $s_2$ of another initial structure at a particular depth $n$ if, and only if,  all of the respective sub-trees rooted at $s_1$ and $s_2$ until depth $n$ are $V$-bisimilar.

Moreover, if two states $s$ and $s'$ from the same initial structure are not $V$-bisimilar, then the computation trees rooted at $s$ and $s'$, respectively, are not $V$-bisimilar at some depth  $k\in \mathbb{N}$. This is shown in the following proposition.

\begin{proposition}\label{pro:k}
  Let $V\subseteq \Ha$, $\Hm$ be an initial  structure and $s,s'\in S$
  such that $s\not\lrto_V s'$.
  There exists a least $k$ such that
  $\Tr_k(s)$ and $\Tr_k(s')$ are not $V$-bisimilar.
\end{proposition}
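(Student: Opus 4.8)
The plan is to reduce the claim to the well-ordering principle on $\mathbb{N}$, using Proposition~\ref{B_to_T} as the bridge between non-bisimilarity of states and non-bisimilarity of their computation trees. All of the substance lies in exhibiting a single depth at which the two trees fail to match; the existence of a \emph{least} such depth is then automatic.

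First I would unfold the hypothesis $s\not\lrto_V s'$. By Definition~\ref{def:V-bisimulation}, we have $s\lrto_V s'$ exactly when $(s,s')\in\Hb_n$ holds for every $n\ge 0$. Negating this yields some index $N$ with $(s,s')\notin\Hb_N$. Next I would invoke Proposition~\ref{B_to_T} in contrapositive form: since $(s,s')\in\Hb_N$ iff $\Tr_j(s)\lrto_V\Tr_j(s')$ for every $0\le j\le N$, the failure $(s,s')\notin\Hb_N$ forces at least one $j$ with $0\le j\le N$ for which $\Tr_j(s)\not\lrto_V\Tr_j(s')$. Consequently the set $K=\{k\in\mathbb{N}\mid \Tr_k(s)\not\lrto_V\Tr_k(s')\}$ is nonempty.

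Finally, being a nonempty subset of $\mathbb{N}$, the set $K$ possesses a least element by the well-ordering principle, and that least element is the required $k$. I do not anticipate a genuine obstacle in this argument. The only point demanding care is the handling of quantifiers: when negating Definition~\ref{def:V-bisimulation} one obtains \emph{some} witnessing index $N$, and when reading Proposition~\ref{B_to_T} one must note that a failure at level $N$ guarantees a violating depth $j\le N$ rather than a uniform violation at all depths. Keeping these directions straight is enough to conclude that a single violating depth exists, which is precisely what the passage to the least element requires.
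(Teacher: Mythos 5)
Your proposal is correct and follows essentially the same route as the paper's proof: negate the definition of $\lrto_V$ to obtain a level $N$ with $(s,s')\notin\Hb_N$, transfer this to the computation trees via Proposition~\ref{B_to_T}, and then take the least violating depth. The paper phrases the last step as picking the least $m\le c$ directly rather than appealing explicitly to well-ordering, but the argument is the same.
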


\subsection{Characterization of an Initial \MPK-structure}

In the following, we present characterizing formulas of initial \MPK-structures  over a signature to characterize the $\lrto_V$-class of an initial \MPK-structure.
\footnote{Similar approaches has been taken in the literature e.g., in~\cite{DBLP:conf/birthday/1997ehrenfeucht},  a class (namely, $\equiv_{\overline{k}}$-class) of structures of monadic formulas has been characterized by Hintikka formulae~\cite{hintikka1953distributive}. Another example is Yankov-Fine construction in \cite{yankov1968three}.}

To start with, we give the definition of characterizing formulas of computation trees.
\begin{definition}\label{def:V:char:formula}
Let $V\subseteq \Ha$, $\Hm =(S,R,L,s_0)$ be an initial structure and $s\in S$.
The {\em characterizing formula} of the computation tree $\Tr_n(s)$ on $V$,
written ${\cal F}_V(\Tr_n(s))$, is defined recursively as:
\begin{align*}
   {\cal F}_V(\Tr_0(s)) &=  \bigwedge_{p \in V\cap L(s)}p
     \wedge \bigwedge_{q\in V-L(s)} \neg q,\\
   {\cal F}_V(\Tr_{k+1}(s))& = \bigwedge_{(s,s')\in R}
    \EXIST \NEXT {\cal F}_V(\Tr_k(s'))\\
  \wedge &
    \ALL \NEXT \left( \bigvee_{(s,s')\in R} {\cal F}_V(\Tr_k(s')) \right) \wedge {\cal F}_V(\Tr_0(s))
\end{align*}
for $k\ge 0$.
\end{definition}
The characterizing formula of a computation tree formally exhibits the content of each node in $V$ (i.e., atoms in $V$ that are {\em true}  if they are in the label of this node of the computation tree, and {\em false} otherwise) and the temporal relation between states recursively.
Clearly, ${\cal F}_V(\Tr_0(s))$ expresses the content of node $s$ in terms of $V$, the conjunction with $\EXIST \NEXT$ part guarantees that each direct successor $s'$ of $s$ is captured by a \CTL\ formula until depth $k$, and the $\ALL \NEXT$ part guarantees that for each direct successor $s'$ of $s$ there exists another direct successor $s''$ of $s$ such that $s''$ is $V$-bisimilar to $s'$ until depth $k$.

The following result shows that the $V$-bisimulation between two computation trees implies the semantic equivalence of the corresponding characterizing formulas.

\begin{lemma}\label{lem:Vb:TrFormula:Equ}
Let $V\subseteq \Ha$, and $\Hm, \Hm'$ be two initial structures,
$s\in S$, $s'\in S'$ and $n\ge 0$. If $\Tr_n(s) \lrto_{\overline V} \Tr_n(s')$, then ${\cal F}_V(\Tr_n(s)) \equiv {\cal F}_V(\Tr_n(s'))$.
\end{lemma}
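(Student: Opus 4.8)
My plan is to argue by induction on the depth $n$, using the compositional semantics of \CTL. The key preliminary observation is that the hypothesis and conclusion refer to complementary signatures in a compatible way: the level-$0$ clause of $\overline V$-bisimulation, namely $L(s)-\overline V=L'(s')-\overline V$, is exactly $L(s)\cap V=L'(s')\cap V$, which is precisely the information that ${\cal F}_V(\Tr_0(\cdot))$ records. Hence in the \textbf{base case} $n=0$, the bisimilarity $\Tr_0(s)\lrto_{\overline V}\Tr_0(s')$ gives $V\cap L(s)=V\cap L'(s')$ and therefore also $V-L(s)=V-L'(s')$, so ${\cal F}_V(\Tr_0(s))$ and ${\cal F}_V(\Tr_0(s'))$ are the very same conjunction of literals and the equivalence is immediate.

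For the \textbf{inductive step} $n=k+1$, I would first unfold the definition of tree $\overline V$-bisimulation. Its level-$0$ part again yields ${\cal F}_V(\Tr_0(s))={\cal F}_V(\Tr_0(s'))$, while its ``forth'' and ``back'' clauses say: for every $R$-successor $t$ of $s$ there is an $R'$-successor $t'$ of $s'$ with $\Tr_k(t)\lrto_{\overline V}\Tr_k(t')$, and symmetrically. Applying the induction hypothesis to each matched pair gives ${\cal F}_V(\Tr_k(t))\equiv{\cal F}_V(\Tr_k(t'))$. Writing $\Phi=\{{\cal F}_V(\Tr_k(t))\mid (s,t)\in R\}$ and $\Phi'=\{{\cal F}_V(\Tr_k(t'))\mid (s',t')\in R'\}$, this means every member of $\Phi$ is logically equivalent to some member of $\Phi'$, and vice versa; that is, the two sets of subtree-formulas coincide up to logical equivalence.

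It then remains to push this set-level equivalence through the three conjuncts of ${\cal F}_V(\Tr_{k+1}(\cdot))$. For the disjunctive conjunct I use that $\Phi$ and $\Phi'$ represent the same equivalence classes, so $\bigvee\Phi\equiv\bigvee\Phi'$ (by commutativity, associativity and idempotency of $\vee$ together with substitution of equivalents), and then apply congruence of $\ALL\NEXT$ to obtain $\ALL\NEXT\bigvee\Phi\equiv\ALL\NEXT\bigvee\Phi'$. For the conjunctive $\EXIST\NEXT$-block I argue the same way: $\EXIST\NEXT$ is congruent w.r.t.\ $\equiv$, so matching members of $\Phi$ and $\Phi'$ (again collapsing repetitions by idempotency of $\wedge$) gives $\bigwedge_{\phi\in\Phi}\EXIST\NEXT\phi\equiv\bigwedge_{\psi\in\Phi'}\EXIST\NEXT\psi$. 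Conjoining these with the already-established equality of the level-$0$ parts yields ${\cal F}_V(\Tr_{k+1}(s))\equiv{\cal F}_V(\Tr_{k+1}(s'))$. The main obstacle I anticipate is precisely this last bookkeeping: $R$ and $R'$ may have different numbers of successors and the indexing sets need not biject, so one cannot match formulas one-to-one; the argument must instead work up to logical equivalence classes and rely essentially on the idempotency of $\wedge$ and $\vee$ to absorb repetitions, together with the (routine but necessary) congruence of the \CTL\ connectives $\EXIST\NEXT$, $\ALL\NEXT$, $\wedge$ and $\vee$ with respect to $\equiv$.
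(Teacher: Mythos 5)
Your proof is correct and follows essentially the same route as the paper's: induction on the depth $n$, with the base case read off from the level-$0$ clause of $\overline V$-bisimulation and the inductive step obtained by matching successors via the forth-and-back conditions and applying the induction hypothesis to the matched subtrees. You are in fact more explicit than the paper about the final bookkeeping step (that the two collections of subtree formulas coincide only up to logical equivalence and possibly differ in cardinality, so one must invoke idempotency and congruence of $\wedge$, $\vee$, $\EXIST\NEXT$ and $\ALL\NEXT$ with respect to $\equiv$), which the paper compresses into a single ``by induction hypothesis.''
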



In Lemma~\ref{lem:Vb:TrFormula:Equ}, let $s'=s$. Then, it is easy to see that for any formula $\varphi$ of $V$, if $\varphi$ is a characterizing formula of $\Tr_n(s)$ then $\varphi \equiv {\cal F}_V(\Tr_n(s))$.

The notion of $V$-bisimulation and Proposition~\ref{pro:k} naturally induce a complementary notion, so-called $V$-\emph{distinguishability}, which will turn out to be useful in defining the characterizing formula of an initial \MPK-structure.
In particular, we will say that two states $s$ and $s'$ of $\Hm$ in Proposition~\ref{pro:k} are $V$-{\em distinguishable} if $s \not \lrto_{\overline V} s'$, and write that $\dis_V({\cal M},s,s',k)$, where we assume $k$ to be the  smallest natural number which makes $s$ and $s'$ $V$-distinguishable. Furthermore, we say that an initial  structure ${\cal M}$ is $V$-distinguishable if there are two states $s$ and $s'$ in $\Hm$ that are $V$-distinguishable. Then given an initial structure $\mathcal{M}$ and a set $V$ of atoms, the smallest value of $k$ which ensures $V$-distinguishability  is in question. We shall call such a $k$ as the \emph{characterization number} of $\mathcal{M}$ w.r.t. $V$ and define it formally as
\[ch({\cal M},V)=
\left\{
  \begin{array}{ll}
    \max\{k\mid s,s'\in S \text{ and }\dis_V({\cal M},s,s',k)\},\\
         \ \ \qquad \qquad \qquad \hbox{${\cal M}$ is $V$-distinguishable;} \\
    \min\{k\mid {\cal B}_{k}={\cal B}_{k+1}\}, \ \ \ \quad \qquad \hbox{otherwise.}
  \end{array}
\right.
\]
 since it will be crucial in defining the characterization formula (for a given initial $\MPK$-structure).

Observe that the $ch(\Hm, V)$ always exists for every initial  structure $\Hm$ and $V\subseteq \Ha$: If there are two states $s_1$ and $s_2$ such that $s_1$ and $s_2$ are $V$-distinguishable, then the characterization number exists by definition. In the extreme case, if for all $s, s'$ in $\Hm$, $((\Hm,s), (\Hm, s')) \in \Hb_k$ for all $k \geq 0$, and  $\Hb_k = \Hb_{k+1}$ (since the set of states in $\Hm$ is always finite), then the characterization number is 0.



Intuitively, given a state $s \in S$ of $\Hm$, the characterization number $c$ of $\Hm$ divides the states in $\Hm$ into two classes: The one which contains those states $s'$ until depth $c$ such that $(\Hm,s') \models {\cal F}_V(\Tr_c(s))$, and the other which contains the remaining states. Now, we are finally ready to define the characterizing formula of an  initial \MPK-structure.
\begin{definition}[Characterizing Formula]
Let $V\subseteq\cal A$,
 and ${\cal K}=({\cal M},s_0)$ be an initial \MPK-structure with $c=ch({\cal M},V)$, and for every state $s' \in S$ of $\Hm$, $T(s') = {\cal F}_V(\Tr_c(s'))$.
Then, the {\em characterizing formula} ${\cal F}_V({\cal K})$ of $\cal K$ on $V$ is:
\begin{align*}
  &T(s_0) \text{ } \wedge \\
  & \bigwedge_{s\in S}\ALL \GLOBAL\left(
    T(s) \rto
    \bigwedge_{(s,s')\in R}
        \EXIST \NEXT T(s')
        \wedge
        \ALL \NEXT (\bigvee_{(s,s')\in R}T(s'))
    \right)
\end{align*}
\end{definition}
Here, $T(s_0)$ ensures that the \MPK-structure starts from the initial state, and the remaining part ensures that we go deep enough in the computation tree (i.e., through all possible transitions from every state $s \in S$)  to detect any two  $V$-distinguishable states $s$ and $s'$ (which would then imply  $T(s) \not \equiv T(s')$). As a remark on notation, sometimes  we shall need to express the initial structure and the initial state explicitly, then we will use the rather transparent notation i.e., ${\cal F}_V(\Hm, s_0)$ (instead of ${\cal F}_V({\cal K})$).  

One can observe that $\IR({\cal F}_V(\Hm, s_0), \overline V)$.
Besides, given a set of atomic propositions $V$, any initial \MPK-structure has its own unique characterizing formula on $V$. As we will see later, the characterizing formula will play a crucial role in showing important properties of forgetting, as well as in our main contribution which is  computing the SNC and WSC of a \CTL\ formula under an initial \MPK-structure.


The following example illustrates how one can compute a characterizing formula:
\begin{example}[cont'd from Example~\ref{ex:2}]\label{ex:4}

Reconsider the ${\cal K}_2= (\Hm, s_0)$  in Figure~\ref{fig:K2Tree}, illustrated on the left side   (originally introduced in Figure~\ref{v1uv2}).   The corresponding computation trees are listed on the right side: from left to right, they are rooted at $s_0$ with depth $0$, $1$, $2$ and $3$, respectively. For simplicity, the labels of the nodes in the trees are omitted (See Figure~\ref{v1uv2} for the actual labels). Let $V=\{d\}$ then $\overline{V}=\{s, se\}$.

 We can see that $\Tr_0(s_1) \lrto_{\overline{V}} \Tr_0(s_2)$, since $L(s_1) - \overline{V} = L(s_2) - \overline{V}$. Moreover, $\Tr_1(s_1) \not \lrto_{\overline{V}} \Tr_1(s_2)$, since there is $(s_1, s_2)\in R$ such that for any $(s_2, s') \in R$, it is the case that $L(s_2)- \overline V \neq L(s') - \overline V$ (because there is only one direct successor $s'=s_0$). Hence, we have $s_1$ and $s_2$ which are $V$-distinguishable and  $\dis_{V}(\Hm, s_1, s_2, 1)$. Similarly, we have  $\dis_{ V}(\Hm, s_0, s_1, 0)$, $\dis_{V}(\Hm, s_0, s_2, 0)$ and $\dis_{ V}(\Hm, s_0, s_3', 0)$. Furthermore, we can see that $s_2 \lrto_{\overline V} s_3'$. Therefore, $ch(\Hm, V)=\max\{k\mid s,s'\in S \text{ and } \dis_{V}({\cal M},s,s',k)\} = 1$.
 And  we have the following:
 \begin{align*}
   {\cal F}_V(\Tr_0(s_0)) &= d, \qquad \quad {\cal F}_V(\Tr_0(s_1)) = \neg d, \\
   {\cal F}_V(\Tr_0(s_2)) &= \neg d,  \qquad  {\cal F}_V(\Tr_0(s_3')) = \neg d,\\
   {\cal F}_V(\Tr_1(s_0)) &= \EXIST\NEXT \neg d \wedge \ALL\NEXT \neg d \wedge d \equiv \ALL\NEXT \neg d \wedge d, \\
   {\cal F}_V(\Tr_1(s_1)) &= \EXIST\NEXT \neg d \wedge \EXIST\NEXT \neg d  \wedge \ALL\NEXT (\neg d \vee \neg d) \wedge \neg d \\
   &\equiv \ALL\NEXT \neg d \wedge \neg d, \\
   {\cal F}_V(\Tr_1(s_2)) &= \EXIST\NEXT d  \wedge \ALL\NEXT d \wedge \neg d \equiv \ALL\NEXT d \wedge \neg d,\\
   {\cal F}_V(\Tr_1(s_3')) &\equiv {\cal F}_V(\Tr_1(s_2)),\\
  {\cal F}_V(\Hm, s_0)&\equiv \ALL\NEXT \neg d \wedge d \wedge \\
  & \ALL \GLOBAL(\ALL\NEXT \neg d \wedge d \rto \ALL\NEXT(\ALL\NEXT \neg d \wedge \neg d))\wedge \\
  & \ALL \GLOBAL(\ALL\NEXT \neg d \wedge \neg d \rto \ALL\NEXT(\ALL\NEXT d \wedge \neg d)) \wedge\\
  & \ALL \GLOBAL(\ALL\NEXT d \wedge \neg d \rto \ALL\NEXT(\ALL\NEXT \neg d \wedge d)).
\end{align*}

 \begin{figure}
  \centering
  \includegraphics[width=8.5cm]{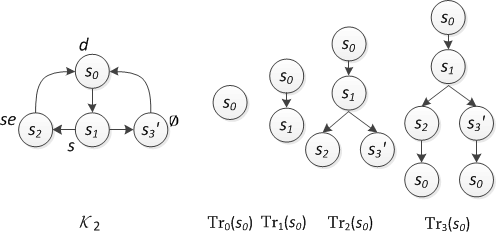}
  \caption{On the left side,  $\mathcal{K}_2$ (aforementioned in Figure~~\ref{v1uv2}), and on the right side, the corresponding computation trees of depth 0, 1, 2 and 3, respectively. Labels of the nodes are omitted for simplicity.}\label{fig:K2Tree}
\end{figure}

\end{example}

The following result shows that there is a correspondence between the semantic equivalence of characterizing formulae and the initial \MPK-structures which are $V$-bisimilar. That is, two initial \MPK-structures are $V$-bisimilar if, and only if  their characterizing formulae are semantically equivalent. This means,  characterizing formula characterizes initial \MPK-structures which are equivalent up to $V$-bisimulation.


\begin{theorem}\label{CF}
Let $V\subseteq \Ha$, $\Hm=(S,R,L,s_0)$ and $\Hm'=(S',R', L',s_0')$ be two initial structures. Then,
\begin{enumerate}[(i)]
 \item $(\Hm',s_0') \models {\cal F}_V({\cal M},s_0)
\text{ iff }
({\cal M},s_0) \lrto_{\overline V} ({\cal M}',s_0')$;

\item $s_0 \lrto_{\overline V} s_0'$ implies  ${\cal F}_V(\Hm, s_0) \equiv {\cal F}_V(\Hm', s_0')$.
\end{enumerate}

\end{theorem}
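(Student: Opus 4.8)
The plan is to prove both parts through one model-theoretic characterization: I would show that the models of ${\cal F}_V(\Hm,s_0)$ are exactly the initial $\MPK$-structures that are $\overline V$-bisimilar to $(\Hm,s_0)$, which is precisely statement~(i), and then read off (ii) as a corollary. The workhorse is an auxiliary \emph{tree characterization} fact which I would establish first: for any $\MPK$-structure $(\Hm_u,u)$, any state $s$ of $\Hm$, and any $n\ge 0$, $(\Hm_u,u)\models{\cal F}_V(\Tr_n(s))$ if and only if $\Tr_n(u)\lrto_{\overline V}\Tr_n(s)$. This is an easy induction on $n$: the base case is the agreement of labels on $V$ encoded by ${\cal F}_V(\Tr_0(s))$, and the inductive step reads the $\EXIST\NEXT$ and $\ALL\NEXT$ conjuncts of ${\cal F}_V(\Tr_{n+1}(s))$ off as exactly the back-and-forth successor conditions in the definition of $\lrto_{\overline V}$ on computation trees. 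Taking $u=s$ yields the self-satisfaction $(\Hm,s)\models{\cal F}_V(\Tr_n(s))$, and the statement subsumes Lemma~\ref{lem:Vb:TrFormula:Equ}.

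For the ($\Rightarrow$) direction of (i) I would argue coinductively. Assuming $(\Hm',s_0')\models{\cal F}_V(\Hm,s_0)$, set $Z=\{(t',s)\mid t' \text{ is reachable from } s_0' \text{ and } (\Hm',t')\models T(s)\}$, where $T(s)={\cal F}_V(\Tr_c(s))$ and $c=ch(\Hm,V)$, and I claim $Z$ is an (unbounded) $\overline V$-bisimulation linking $(\Hm',s_0')$ and $(\Hm,s_0)$. The pair $(s_0',s_0)$ lies in $Z$ by the first conjunct $T(s_0)$. For any $(t',s)\in Z$: the conjunct ${\cal F}_V(\Tr_0(s))$ inside $T(s)$ forces $L'(t')\cap V=L(s)\cap V$, giving the $\Hb_0$ label condition; since $t'$ is reachable from $s_0'$ and satisfies $T(s)$, the conjunct $\ALL\GLOBAL(T(s)\rto\cdots)$ fires at $t'$, so its $\EXIST\NEXT$ part yields, for every $(s,s')\in R$, a successor $t''$ of $t'$ with $(\Hm',t'')\models T(s')$ (the back condition), while its $\ALL\NEXT$ part forces every successor $t''$ to satisfy some $T(s')$ with $(s,s')\in R$ (the forth condition); in both cases $t''$ remains reachable from $s_0'$, so the relevant pair is again in $Z$. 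A routine induction on $n$ then gives $Z\subseteq\Hb_n$ for all $n$, whence $(s_0',s_0)\in\bigcap_n\Hb_n$, i.e. $s_0\lrto_{\overline V}s_0'$.

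For the ($\Leftarrow$) direction I would first establish the single instance $(\Hm,s_0)\models{\cal F}_V(\Hm,s_0)$ and then transport it. The first conjunct holds by self-satisfaction. For an $\ALL\GLOBAL$ conjunct, take any state $t$ of $\Hm$ (all are reachable from $s_0$) with $(\Hm,t)\models T(s)$; the tree characterization gives $(t,s)\in\Hb_c$, and here the choice $c=ch(\Hm,V)$ is essential: since $\Hb_{n+1}\subseteq\Hb_n$ and $c$ is at least the largest depth at which any two states of $\Hm$ become $V$-distinguishable (Proposition~\ref{pro:k}), $(t,s)\in\Hb_c$ already implies full bisimilarity $t\lrto_{\overline V}s$, which supplies the depth-$c$ successor matchings needed to satisfy both $\EXIST\NEXT T(s')$ and $\ALL\NEXT(\bigvee_{(s,s')\in R}T(s'))$ (again via the tree characterization). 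Finally, because ${\cal F}_V(\Hm,s_0)$ mentions only atoms of $V$, i.e. $\IR({\cal F}_V(\Hm,s_0),\overline V)$, and $s_0\lrto_{\overline V}s_0'$, Theorem~\ref{thm:V-bisimulation:EQ} instantiated with $\overline V$ transfers satisfaction from $(\Hm,s_0)$ to $(\Hm',s_0')$, completing (i).

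Part (ii) is then immediate: by (i), $\Mod({\cal F}_V(\Hm,s_0))$ is the $\overline V$-bisimulation class of $(\Hm,s_0)$ and $\Mod({\cal F}_V(\Hm',s_0'))$ that of $(\Hm',s_0')$; since $\lrto_{\overline V}$ is an equivalence relation (Lemma~\ref{lem:equive}) and $(\Hm,s_0)\lrto_{\overline V}(\Hm',s_0')$, these two classes coincide, so the formulas have the same models and ${\cal F}_V(\Hm,s_0)\equiv{\cal F}_V(\Hm',s_0')$. I expect the main obstacle to be the ($\Rightarrow$) direction: bridging a bounded-depth syntactic object, whose memory reaches only depth $c$, to the unbounded semantic notion of full bisimilarity. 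The $\ALL\GLOBAL$ conjunct is exactly what lets the depth-$c$ agreement regenerate at every reachable state and thereby turns $Z$ into a genuine bisimulation; getting the reachability bookkeeping right, together with the off-by-one between $\Hb_c$-relatedness and having successors matched at $\Hb_c$ (resolved on the ($\Leftarrow$) side by the stabilization at $c=ch(\Hm,V)$), is the delicate part.
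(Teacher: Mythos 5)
Your proof is correct and follows essentially the same route as the paper's: both establish self-satisfaction of ${\cal F}_V(\Hm,s_0)$ via the tree-level characterization and the stabilization property of $c=ch(\Hm,V)$, transfer it along $\lrto_{\overline V}$ using Theorem~\ref{thm:V-bisimulation:EQ} for the ($\Leftarrow$) direction, and exploit the $\ALL\GLOBAL$ conjuncts to propagate depth-$c$ agreement to every reachable state for ($\Rightarrow$). The only differences are presentational: you package ($\Rightarrow$) as an explicit coinductive bisimulation relation $Z$ rather than the paper's induction on tree depth along paths, and you obtain (ii) directly from (i) as a coincidence of $\lrto_{\overline V}$-classes of models where the paper appeals to Lemma~\ref{lem:Vb:TrFormula:Equ}; both are sound.
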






It is noteworthy that under our assumption of bounded size (of a \CTL\ formula), say $n$, it will be sufficient to consider the models of formulas within a state space $S$ satisfying $|S|=n8^n$~\cite{DBLP:journals/jcss/EmersonH85}.
Any other model must be bisimilar to some model within the state space, and their characterizing formulas are equivalent due to Theorem~\ref{CF}. Therefore, given a formula of size within the bound, only a finite number of such initial \MPK-structures need to be considered as the candidate models. This fact is expressed in the following lemma.
\begin{lemma}\label{lem:models:formula}
  Let $\varphi$ be a formula. We have
  \begin{equation}
    \varphi\equiv \bigvee_{(\Hm, s_0)\in\Mod(\varphi)}{\cal F}_{\cal A}(\Hm, s_0).
\end{equation}
\end{lemma}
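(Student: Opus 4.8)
The plan is to establish the claimed equivalence by proving the two semantic inclusions separately, namely that every model of $\varphi$ satisfies the right-hand disjunction and conversely. The entire argument rests on specializing the characterization results to $V=\Ha$, in which case $\overline V=\emptyset$, so that $\lrto_{\overline V}$ is the finest $V$-bisimulation $\lrto_\emptyset$ (states must carry \emph{identical} labels) and Theorem~\ref{CF}(i) reads: $(\Hm',s_0')\models{\cal F}_{\cal A}(\Hm,s_0)$ if and only if $(\Hm,s_0)\lrto_\emptyset(\Hm',s_0')$. Before starting the inclusions, I would first justify that the disjunction is a genuine (finite) \CTL\ formula: although the index set $\Mod(\varphi)$ is in principle infinite, the bound $|S|=n8^n$ guarantees that every model of $\varphi$ is $\lrto_\emptyset$-bisimilar to one living inside the bounded state space, and by Theorem~\ref{CF}(ii) bisimilar structures have $\equiv$-equivalent characterizing formulas. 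Hence, up to logical equivalence, only finitely many distinct disjuncts occur, so the right-hand side is well defined.

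For the inclusion from left to right, I would take an arbitrary $(\Hm',s_0')\in\Mod(\varphi)$. Since the disjunction is indexed by all models of $\varphi$, the pair $(\Hm',s_0')$ is itself one such index. By reflexivity of $\lrto_\emptyset$ (Lemma~\ref{lem:equive}) we have $(\Hm',s_0')\lrto_\emptyset(\Hm',s_0')$, so Theorem~\ref{CF}(i) yields $(\Hm',s_0')\models{\cal F}_{\cal A}(\Hm',s_0')$. Thus $(\Hm',s_0')$ satisfies the disjunct indexed by itself, and therefore the whole disjunction.

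For the converse inclusion, suppose $(\Hm',s_0')$ satisfies the right-hand side. Then there is an index $(\Hm,s_0)\in\Mod(\varphi)$ with $(\Hm',s_0')\models{\cal F}_{\cal A}(\Hm,s_0)$, whence Theorem~\ref{CF}(i) gives $(\Hm,s_0)\lrto_\emptyset(\Hm',s_0')$. Because $\overline V=\emptyset$, every formula is vacuously $\emptyset$-irrelevant, so in particular $\IR(\varphi,\emptyset)$ holds; applying Theorem~\ref{thm:V-bisimulation:EQ} with the empty set transfers satisfaction across the bisimulation, and from $(\Hm,s_0)\models\varphi$ we conclude $(\Hm',s_0')\models\varphi$. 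This closes the second inclusion and hence the equivalence. The two inclusions are essentially bookkeeping once Theorem~\ref{CF} and Theorem~\ref{thm:V-bisimulation:EQ} are in hand; the genuinely delicate point — and the only step I expect to require real care — is the well-definedness of the right-hand side, since the naive disjunction ranges over an infinite set of models and it is precisely the bounded-model property together with Theorem~\ref{CF}(ii) that collapses it to a finite \CTL\ formula up to $\equiv$.
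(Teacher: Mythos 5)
Your proof is correct and follows essentially the same route as the paper's: the forward inclusion via each model satisfying its own characterizing formula (reflexivity of $\lrto_\emptyset$ plus Theorem~\ref{CF}(i)), and the converse via Theorem~\ref{CF}(i) to extract the $\emptyset$-bisimulation and Theorem~\ref{thm:V-bisimulation:EQ} to transfer satisfaction of $\varphi$. Your extra remarks on the well-definedness of the (a priori infinite) disjunction are a sound addition that the paper handles in the discussion preceding the lemma rather than in the proof itself.
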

Yet Lemma~\ref{lem:models:formula} has an additional message: Any \CTL\ formula  can be expressed in the form of a disjunction of the characterizing formulae of its models. This fact will be  crucial  in the results we present in next sections.

\subsection{Semantic Properties of Forgetting in \CTL}
In this subsection, we present the notion of forgetting in \CTL\ and investigate its semantic properties. Let us start with the formal definition.
%
\begin{definition}[Forgetting]\label{def:V:forgetting}
  Let $V\subseteq\cal A$ and $\phi$ be a formula.
A formula $\psi$ with $\Var(\psi)\cap V=\emptyset$
is a {\em result of forgetting $V$ from} $\phi$ (denoted as $\CTLforget(\phi,V)$), if
\begin{equation*}
\resizebox{.91\linewidth}{!}{$
\displaystyle
  \Mod(\psi)=\{{\cal K}\mbox{ is initial}\mid \exists {\cal K}'\in\Mod(\phi)\ \text{ \st }\ {\cal K}'\lrto_V{\cal K}\}.
  $}
\end{equation*}
\end{definition}
Realize that Definition~\ref{def:V:forgetting} implies if both $\psi$ and $\psi'$ are results of forgetting $V$ from $\phi$, then
$\Mod(\psi)=\Mod(\psi')$, i.e., $\psi$ and $\psi'$ have the same models. In this sense, the result of  forgetting $V$ from $\phi$ is unique (up to semantic equivalence).
By Lemma~\ref{lem:models:formula}, such a formula always exists, which
is equivalent to
\begin{equation*}
  \bigvee_{{\cal K}\in  \{{\cal K}'\mid \exists {\cal K}''\in\Mod(\phi)\ \text{ and }\ {\cal K}''\lrto_V{\cal K}'\}} {\cal F}_{\overline V}({\cal K}).
\end{equation*}
At this point, it is important to emphasize that, the notion of forgetting  we have defined for \CTL\ respects the classical forgetting defined for propositional logic (PL)~\cite{lin1994forget}. To see this, assume that $\varphi$ is a PL formula and $p\in \Ha$, then $\Forget(\varphi, p)$ is a result of forgetting $p$ from $\varphi$; that is, $\Forget(\varphi, p)\equiv \varphi[p/\bot] \vee \varphi[p/\top]$.
That way, given a set $V\subseteq \Ha$, one can recursively define $\Forget(\varphi, V\cup \{p\}) = \Forget(\Forget(\varphi, p),V)$, where $\Forget(\varphi, \emptyset) = \varphi$. Using this insight, the following result shows that the classical notion of forgetting (for PL ~\cite{lin1994forget}) is a special case of forgetting in CTL.

\begin{theorem}\label{thm:PL:CTL}
Let $\varphi$ be a PL formula and $V\subseteq \Ha$, then
\[
\CTLforget(\varphi, V) \equiv \Forget(\varphi, V).
\]
\end{theorem}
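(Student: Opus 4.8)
The plan is to show that the propositional forgetting result $\Forget(\varphi,V)$, read as a CTL formula, satisfies the defining condition of $\CTLforget(\varphi,V)$ in Definition~\ref{def:V:forgetting}; since the result of CTL-forgetting is unique up to $\equiv$ (as noted right after that definition), the claimed equivalence follows. Two preliminary observations carry most of the weight. First, $\Forget(\varphi,V)$ is built from the substitutions $\varphi[p/\bot]\vee\varphi[p/\top]$, so it is syntactically free of the atoms in $V$; hence the side condition $\Var(\Forget(\varphi,V))\cap V=\emptyset$ holds automatically. Second, a purely propositional formula $\chi$ contains no temporal operator, so a routine structural induction gives that $(\Hm,s_0)\models\chi$ iff $L(s_0)$, viewed as a propositional valuation, satisfies $\chi$; that is, the CTL-truth of a PL formula at an initial $\MPK$-structure depends only on the label of the initial state.

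The bridge between the two notions is the classical semantic characterization of propositional forgetting: for a valuation $u$, $u$ satisfies $\Forget(\varphi,V)$ iff there is a valuation $w$ with $w\models\varphi$ that agrees with $u$ on every atom outside $V$. This is provable by a straightforward induction on $|V|$ from $\Forget(\varphi,p)=\varphi[p/\bot]\vee\varphi[p/\top]$. I then connect this to $V$-bisimulation at the initial states: from the base clause $\Hb_0$, any $\mathcal{K}_1\lrto_V\mathcal{K}_2$ forces $L_1(s_0^1)-V=L_2(s_0^2)-V$, so $V$-bisimilar structures have initial labels that agree off $V$.

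For the inclusion $\{\mathcal{K}\mid\exists\,\mathcal{K}'\in\Mod(\varphi),\ \mathcal{K}'\lrto_V\mathcal{K}\}\subseteq\Mod(\Forget(\varphi,V))$, take such $\mathcal{K}'=(\Hm',s_0')$ and $\mathcal{K}=(\Hm,s_0)$. By the second observation $L'(s_0')$ satisfies $\varphi$ propositionally, and the initial labels agree off $V$, so $w:=L'(s_0')$ witnesses that $L(s_0)$ satisfies $\Forget(\varphi,V)$; hence $\mathcal{K}\models\Forget(\varphi,V)$. (This direction can alternatively be read off from $\varphi\models\Forget(\varphi,V)$ together with Theorem~\ref{thm:V-bisimulation:EQ}, since $\Forget(\varphi,V)$ is $V$-irrelevant and $\mathcal{K}'\lrto_V\mathcal{K}$.)

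The converse inclusion is the step I expect to be the main obstacle, since it requires manufacturing a model of $\varphi$ that is $V$-bisimilar to a given $\mathcal{K}=(\Hm,s_0)\models\Forget(\varphi,V)$. Using the semantic characterization I first obtain $w\models\varphi$ agreeing with $L(s_0)$ off $V$, and then build $\Hm'=(S,R,L',s_0)$ by keeping $S$, $R$, and all labels fixed except resetting $L'(s_0)=w$ (only the $V$-part of the initial label changes, as $w$ and $L(s_0)$ already agree off $V$). One checks that $s_0$ is still initial in $\Hm'$ (initiality depends only on $R$), that $(\Hm',s_0)\models\varphi$ (again by the label-only-at-$s_0$ observation), and that the identity on states is a $V$-bisimulation between $\Hm'$ and $\Hm$: the base clause holds because the two labelings differ only inside $V$, and the forth/back clauses hold because $R$ is shared, so a simple induction on $n$ gives $((\Hm',s),(\Hm,s))\in\Hb_n$ for all states $s$ and all $n$, in particular $\mathcal{K}':=(\Hm',s_0)\lrto_V\mathcal{K}$. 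Thus $\mathcal{K}'\in\Mod(\varphi)$ with $\mathcal{K}'\lrto_V\mathcal{K}$, placing $\mathcal{K}$ in the right-hand set. Combining the two inclusions shows $\Mod(\Forget(\varphi,V))$ coincides with the CTL-forgetting model set, so $\Forget(\varphi,V)$ is a result of forgetting $V$ from $\varphi$ in CTL, and the stated equivalence follows.
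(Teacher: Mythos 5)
Your proposal is correct and follows essentially the same route as the paper: both directions are argued via the model-set characterization, with the forward inclusion using that $V$-bisimilar structures have initial labels agreeing off $V$, and the converse using a construction that changes only the $V$-part of the initial state's label to match a propositional model of $\varphi$ (you relabel $s_0$ in place, the paper renames it to a fresh $s_1$ — a cosmetic difference). The only substantive distinction is that you make explicit two facts the paper leaves implicit, namely the semantic characterization of propositional forgetting and the observation that CTL-truth of a PL formula depends only on the initial label, which makes your write-up more self-contained but not a different proof.
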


In~\cite{Yan:AIJ:2009}, authors give four postulates concerning knowledge forgetting in  \SFive\ modal logic (also called \emph{forgetting postulates}) which can be  considered as desirable properties of such a notion. In the following, we first list these postulates, and then show that our notion of forgetting in \CTL\ satisfies them.

\textbf{Forgetting postulates}~\cite{Yan:AIJ:2009} are:
\begin{itemize}
  \item[] (\W) Weakening: $\varphi \models \varphi'$;
  \item[] (\PP) Positive Persistence:
  for any formula $\eta$, if $\IR(\eta, V)$ and $\varphi \models \eta$ then $\varphi' \models \eta$;
  \item[] (\NgP) Negative Persistence :  for any formula $\eta$,  if $\IR(\eta, V)$ and $\varphi \not \models \eta$ then $\varphi' \not \models \eta$;
  \item[] (\textbf{IR}) Irrelevance: $\IR(\varphi', V)$
\end{itemize}
where $V\subseteq\cal A$,
$\varphi$ is a formula and $\varphi'$ is a result of
forgetting $V$ from $\varphi$.
%
Intuitively, the postulate (\W) says, forgetting weakens the original formula; the postulates  (\PP) and (\NgP)
say that  forgetting results have no effect on formulas that are irrelevant to forgotten atoms; the postulate (\textbf{\IR}) states that
forgetting result is irrelevant to forgotten atoms.
It is noteworthy that they are not all orthogonal  e.g., (\NgP) is a consequence of (\W) and (\PP). Nonetheless, we prefer to list them all, in order to outline the basic intuition behind them.

\begin{theorem}[Representation Theorem]\label{thm:close}
Let $\varphi$ and $\varphi'$ be \CTL\ formulas and $V \subseteq \Ha$.
The following statements are equivalent:
\begin{enumerate}[(i)]
  \item $\varphi' \equiv \CTLforget(\varphi, V)$,
  \item $\varphi'\equiv \{\phi \mid\varphi \models \phi \text{ and } \IR(\phi, V)\}$,
  \item Postulates (\W), (\PP), (\NgP) and (\textbf{IR}) hold if $\varphi,   \varphi'$ and $V$ are as in (i) and (ii).
\end{enumerate}
\end{theorem}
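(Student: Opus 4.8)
The plan is to prove the three statements equivalent by establishing the cycle $(i) \Rightarrow (ii) \Rightarrow (iii) \Rightarrow (i)$. Throughout I write $\psi = \CTLforget(\varphi, V)$ and $\Gamma = \{\phi \mid \varphi \models \phi \text{ and } \IR(\phi, V)\}$, so that statement $(ii)$ reads $\varphi' \equiv \bigwedge\Gamma$. I will rely on three facts already available: that $\lrto_V$ is reflexive and symmetric (Lemma~\ref{lem:equive}); that $V$-bisimilar structures agree on every $V$-irrelevant formula (Theorem~\ref{thm:V-bisimulation:EQ}); and that a structure satisfies the characterizing formula ${\cal F}_{\overline V}({\cal K})$ of ${\cal K}$ exactly when it is $V$-bisimilar to ${\cal K}$ (Theorem~\ref{CF}$(i)$ instantiated with $\overline V$), together with $\IR({\cal F}_{\overline V}({\cal K}), V)$. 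I will also use that $\psi$ itself is $V$-irrelevant and entailed by $\varphi$, both immediate from the disjunction-of-characterizing-formulas representation of $\psi$ and from reflexivity of $\lrto_V$.

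For $(i) \Rightarrow (ii)$ I would show $\psi \equiv \bigwedge\Gamma$ by double entailment. For $\psi \models \bigwedge\Gamma$, take ${\cal K} \models \psi$; by Definition~\ref{def:V:forgetting} there is ${\cal K}' \models \varphi$ with ${\cal K}' \lrto_V {\cal K}$, and for each $\phi \in \Gamma$ we have ${\cal K}' \models \phi$ (since $\varphi \models \phi$) and hence ${\cal K} \models \phi$ by Theorem~\ref{thm:V-bisimulation:EQ}. Conversely, since $\varphi \models \psi$ and $\IR(\psi, V)$, the formula $\psi$ is itself a member of $\Gamma$, so $\bigwedge\Gamma \models \psi$ holds trivially. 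The step $(ii) \Rightarrow (iii)$ is then a routine verification: $(\W)$ holds because every conjunct of $\bigwedge\Gamma$ is entailed by $\varphi$; $(\textbf{IR})$ holds because a conjunction of $V$-irrelevant formulas is $V$-irrelevant (using that boundedness of size leaves only finitely many conjuncts up to equivalence); $(\PP)$ holds because any $\eta$ with $\IR(\eta, V)$ and $\varphi \models \eta$ lies in $\Gamma$ and is thus entailed by $\bigwedge\Gamma$; and $(\NgP)$ follows immediately from $(\W)$ by transitivity of $\models$.

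The crux is $(iii) \Rightarrow (i)$, where I must recover $\Mod(\varphi') = \Mod(\psi)$ from the postulates alone. One inclusion is easy: if ${\cal K} \models \psi$ then some ${\cal K}' \models \varphi$ is $V$-bisimilar to ${\cal K}$, and $(\W)$ gives ${\cal K}' \models \varphi'$ while $(\textbf{IR})$ plus Theorem~\ref{thm:V-bisimulation:EQ} transport this to ${\cal K} \models \varphi'$. The reverse inclusion is the main obstacle, and it is here that the characterizing-formula machinery is essential: given ${\cal K} \models \varphi'$, suppose toward a contradiction that no model of $\varphi$ is $V$-bisimilar to ${\cal K}$. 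Then by Theorem~\ref{CF}$(i)$ no model of $\varphi$ satisfies ${\cal F}_{\overline V}({\cal K})$, so $\varphi \models \neg {\cal F}_{\overline V}({\cal K})$; since $\neg {\cal F}_{\overline V}({\cal K})$ is $V$-irrelevant, $(\PP)$ yields $\varphi' \models \neg {\cal F}_{\overline V}({\cal K})$, whence ${\cal K} \not\models {\cal F}_{\overline V}({\cal K})$. This contradicts reflexivity of $\lrto_V$ together with Theorem~\ref{CF}$(i)$, which force ${\cal K} \models {\cal F}_{\overline V}({\cal K})$. Hence some model of $\varphi$ is $V$-bisimilar to ${\cal K}$, giving ${\cal K} \models \psi$. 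The delicate point to watch throughout is keeping the signature complementation straight: the characterizing formula ${\cal F}_{\overline V}$ records the atoms of $\overline V$ yet is $V$-irrelevant, so Theorem~\ref{CF}$(i)$ must be applied with $\overline V$ in place of $V$ for the induced bisimulation to be exactly $\lrto_V$.
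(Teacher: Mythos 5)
Your proposal is correct, and the machinery it leans on (Theorem~\ref{thm:V-bisimulation:EQ}, Theorem~\ref{CF}, and the representation of $\CTLforget(\varphi,V)$ as a disjunction of characterizing formulas) is exactly what the paper uses; the difference lies in how the cycle is closed. The paper proves $(i)\Leftrightarrow(ii)$ by a three-way chain of model-set inclusions with $\bigvee_{{\cal K}\in\Mod(\varphi)}{\cal F}_{\Ha-V}({\cal K})$ as an explicit intermediate, then gets $(iii)\Rightarrow(ii)$ almost for free: (\PP) gives $\varphi'\models\{\phi\mid\varphi\models\phi,\ \IR(\phi,V)\}$, and (\W) together with (\textbf{IR}) gives the converse entailment because $\varphi'$ itself then belongs to that theory. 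You instead prove $(i)\Rightarrow(ii)$ more economically by observing that $\CTLforget(\varphi,V)$ is itself a member of $\{\phi\mid\varphi\models\phi,\ \IR(\phi,V)\}$, and you close the loop with a direct $(iii)\Rightarrow(i)$ via a contradiction argument: if no model of $\varphi$ were $V$-bisimilar to a given model ${\cal K}$ of $\varphi'$, then $\varphi\models\neg{\cal F}_{\overline V}({\cal K})$, so (\PP) would force ${\cal K}\not\models{\cal F}_{\overline V}({\cal K})$, contradicting reflexivity of $\lrto_V$ and Theorem~\ref{CF}$(i)$. This leg is sound (and your care with the $V$ versus $\overline V$ instantiation of Theorem~\ref{CF} is exactly the point that needs watching), but it re-derives semantically what the paper obtains in two lines by routing $(iii)$ through $(ii)$; what your version buys is a self-contained proof of $(iii)\Rightarrow(i)$ that does not presuppose the earlier equivalence, at the cost of invoking the characterizing-formula apparatus a second time.
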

\begin{proof}
$(i) \LRto (ii)$. To prove this, it is enough to show that:
\begin{align*}
 & \Mod(\CTLforget(\varphi, V)) = \Mod(\{\phi | \varphi \models \phi, \IR(\phi, V)\})\\
 & = \Mod(\bigvee_{\Hm, s_0\in \Mod(\varphi)} {\cal F}_{\Ha- V}(\Hm, s_0)).
\end{align*}
First, suppose that $(\Hm', s_0')$ is a model of $\CTLforget(\varphi, V)$. Then there exists an initial \MPK-structure $(\Hm, s_0)$ which is a model of $\varphi$ and $(\Hm, s_0) \lrto_V (\Hm', s_0')$. By Theorem~\ref{thm:V-bisimulation:EQ}, we have $(\Hm', s_0') \models \phi$ for all $\phi$ such that $\varphi\models \phi$ and $\IR(\phi, V)$. Thus, $(\Hm', s_0')$ is a model of the theory $\{\phi \mid \varphi \models \phi, \IR(\phi, V)\}$.

Second, suppose that $(\Hm', s_0')$ is a model of $\{\phi \mid \varphi \models \phi$, $\IR(\phi, V)\}$. Thus, $(\Hm', s_0')$ $\models$ $\bigvee_{(\Hm, s_0)\in \Mod(\varphi)} {\cal F}_{\Ha- V}(\Hm, s_0)$ since  $\bigvee_{(\Hm, s_0)\in \Mod(\varphi)} {\cal F}_{\Ha- V}(\Hm, s_0)$ is irrelevant to $V$ and $\varphi \models$ $\bigvee_{(\Hm, s_0)\in \Mod(\varphi)} {\cal F}_{\Ha- V}(\Hm, s_0)$ by Lemma~\ref{lem:models:formula}.

Last, suppose that $(\Hm', s_0')$ is a model of $\bigvee_{\Hm, s_0\in \Mod(\varphi)} {\cal F}_{\Ha- V}(\Hm, s_0)$. Then there exists $(\Hm, s_0) \in \Mod(\varphi)$ such that $(\Hm', s_0') \models {\cal F}_{\Ha- V}(\Hm, s_0)$. Hence, $(\Hm, s_0)$ $\lrto_V$ $(\Hm', s_0')$ by Theorem~\ref{CF}. Thus $(\Hm', s_0')$ is also a model of $\CTLforget(\varphi,V)$.

$(ii)\Rto (iii)$. This is rather straightforward, so we put it into the supplementary material.

$(iii)\Rto (ii)$. 
By Positive Persistence, we have $\varphi' \models \{\phi \mid \varphi \models \phi, \IR(\phi, V)\}$.
The  $\{\phi \mid \varphi \models \phi, \IR(\phi, V)\} \models \varphi'$ can be obtained from (\W) and (\textbf{IR}).
Thus, $\varphi'$ is equivalent to $\{\phi \mid \varphi \models \phi, \IR(\phi, V)\}$.
\end{proof}
%


It is noteworthy  that the postulate \textbf{IR} is of crucial importance for computing SNC and WSC. Consider the $\psi=\varphi\wedge (q\lrto \alpha)$. If $\IR(\varphi \wedge \alpha, \{q\})$, then the result of forgetting $q$ from $\psi$ is $\varphi$. This property is described in the following lemma, and as we will later see in Section~\ref{ns_conditions}, it will become important (in reducing the SNC (WSC) of any \CTL\ formula to the one of a proposition).

\begin{lemma}\label{lem:KF:eq}
	Let $\varphi$ and $\alpha$ be two \CTL\ formulae and $q\in
		\overline{\Var(\varphi) \cup \Var(\alpha)}$. Then
	$\CTLforget(\varphi \wedge (q\lrto\alpha), q)\equiv \varphi$.
\end{lemma}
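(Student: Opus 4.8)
The plan is to establish the equivalence directly at the level of models, using Definition~\ref{def:V:forgetting} together with Theorem~\ref{thm:V-bisimulation:EQ}. Writing $\psi=\varphi\wedge(q\lrto\alpha)$, I would show $\Mod(\CTLforget(\psi,q))=\Mod(\varphi)$. The hypothesis $q\in\overline{\Var(\varphi)\cup\Var(\alpha)}$ gives in particular $\IR(\varphi,\{q\})$ and $\IR(\alpha,\{q\})$, and these two irrelevance facts are what does most of the work.

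For the inclusion $\Mod(\CTLforget(\psi,q))\subseteq\Mod(\varphi)$ the argument is short. Take $\mathcal{K}\in\Mod(\CTLforget(\psi,q))$; by Definition~\ref{def:V:forgetting} there is $\mathcal{K}'\in\Mod(\psi)$ with $\mathcal{K}'\lrto_{\{q\}}\mathcal{K}$. Since $\mathcal{K}'\models\psi\models\varphi$ and $\IR(\varphi,\{q\})$, Theorem~\ref{thm:V-bisimulation:EQ} transfers satisfaction across the bisimulation and yields $\mathcal{K}\models\varphi$, as required.

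The reverse inclusion is the substantive step and proceeds by a model-surgery construction. Given $\mathcal{K}=(\mathcal{M},s_0)\in\Mod(\varphi)$ with $\mathcal{M}=(S,R,L,s_0)$, I would build $\mathcal{M}'=(S,R,L',s_0)$ that keeps $S$, $R$ and $s_0$ unchanged (so seriality and initiality of $s_0$ are preserved, as the paths are identical) and only re-labels $q$: put $q\in L'(s)$ iff $(\mathcal{M},s)\models\alpha$, and leave every other atom of each $L(s)$ untouched. Because $L(s)-\{q\}=L'(s)-\{q\}$ for every $s$ and $R$ is unchanged, the identity relation witnesses $((\mathcal{M},s),(\mathcal{M}',s))\in\mathcal{B}_n^{\{q\}}$ for all $n$, hence $\mathcal{K}\lrto_{\{q\}}\mathcal{K}'$ where $\mathcal{K}'=(\mathcal{M}',s_0)$. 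Since $\alpha$ and $\varphi$ are $\{q\}$-irrelevant, Theorem~\ref{thm:V-bisimulation:EQ} shows their truth values at every state coincide in $\mathcal{M}$ and $\mathcal{M}'$; in particular $\mathcal{K}'\models\varphi$, and at $s_0$ we get $(\mathcal{M}',s_0)\models q$ iff $q\in L'(s_0)$ iff $(\mathcal{M},s_0)\models\alpha$ iff $(\mathcal{M}',s_0)\models\alpha$, so $\mathcal{K}'\models q\lrto\alpha$. Thus $\mathcal{K}'\models\psi$, and $\mathcal{K}'\lrto_{\{q\}}\mathcal{K}$ exhibits $\mathcal{K}$ as a model of $\CTLforget(\psi,q)$.

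I expect the main obstacle to be the careful bookkeeping in the surgery step: one must check that re-labelling $q$ genuinely leaves $\mathcal{M}$ and $\mathcal{M}'$ in the same $\{q\}$-bisimulation class (immediate from $L-\{q\}=L'-\{q\}$ and identical $R$) and, crucially, that the truth of $\alpha$ at each state is \emph{invariant} under the re-labelling, so that the defining clause for $L'$ is not circular. This invariance is exactly what $\IR(\alpha,\{q\})$ plus Theorem~\ref{thm:V-bisimulation:EQ} delivers. As an alternative route, the whole lemma can instead be derived from the Representation Theorem~\ref{thm:close} by verifying that $\varphi$ satisfies (\W), (\PP), (\NgP) and (\textbf{IR}) relative to $\psi$: here (\W) is $\psi\models\varphi$ and (\textbf{IR}) is immediate from $q\notin\Var(\varphi)$, while (\PP) reduces to precisely the surgery argument above and (\NgP) follows since it is implied by (\W) and (\PP).
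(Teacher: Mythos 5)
Your proposal is correct and follows essentially the same route as the paper's own proof: the same appeal to Definition~\ref{def:V:forgetting} and Theorem~\ref{thm:V-bisimulation:EQ} for the inclusion $\Mod(\CTLforget(\psi,q))\subseteq\Mod(\varphi)$, and the same relabelling construction ($q\in L'(s)$ iff $(\mathcal{M},s)\models\alpha$, with $S$, $R$, $s_0$ unchanged) for the converse. Your explicit check that the truth of $\alpha$ is invariant under the relabelling --- so that the clause defining $L'$ is not circular --- is a point the paper's proof passes over with ``it is clear that'', and is a welcome addition.
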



In what follows, we list other interesting properties of the forgetting operator. According to the definition of forgetting, the set of atoms to be forgotten should be forgotten as a whole.
The following property guarantees that this can be achieved modularly by applying forgetting one by one to the atoms to be forgotten.
\begin{proposition}[Modularity]\label{disTF}
Given a formula $\varphi \in \CTL$, $V$ a set of atoms and $p$ an atom such that $p \notin V$. Then,
\[
\CTLforget(\varphi, \{p\} \cup V) \equiv \CTLforget(\CTLforget(\varphi, p), V).
\]
\end{proposition}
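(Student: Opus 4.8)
The plan is to reduce the equivalence to an equality of model sets and then to a splitting property of $V$-bisimulations. By Definition~\ref{def:V:forgetting} the result of forgetting is unique up to having the same models, so it suffices to show that the two sides have identical models. Writing $W=\{p\}\cup V$ and $\psi=\CTLforget(\varphi,p)$, and unfolding Definition~\ref{def:V:forgetting} once on the left and twice on the right, one gets
\[
\Mod(\CTLforget(\varphi,W))=\{{\cal K}\mid \exists\,{\cal K}'\in\Mod(\varphi):\ {\cal K}'\lrto_{W}{\cal K}\},
\]
whereas ${\cal K}\in\Mod(\CTLforget(\psi,V))$ holds exactly when there exist ${\cal K}'\in\Mod(\varphi)$ and an intermediate ${\cal K}_1$ with ${\cal K}'\lrto_{\{p\}}{\cal K}_1$ and ${\cal K}_1\lrto_{V}{\cal K}$. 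Hence the proposition is equivalent to the claim that ${\cal K}'\lrto_{W}{\cal K}$ iff ${\cal K}'\lrto_{\{p\}}{\cal K}_1\lrto_{V}{\cal K}$ for some ${\cal K}_1$: a $W$-bisimulation can be \emph{split} through $\{p\}$ and then $V$.

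One inclusion is immediate: from ${\cal K}'\lrto_{\{p\}}{\cal K}_1$ and ${\cal K}_1\lrto_{V}{\cal K}$, Proposition~\ref{div}$(iv)$ with $V_1=\{p\}$ and $V_2=V$ yields ${\cal K}'\lrto_{\{p\}\cup V}{\cal K}$, i.e.\ ${\cal K}'\lrto_{W}{\cal K}$. This is exactly where the hypothesis $p\notin V$ is used, to guarantee $\{p\}\cup V=W$. Thus $\Mod(\CTLforget(\psi,V))\subseteq\Mod(\CTLforget(\varphi,W))$ with no extra work.

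The substantial direction is the converse: given a $W$-bisimulation ${\cal K}'\lrto_{W}{\cal K}$ with ${\cal K}'=({\cal M}',s_0')$, ${\cal K}=({\cal M},s_0)$, I must actually \emph{construct} the intermediate ${\cal K}_1$. The idea is a synchronous product along the bisimulation. Take as states the pairs $(a,b)$ with $a\lrto_{W}b$ reachable from $(s_0',s_0)$; put a transition $(a,b)\to(a',b')$ whenever $(a,a')\in R'$, $(b,b')\in R$ and $a'\lrto_{W}b'$; and — the key point — equip each pair with the \emph{hybrid} label $L_1(a,b)=(L'(a)\cap V)\cup(L(b)- V)$, taking the $V$-part from ${\cal K}'$ and everything else from ${\cal K}$. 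With ${\cal K}_1$ this product pointed at $(s_0',s_0)$, the two projections $\pi_1(a,b)=a$ and $\pi_2(a,b)=b$ are the desired bisimulations. A short set computation using $p\notin V$ gives $L_1(a,b)-\{p\}=L'(a)-\{p\}$ and $L_1(a,b)-V=L(b)-V$ (both reducing to the $\Hb_0$-agreement $L'(a)-W=L(b)-W$ of the given bisimulation), which are the base conditions for ${\cal K}_1\lrto_{\{p\}}{\cal K}'$ and ${\cal K}_1\lrto_{V}{\cal K}$; the forth/back conditions for both projections are inherited directly from the forth/back conditions of ${\cal K}'\lrto_{W}{\cal K}$ together with totality of $R'$ and $R$. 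Since $\lrto_V$ is symmetric (Lemma~\ref{lem:equive}), this delivers ${\cal K}'\lrto_{\{p\}}{\cal K}_1\lrto_{V}{\cal K}$, hence $\Mod(\CTLforget(\varphi,W))\subseteq\Mod(\CTLforget(\psi,V))$.

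I expect the main obstacle to be not the bisimulation bookkeeping (which is routine once the hybrid labeling is fixed) but the verification that this product is a \emph{legitimate initial} $\MPK$-structure. Totality of the product transition relation follows from totality of $R'$, $R$ via the forth condition, but the requirement that $(s_0',s_0)$ be an initial state — a single path visiting every product state — is delicate for an arbitrary product. I would secure it using Proposition~\ref{div}$(iii)$, which lets me match a covering path of ${\cal M}'$ to a $W$-bisimilar path of ${\cal M}$ and hence to a covering path of the reachable product; care is needed because pruning the product to a single path must not destroy the back/forth conditions of the projections, so the argument should keep all reachable bisimilar pairs while exhibiting a covering path among them. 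This well-definedness step, rather than the algebra of forgetting, is where the real work lies.
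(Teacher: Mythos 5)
Your proposal is correct and follows essentially the same route as the paper: the easy inclusion via Proposition~\ref{div}(iv), and the substantial one via an intermediate structure built from $\{p\}\cup V$-bisimilar state pairs carrying hybrid labels ($V$-part from the model of $\varphi$, the rest --- including $p$ --- from the target model), exactly as in the paper's construction of $(\Hm_2,s_2)$. The well-definedness/initiality issue you flag at the end is a genuine subtlety, but it is equally present and left unaddressed in the paper's own proof, so it does not distinguish your argument from theirs.
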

The next property follows from the above proposition.

\begin{corollary}[Commutativity]\label{disTFV}
Let $\varphi$ be a formula and $V_i\subseteq{\cal A}~(i=1,2)$. Then:
\[
\CTLforget(\varphi, V_1 \cup V_2) \equiv \CTLforget(\CTLforget(\varphi, V_1), V_2).
\]
\end{corollary}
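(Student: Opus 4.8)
The plan is to prove the statement by induction on the cardinality of $V_1$, using Modularity (Proposition~\ref{disTF}) to strip away one atom at a time, with the inductive hypothesis stated uniformly over \emph{all} formulas and \emph{all} second sets $V_2$. As a convenient auxiliary fact I would first record that forgetting inert atoms changes nothing: if $\IR(\psi, W)$ then $\CTLforget(\psi, W) \equiv \psi$. This follows directly from the semantic definition of forgetting, since $\lrto_W$ is reflexive (Lemma~\ref{lem:equive}), giving $\Mod(\psi) \subseteq \Mod(\CTLforget(\psi,W))$, while Theorem~\ref{thm:V-bisimulation:EQ} together with $\IR(\psi,W)$ gives the reverse inclusion. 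In particular $\CTLforget(\varphi, \emptyset) \equiv \varphi$, which settles the base case $V_1 = \emptyset$ (both sides reduce to $\CTLforget(\varphi, V_2)$).

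For the inductive step I would write $V_1 = \{p\} \cup V_1'$ with $p \notin V_1'$ and distinguish whether $p$ also lies in $V_2$. When $p \notin V_2$, the argument is a clean telescoping: using Modularity to peel $p$, $\CTLforget(\varphi, V_1 \cup V_2) \equiv \CTLforget(\CTLforget(\varphi, p), V_1' \cup V_2)$ (legitimate since $p \notin V_1' \cup V_2$); the inductive hypothesis applied to $\CTLforget(\varphi, p)$ rewrites this as $\CTLforget(\CTLforget(\CTLforget(\varphi,p), V_1'), V_2)$; and a final inner application of Modularity folds $\CTLforget(\CTLforget(\varphi,p), V_1')$ back into $\CTLforget(\varphi, V_1)$, yielding the right-hand side.

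The delicate case is $p \in V_2$, where $V_1 \cup V_2 = V_1' \cup V_2$ collapses and the bookkeeping no longer telescopes symmetrically. Here I would set $\chi = \CTLforget(\varphi, V_1')$. Using the inductive hypothesis with second set $\{p\}$ to reorder, $\CTLforget(\varphi, V_1) \equiv \CTLforget(\chi, p)$, so the right-hand side becomes $\CTLforget(\CTLforget(\chi, p), V_2)$, while the left-hand side reduces (again by the inductive hypothesis) to $\CTLforget(\chi, V_2)$. Since $p \in V_2$ and $\CTLforget(\chi, p)$ is $\{p\}$-irrelevant by the Irrelevance postulate (Theorem~\ref{thm:close}), both expressions can be shown equal to $\CTLforget(\CTLforget(\chi,p), V_2 \setminus \{p\})$: the left-hand one by a single application of Modularity, the right-hand one by the auxiliary no-op fact (forgetting the already-inert $p$ from $\CTLforget(\chi,p)$ does nothing) combined with Modularity.

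I expect the main obstacle to be exactly this overlap case, and the subtle point is to resolve it \emph{without} invoking commutativity itself — one must route the ``drop the redundant $p$ from the forget-set'' step through Modularity plus the elementary no-op lemma rather than through the very identity being proved. Monotonicity of bisimulation in the forgotten set (Proposition~\ref{div}(v)) and the Irrelevance postulate are what make the no-op lemma and this rerouting sound; everything else is routine substitution of equivalences.
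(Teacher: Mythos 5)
Your proof is correct and follows the route the paper intends --- deriving Commutativity by iterating Modularity --- though the paper itself offers no proof beyond the remark that the corollary ``follows from the above proposition.'' Your careful treatment of the overlap case $p \in V_2$ (via the no-op lemma for $W$-irrelevant formulas and the Irrelevance postulate) fills a genuine gap that a naive iteration of Modularity would leave open, since Proposition~\ref{disTF} only applies when the peeled atom lies outside the remaining forget-set.
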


The following properties show that the forgetting respects the basic semantic notions of logic. They hold in both classical propositional logic and modal logic \SFive~\cite{Yan:AIJ:2009}. Below we show that they are also satisfied in our notion forgetting in \CTL.
\begin{proposition}\label{pro:ctl:forget:1}
Let $\varphi$, $\varphi_i$, $\psi_i$ ($i=1,2$) be formulas in \CTL\ and $V\subseteq \Ha$. We have
\begin{enumerate}[(i)]
  \item $\CTLforget(\varphi, V)$ is satisfiable iff $\varphi$ is;
  \item If $\varphi_1 \equiv \varphi_2$, then $\CTLforget(\varphi_1, V) \equiv \CTLforget(\varphi_2, V)$;
  \item If $\varphi_1 \models \varphi_2$, then $\CTLforget(\varphi_1, V) \models \CTLforget(\varphi_2, V)$;
  \item $\CTLforget(\psi_1 \vee \psi_2, V) \equiv \CTLforget(\psi_1, V) \vee \CTLforget(\psi_2, V)$;
  \item $\CTLforget(\psi_1 \wedge \psi_2, V) \models \CTLforget(\psi_1, V) \wedge \CTLforget(\psi_2, V)$;
\end{enumerate}
\end{proposition}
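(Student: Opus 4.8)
The plan is to reduce each of the five claims directly to the model-theoretic definition of $\CTLforget$ (Definition~\ref{def:V:forgetting}), so that every statement about formulas becomes a set-theoretic statement about their model classes, and then to exploit that $\lrto_V$ is an equivalence relation (Lemma~\ref{lem:equive}). Throughout I would write $[X]_V = \{\mathcal{K}\text{ initial}\mid \exists\, \mathcal{K}'\in X \text{ s.t. } \mathcal{K}'\lrto_V\mathcal{K}\}$ for the $V$-bisimulation closure of a class $X$ of initial $\MPK$-structures, so that $\Mod(\CTLforget(\varphi,V)) = [\Mod(\varphi)]_V$ holds by definition. All five properties then become elementary facts about the operator $[\cdot]_V$.

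For (i), I would use reflexivity of $\lrto_V$: since every $\mathcal{K}'\in\Mod(\varphi)$ satisfies $\mathcal{K}'\lrto_V\mathcal{K}'$, we get $\Mod(\varphi)\subseteq[\Mod(\varphi)]_V$, while conversely $[\Mod(\varphi)]_V=\emptyset$ whenever $\Mod(\varphi)=\emptyset$; hence $[\Mod(\varphi)]_V\neq\emptyset$ iff $\Mod(\varphi)\neq\emptyset$. Claims (ii) and (iii) follow from monotonicity of $[\cdot]_V$: if $\Mod(\varphi_1)=\Mod(\varphi_2)$ (resp. $\Mod(\varphi_1)\subseteq\Mod(\varphi_2)$), then $[\Mod(\varphi_1)]_V=[\Mod(\varphi_2)]_V$ (resp. $\subseteq$), which is exactly the stated equivalence (resp. entailment). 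For (iv), I would observe that the closure operator commutes with unions, $[\Mod(\psi_1)\cup\Mod(\psi_2)]_V=[\Mod(\psi_1)]_V\cup[\Mod(\psi_2)]_V$, because the existential witness in the definition ranges over the union; combined with $\Mod(\psi_1\vee\psi_2)=\Mod(\psi_1)\cup\Mod(\psi_2)$ this yields the claimed equivalence.

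The only claim requiring care is (v), and this is where I expect the main conceptual point to lie. The $\models$ direction is immediate: $\Mod(\psi_1\wedge\psi_2)=\Mod(\psi_1)\cap\Mod(\psi_2)$, and any structure $V$-bisimilar to a model $\mathcal{K}'$ of $\psi_1\wedge\psi_2$ is witnessed by the same $\mathcal{K}'\in\Mod(\psi_1)$ and $\mathcal{K}'\in\Mod(\psi_2)$, so $[\Mod(\psi_1)\cap\Mod(\psi_2)]_V\subseteq[\Mod(\psi_1)]_V\cap[\Mod(\psi_2)]_V$. The reason the converse fails, and thus the reason only entailment is claimed, is that $[\cdot]_V$ does not commute with intersection: a structure in $[\Mod(\psi_1)]_V\cap[\Mod(\psi_2)]_V$ may be $V$-bisimilar to a model $\mathcal{K}_1$ of $\psi_1$ and, via a \emph{different} witness, to a model $\mathcal{K}_2$ of $\psi_2$, without any single structure being simultaneously a model of both. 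Concretely, since $\lrto_V$ ignores the atoms in $V$, the witnesses $\mathcal{K}_1$ and $\mathcal{K}_2$ may disagree precisely on $V$ in a way invisible to the bisimulation but decisive for membership in $\Mod(\psi_1\wedge\psi_2)$. I would state this asymmetry explicitly and supply a small two-structure counterexample in the spirit of Example~\ref{ex:3} to confirm that the inclusion is in general strict.
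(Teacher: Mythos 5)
Your proposal is correct and follows essentially the same route as the paper's proof: every claim is reduced to the model-theoretic definition of $\CTLforget$ as the $\lrto_V$-closure of the model class, with reflexivity of $\lrto_V$ giving (i), monotonicity giving (ii)--(iii), commutation with unions giving (iv), and the one-sided inclusion for intersections giving (v). Your packaging via the operator $[\cdot]_V$ is a cleaner presentation than the paper's element-by-element chase (which also detours through Theorem~\ref{thm:V-bisimulation:EQ} in (iv) where your direct use of the definition suffices), and your explanation of why the converse of (v) fails is a correct bonus, but the underlying argument is the same.
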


The next property shows that forgetting a set $V\subseteq\cal A$ from a formula with path quantifiers is equivalent to quantify the result of forgetting $V$ from the formula with the same path quantifiers.
\begin{proposition}[Homogeneity]\label{pro:ctl:forget:2}
  Let $V\subseteq\cal A$ and $\phi \in \CTL$,
  \begin{enumerate}[(i)]
    \item $\CTLforget(\ALL\NEXT\phi,V)\equiv \ALL\NEXT \CTLforget(\phi,V)$.
    \item $\CTLforget(\EXIST\NEXT\phi,V)\equiv\EXIST\NEXT \CTLforget(\phi,V)$.
    \item $\CTLforget(\ALL \FUTURE\phi,V)\equiv \ALL \FUTURE \CTLforget(\phi,V)$.
    \item $\CTLforget(\EXIST\FUTURE\phi,V)\equiv\EXIST\FUTURE \CTLforget(\phi,V)$.
  \end{enumerate}
\end{proposition}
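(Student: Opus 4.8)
The plan is to prove each of the four equivalences by establishing the two entailments separately, exploiting an asymmetry: writing $Q$ for any of $\ALL\NEXT,\EXIST\NEXT,\ALL\FUTURE,\EXIST\FUTURE$, the direction $\CTLforget(Q\phi,V)\models Q\CTLforget(\phi,V)$ follows uniformly and cheaply from the Representation Theorem, whereas the converse requires a genuine model construction. For the cheap direction I would argue as follows. First, $\CTLforget(\phi,V)$ is $V$-irrelevant (postulate (\textbf{IR}) in Theorem~\ref{thm:close}), and since prefixing a path-quantified temporal operator introduces no new atoms, $Q\CTLforget(\phi,V)$ is $V$-irrelevant as well. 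Second, by Weakening (\W) we have $\phi\models\CTLforget(\phi,V)$, and since each $Q$ is monotone with respect to entailment (a routine consequence of the satisfaction clauses, using that $\models$ is defined at every pointed $\MPK$-structure), we obtain $Q\phi\models Q\CTLforget(\phi,V)$. Thus $Q\CTLforget(\phi,V)$ is a $V$-irrelevant consequence of $Q\phi$, and by the characterization $\CTLforget(Q\phi,V)\equiv\{\eta\mid Q\phi\models\eta,\ \IR(\eta,V)\}$ of Theorem~\ref{thm:close}(ii), the forgetting result entails every such consequence. This single argument settles one half of all four items simultaneously.

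For the converse $Q\CTLforget(\phi,V)\models\CTLforget(Q\phi,V)$ I would work semantically. Take an initial $\MPK$-structure $({\cal M},s_0)\models Q\CTLforget(\phi,V)$. The subformula $\CTLforget(\phi,V)$ is then satisfied at the witnessing successor (for the $\NEXT$ cases) or at a reachable state along a witnessing path (for the $\FUTURE$ cases); call that evaluation point $({\cal M},s^\ast)$. Using Lemma~\ref{lem:models:formula} to expand $\CTLforget(\phi,V)$ as a disjunction of characterizing formulas, together with Theorem~\ref{CF}(i) and transitivity of $\lrto_V$ (Lemma~\ref{lem:equive}), I can extract a model $({\cal N},t_0)\models\phi$ with $({\cal N},t_0)\lrto_V({\cal M},s^\ast)$. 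Here I rely on the fact that satisfaction, $V$-bisimulation and Theorem~\ref{thm:V-bisimulation:EQ} are all defined at arbitrary pointed $\MPK$-structures, not only initial ones. I would then build a model of $Q\phi$ that is $V$-bisimilar to $({\cal M},s_0)$ by grafting a copy of $({\cal N},t_0)$ in place of the reachable part at $s^\ast$: since $s^\ast\lrto_V t_0$, this surgery leaves the $V$-bisimulation type of $s_0$ unchanged, so the spliced structure $({\cal M}',s_0')$ satisfies $({\cal M}',s_0')\lrto_V({\cal M},s_0)$ while its witnessing successor (resp.\ reachable state) now satisfies $\phi$; hence $({\cal M}',s_0')\models Q\phi$, and the definition of forgetting yields $({\cal M},s_0)\models\CTLforget(Q\phi,V)$. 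The universal cases $\ALL\NEXT$ and $\ALL\FUTURE$ are treated the same way, except the grafting is carried out simultaneously at every successor (resp.\ along every path).

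Two points need care, and the second is the main obstacle. The first is the mismatch between the \emph{single} witness a temporal operator demands and the definition of $\lrto_V$ as the intersection $\bigcap_n\Hb_n$: matching a transition out of $s_0$ provides, for each depth $n$, only a possibly different successor that is $\Hb_n$-related. I would consolidate these into one fully $V$-bisimilar successor using finiteness of the state space (pigeonhole) together with a short induction showing that the sequence is decreasing, $\Hb_{n+1}\subseteq\Hb_n$, so that agreement at arbitrarily large depths upgrades to agreement at all depths; Proposition~\ref{div}(iii) supplies the analogous single-path matching needed in the $\FUTURE$ cases. The second and harder point is the grafting itself: one must splice a $\phi$-model into ${\cal M}$ at a reachable, generally non-initial, state while (a) keeping the root a genuine \emph{initial} state in the paper's sense (a single path covering all states) and (b) not perturbing the $V$-bisimulation type of the root. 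For $\ALL\FUTURE$ in particular, where the construction must be performed consistently along every path at once, checking that the result is still a legitimate initial structure is the crux, and I expect it to demand the most delicate bookkeeping.
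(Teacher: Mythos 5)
Your proposal is sound, and for the non-trivial inclusion it coincides with the paper's strategy: the paper also argues semantically, introducing generated sub-structures, extracting at each witnessing successor (or reachable state) of a model of $Q\,\CTLforget(\phi,V)$ a $V$-bisimilar structure satisfying $\phi$, and grafting it under a fresh root that remains $V$-bisimilar to the original (writing $Q$ for any of $\ALL\NEXT$, $\EXIST\NEXT$, $\ALL\FUTURE$, $\EXIST\FUTURE$, as you do). The concerns you raise about performing the graft simultaneously at all successors, about preserving initiality of the root, and about upgrading the depth-indexed relations $\Hb_n$ to full $V$-bisimilarity are all real, and the paper handles them only implicitly: the stabilisation $\Hb_{k}=\Hb_{k+1}$ is Lemma~\ref{lem:B:relations} of the appendix, while the graft itself is dismissed with ``it is easy to construct'' and the $\FUTURE$ cases with ``can be proved as (i) and (ii)''. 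Where you genuinely depart from the paper is the converse inclusion $\CTLforget(Q\phi,V)\models Q\,\CTLforget(\phi,V)$: the paper establishes it by a second, symmetric model construction, whereas you derive it in one stroke from Theorem~\ref{thm:close} together with (\W), (\textbf{\IR}) and monotonicity of $Q$. Your route buys a shorter proof and makes the asymmetry between the two directions explicit; its only extra debt is that monotonicity and congruence of $Q$ with respect to $\models$ and $\equiv$ must be justified at non-initial evaluation points (the paper's $\Mod$ ranges over initial \MPK-structures only), but the paper's own proof incurs exactly the same debt when it asserts that the successor sub-structures $(\Hm_1,s_1)$ are models of $\phi$ and of $\CTLforget(\phi,V)$, so this is not a new gap. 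The one place where both your plan and the paper remain genuinely sketchy is the simultaneous grafting for $\ALL\FUTURE$ along the frontier of first witnesses on every path, which you correctly single out as the crux.
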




\subsection{Complexity Results}
In the following, we analyze the computational complexity of the various tasks regarding the forgetting in the fragment $\CTL_{\ALL\FUTURE}$.
The fragment $\CTL_{\ALL\FUTURE}$ of \CTL, in which each formula contains only $\ALL \FUTURE$ temporal connective, corresponds to specifications that are expected to hold in all branches eventually. Such properties are of special interest in concurrent systems e.g., mutual exclusion and  waiting events~\cite{Baier:PMC:2008}. Our first result shows that the problem of model checking for forgetting of $V$ from $\varphi$ is $\textsc{NP}$-complete, if $\varphi \in \CTL_{\ALL\FUTURE}$.
\begin{proposition}[Model Checking]
	\label{modelChecking}
Given an initial \MPK-structure $(\Hm,s_0)$, $V\subseteq{\cal A}$ and $\varphi \in \CTL_{\ALL\FUTURE}$,  deciding $(\Hm,s_0) \models^? \CTLforget(\varphi, V)$ is \textsc{NP}-complete.
\end{proposition}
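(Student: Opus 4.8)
The plan is to establish the two directions of \NP-completeness separately: membership in \NP\ by a guess-and-verify argument, and \NP-hardness by a reduction from propositional \textsc{sat}. The common starting point is to unfold Definition~\ref{def:V:forgetting}: by definition, $(\Hm,s_0)\models\CTLforget(\varphi,V)$ holds if and only if there is an initial \MPK-structure $(\Hm',s_0')$ with $(\Hm',s_0')\models\varphi$ and $(\Hm',s_0')\lrto_V(\Hm,s_0)$. Thus the decision problem is really an existential question over $V$-bisimilar witnesses, and the whole proof reduces to controlling the size of such a witness.

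For membership, I would first argue that whenever such a witness exists, one can be found whose number of states is polynomial in $|\Hm|$ and $|\varphi|$. The construction keeps the $\overline V$-behaviour of $\Hm$ intact (which guarantees $V$-bisimilarity, via the path-correspondence of Proposition~\ref{div}(iii)) while being allowed to duplicate states of $\Hm$ and to relabel them over $V$; a bounded number of copies, one per existential-path obligation generated by the subformulas of $\varphi$, should suffice for the fragment $\CTL_{\ALL\FUTURE}$. Given such a polynomial bound, the \NP\ procedure guesses $(\Hm',s_0')$ together with its labelling and then verifies in deterministic polynomial time that (i) $(\Hm',s_0')\lrto_V(\Hm,s_0)$, by computing the $V$-bisimulation relation as the fixed point of the $\Hb_n$ iteration (which stabilises after at most $|S'|\cdot|S|$ refinements), and (ii) $(\Hm',s_0')\models\varphi$, using the standard polynomial-time \CTL\ model-checking algorithm~\cite{DBLP:journals/toplas/ClarkeES86}. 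This places the problem in \NP.

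For hardness, I would reduce from \textsc{sat}. Given a propositional formula $\psi$ over variables $x_1,\dots,x_k\in\Ha$, set $V=\{x_1,\dots,x_k\}$, let $\varphi:=\psi$ read as a \CTL\ formula (a purely Boolean formula contains no temporal connective, so it lies in $\CTL_{\ALL\FUTURE}$ vacuously), and let $\Hm$ be the single reflexive state $s_0$ with $L(s_0)=\emptyset$. Since $\psi$ is evaluated at the initial state and mentions only atoms of $V$, a $V$-bisimilar witness may assign $s_0'$ any subset of $V$ while the $\overline V$-labels are irrelevant; hence $(\Hm,s_0)\models\CTLforget(\psi,V)$ holds if and only if some truth assignment over $x_1,\dots,x_k$ satisfies $\psi$, i.e.\ iff $\psi$ is satisfiable. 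The reduction is clearly polynomial, giving \NP-hardness and, together with membership, \NP-completeness.

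The hard part will be the polynomial witness bound underlying membership. The subtlety is that forgetting is genuinely stronger than relabelling $\Hm$ in place: $V$-bisimulation permits structural duplication, and $\varphi$ can force it. For instance, with $a\in V$ and $\Hm$ a single reflexive state, $\EXIST\GLOBAL a\wedge\EXIST\GLOBAL\neg a$ is satisfied by a $V$-bisimilar structure (split $s_0$ into an $a$-branch and a $\neg a$-branch) yet by no relabelling of the single-state $\Hm$. The crux is therefore to show that for $\CTL_{\ALL\FUTURE}$ the number of branchings $\varphi$ can demand, together with the length of each (a lasso of length $\le|S|$), stays polynomial, so that a witness of polynomial size always exists; the entire \NP\ upper bound rests on this combinatorial estimate, whereas the naive appeal to the generic model bound $n8^n$ of~\cite{DBLP:journals/jcss/EmersonH85} would only yield an exponential witness.
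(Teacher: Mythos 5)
Your proposal follows essentially the same route as the paper: membership by guessing a polynomially bounded $V$-bisimilar witness and verifying both $\CTL$ model checking and the $\Hb_n$ fixed point in polynomial time, and hardness via the propositional special case (the paper cites the known $\NP$-hardness of model checking propositional forgetting through Theorem~\ref{thm:PL:CTL}, whereas you spell out the underlying \textsc{sat} reduction explicitly, which amounts to the same thing). The one substantive remark is that the polynomial bound on the witness $(\Hm',s_0')$ --- which you rightly flag as the crux and only sketch --- is likewise merely asserted in the paper's proof, so your version is no less complete and is arguably more candid about where the real work lies.
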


In the following, we investigate some complexity results concerning forgetting and the logical entailment in this fragment.


\begin{theorem}[Entailment]
	\label{thm:comF}
Let $\varphi$ and $\psi$ be two $\CTL_{\ALL \FUTURE}$ formulas and $V$ be a set of atoms. Then,
\begin{enumerate}[(i)]
  \item deciding  $\CTLforget(\varphi, V ) \models^? \psi$ is co-$\textsc{NP}$-complete,
  \item deciding  $\psi \models^? \CTLforget(\varphi, V)$ is $\Pi_2^{\textsc{P}}$-complete,
  \item deciding $\CTLforget(\varphi, V) \models^? \CTLforget(\psi, V)$ is $\Pi_2^{\textsc{P}}$-complete.
\end{enumerate}
\end{theorem}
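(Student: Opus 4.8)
The plan is to treat each of the three items as a membership claim together with a matching hardness claim, and to reduce every membership argument to two ingredients: (a) the characterisation behind Proposition~\ref{modelChecking}, which tells us that for a \emph{fixed} initial \MPK-structure $\mathcal{K}$ the test $\mathcal{K}\models\CTLforget(\varphi,V)$ is in \NP\ (guess a $V$-variant witness and model-check $\varphi$, which for $\CTL_{\ALL\FUTURE}$ is polynomial); and (b) a polynomial bound on the size of the structures over which we must quantify, so that the quantifiers $\exists\mathcal{K}$ and $\forall\mathcal{K}$ stay inside the polynomial hierarchy. With these in hand, each item becomes a matter of counting quantifier alternations.

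For (i) I would argue co-\NP\ membership through the complement. We have $\CTLforget(\varphi,V)\not\models\psi$ iff there is a (small) structure $\mathcal{K}$ with $\mathcal{K}\models\CTLforget(\varphi,V)$ and $\mathcal{K}\not\models\psi$. Guessing $\mathcal{K}$, the first conjunct is an \NP\ test by Proposition~\ref{modelChecking} and the second is polynomial \CTL\ model checking, so the whole complement is $\exists\mathcal{K}\,(\NP\wedge\Pol)=\NP$, placing the problem in co-\NP; concretely one guesses a structure with a single $\overline V$-labelling shared by two $V$-labellings, one witnessing $\varphi$ and one witnessing $\neg\psi$. For hardness I take $V=\emptyset$, so that $\CTLforget(\varphi,\emptyset)\equiv\varphi$, and the instance becomes ordinary propositional entailment $\varphi\models\psi$ between purely propositional formulas, which lie in $\CTL_{\ALL\FUTURE}$ and whose entailment problem is already co-\NP-complete.

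For (ii) and (iii) I would read off the $\PIP{2}$ shape directly. Statement (ii), $\forall\mathcal{K}\,[\mathcal{K}\models\psi\Rightarrow\mathcal{K}\models\CTLforget(\varphi,V)]$, is a universal quantifier over a predicate whose guard is polynomial and whose consequent is \NP\ by Proposition~\ref{modelChecking}, hence in $\PIP{2}$; statement (iii), $\forall\mathcal{K}\,[\mathcal{K}\models\CTLforget(\varphi,V)\Rightarrow\mathcal{K}\models\CTLforget(\psi,V)]$, has complement $\exists\mathcal{K}\,(\mathcal{K}\models\CTLforget(\varphi,V)\wedge\mathcal{K}\not\models\CTLforget(\psi,V))$, i.e. an \NP\ guess of a $V$-labelling realising $\varphi$ followed by a universal (co-\NP) refutation of all $V$-labellings for $\psi$; the leading existentials collapse into one, leaving an $\exists\forall$ pattern in $\SigmaP{2}$, so (iii) is in $\PIP{2}$. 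For $\PIP{2}$-hardness of (ii) I reduce $\forall X\exists Y\,\beta(X,Y)$ with $\beta$ propositional by setting $\varphi=\beta$, $\psi=\top$, $V=Y$: by Theorem~\ref{thm:PL:CTL}, $\CTLforget(\beta,Y)$ is the propositional forgetting of $Y$, which is valid over $X$ exactly when $\exists Y\,\beta$ is a tautology, so $\top\models\CTLforget(\beta,Y)$ holds iff the quantified formula is true. Finally, (iii) inherits hardness from (ii) via the equivalence $\psi\models\CTLforget(\varphi,V)$ iff $\CTLforget(\psi,V)\models\CTLforget(\varphi,V)$, where $\Leftarrow$ is Weakening ($\psi\models\CTLforget(\psi,V)$) and $\Rightarrow$ uses Proposition~\ref{pro:ctl:forget:1}(iii) together with the idempotence $\CTLforget(\CTLforget(\varphi,V),V)\equiv\CTLforget(\varphi,V)$, which holds since $\CTLforget(\varphi,V)$ is already $V$-irrelevant and $V$-irrelevant formulas are fixed points of forgetting.

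The main obstacle is ingredient (b): the generic small-model bound for \CTL\ is $|S|=n8^n$~\cite{DBLP:journals/jcss/EmersonH85}, which is exponential and would push every estimate a level up in the hierarchy. I therefore expect the delicate step to be establishing a \emph{polynomial} small-model property tailored to $\CTL_{\ALL\FUTURE}$ (and to the mixed satisfiability instances that arise, since negating an $\ALL\FUTURE$ formula introduces an $\EXIST\GLOBAL$), so that the $\exists\mathcal{K}$ and $\forall\mathcal{K}$ range only over structures of polynomial size and, for the universal cases, a counterexample is guaranteed to appear among such structures whenever one exists; without this control the clean $\exists/\forall$ counts above do not by themselves deliver the stated classes.
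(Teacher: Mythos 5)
Your treatment of items (ii) and (iii) coincides with the paper's in both membership and hardness: the paper likewise works with the complements, guesses a polynomial-size initial \MPK-structure, invokes Proposition~\ref{modelChecking} for the inner $\NP$ test, obtains $\Pi_2^{\textsc{P}}$-hardness of (ii) from the validity problem for propositional forgetting (Proposition~24 of Lang et al., which is precisely your $\forall X\exists Y\,\beta$ encoding via Theorem~\ref{thm:PL:CTL} with $\psi=\top$), and transfers hardness to (iii) by observing that one side of the entailment can be replaced by its forgetting because forgetting results are $V$-irrelevant --- the paper derives this replacement from (\PP) and (\W) rather than from monotonicity plus idempotence, but the content is identical. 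Where you genuinely diverge is item (i). The paper proves membership by first rewriting $\CTLforget(\varphi,V)\models\psi$ as ``$\varphi\models\psi$ and $\IR(\psi,V)$'' via the Representation Theorem and then placing each conjunct in $\coNP$; your direct complement-guessing argument (guess a countermodel of $\psi$ together with a $V$-bisimilar model of $\varphi$) avoids that rewriting, and this is arguably to your advantage, since the left-to-right direction of the paper's equivalence is dubious: for $\varphi=p$, $\psi=p\vee q$, $V=\{q\}$ one has $\CTLforget(\varphi,V)\equiv p\models\psi$ while $\IR(\psi,V)$ fails. Your hardness for (i) via $V=\emptyset$ also differs from the paper's choice $V=\Var(\varphi)$, which trivialises the left-hand side to $\top$ and reduces validity of $\psi$; both reductions are sound. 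Finally, regarding your ingredient (b): the paper does not actually establish a polynomial small-model property for $\CTL_{\ALL\FUTURE}$ (or for the mixed instances arising from negation) --- it simply asserts that polynomial-size witnessing structures may be guessed, implicitly leaning on the $\NP$-completeness of $\CTL_{\ALL\FUTURE}$ satisfiability cited from the literature. So the step you flag as delicate is exactly the one the paper leaves implicit; you are not missing anything the paper supplies, and your explicit acknowledgement of it is a point in your favour.
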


\begin{proof}
(i) and (iii) is moved to supplementary material due to space restrictions.
(ii) Membership: We consider the complement of the
 problem. Guess an initial \MPK-structure $(\Hm, s_0)$ which has  polynomial size in the size of $\psi$ satisfying $\psi$ and check $(\Hm,s_0)$ $\not \models \CTLforget($ $\varphi$, $V)$. By Proposition~\ref{modelChecking}, it is in $\Sigma_2^{\textsc{P}}$. So the original problem is in $\Pi_2^{\textsc{P}}$. Hardness: Let $\psi \equiv \top$. Then the problem is reduced to decide the validity of  $\CTLforget(\varphi, V )$. Since propositional forgetting is a special case by Theorem~\ref{thm:PL:CTL}, the hardness  follows from the proof of Proposition 24 in~\cite{DBLP:journals/jair/LangLM03}.
\end{proof}

The following results are implications of Theorem~\ref{thm:comF}.
\begin{corollary}
Let $\varphi$ and $\psi$ be two $\CTL_{\ALL \FUTURE}$ formulas and $V$ a set of atoms. Then
\begin{enumerate}[(i)]
  \item deciding $\psi \equiv^?\CTLforget(\varphi, V)$ is $\Pi_2^{\textsc{P}}$-complete,
  \item deciding $\CTLforget(\varphi, V) \equiv^? \varphi$ is co-$\textsc{NP}$-complete,
  \item deciding $\CTLforget(\varphi, V) \equiv^? \CTLforget(\psi, V)$ is $\Pi_2^{\textsc{P}}$-complete.
\end{enumerate}
\end{corollary}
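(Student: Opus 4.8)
The plan is to reduce each equivalence test to a pair of one-directional entailment tests and then read off the complexity from Theorem~\ref{thm:comF}, using closure properties of $\Pi_2^{\textsc{P}}$ and co-$\textsc{NP}$ for the upper bounds and suitable specializations for the matching lower bounds. The underlying observation is that, for any two formulas $\alpha,\beta$, we have $\alpha\equiv\beta$ iff $\alpha\models\beta$ and $\beta\models\alpha$, so each part becomes a Boolean combination of instances already classified in Theorem~\ref{thm:comF}.

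For the upper bounds I would argue as follows. In (i), $\psi\equiv\CTLforget(\varphi,V)$ holds iff $\psi\models\CTLforget(\varphi,V)$ (in $\Pi_2^{\textsc{P}}$ by Theorem~\ref{thm:comF}(ii)) and $\CTLforget(\varphi,V)\models\psi$ (in co-$\textsc{NP}$ by Theorem~\ref{thm:comF}(i)); since co-$\textsc{NP}\subseteq\Pi_2^{\textsc{P}}$ and $\Pi_2^{\textsc{P}}$ is closed under intersection, the conjunction lies in $\Pi_2^{\textsc{P}}$. In (ii) I would exploit the Weakening postulate (\W): because $\varphi\models\CTLforget(\varphi,V)$ always holds, the equivalence $\CTLforget(\varphi,V)\equiv\varphi$ collapses to the single test $\CTLforget(\varphi,V)\models\varphi$, which is in co-$\textsc{NP}$ by Theorem~\ref{thm:comF}(i). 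In (iii), both $\CTLforget(\varphi,V)\models\CTLforget(\psi,V)$ and its converse are in $\Pi_2^{\textsc{P}}$ by Theorem~\ref{thm:comF}(iii), and closure under intersection again places the problem in $\Pi_2^{\textsc{P}}$.

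For the lower bounds, for (i) and (iii) I would reduce from the validity problem for $\CTLforget(\varphi,V)$, which is $\Pi_2^{\textsc{P}}$-hard since it is exactly the $\psi\equiv\top$ instance used in the hardness proof of Theorem~\ref{thm:comF}(ii). Taking $\psi=\top$ turns (i) into deciding $\top\equiv\CTLforget(\varphi,V)$, i.e.\ validity of $\CTLforget(\varphi,V)$. For (iii), note that $\CTLforget(\top,V)\equiv\top$ (every initial structure is $V$-bisimilar to itself and models $\top$), so $\psi=\top$ again reduces validity to the equivalence test $\CTLforget(\varphi,V)\equiv\CTLforget(\top,V)$. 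For (ii) I would instead reduce from propositional validity (co-$\textsc{NP}$-complete): given a propositional $\chi$ over atoms disjoint from a fresh $p$, set $\varphi=p\rto\chi$ and $V=\{p\}$. By Theorem~\ref{thm:PL:CTL}, $\CTLforget(\varphi,\{p\})\equiv\Forget(p\rto\chi,\{p\})\equiv(\bot\rto\chi)\vee(\top\rto\chi)\equiv\top$, hence $\CTLforget(\varphi,V)\equiv\varphi$ iff $\varphi\equiv\top$ iff $\chi$ is valid; since $\varphi$ is temporal-operator-free it is a (degenerate) $\CTL_{\ALL\FUTURE}$ formula, so the reduction is legitimate.

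I expect the upper bounds to be routine, following mechanically from the closure properties of the polynomial hierarchy. The delicate points, and hence the main obstacles, are the lower bounds: first, confirming that the reduction formulas genuinely stay inside $\CTL_{\ALL\FUTURE}$ (here $\top$ and the propositional $\chi$ are the degenerate members that make this work); and second, for (ii), making essential use of (\W) to establish that the equivalence is really a one-sided condition, which is what renders the co-$\textsc{NP}$ bound tight and justifies the validity-based reduction. The small semantic checks $\CTLforget(\top,V)\equiv\top$ and $\Forget(p\rto\chi,\{p\})\equiv\top$ must be verified carefully, as the whole hardness argument hinges on them.
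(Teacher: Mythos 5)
Your proposal is correct and follows exactly the route the paper intends: the paper offers no explicit proof of this corollary, merely asserting that it is an implication of Theorem~\ref{thm:comF}, and your decomposition of each equivalence into two entailment instances (using closure of $\Pi_2^{\textsc{P}}$ under intersection, and the postulate (\W) to collapse part (ii) to the single test $\CTLforget(\varphi,V)\models\varphi$) is the natural unfolding of that claim. You in fact go further than the paper by supplying the matching lower bounds explicitly --- the specializations $\psi=\top$ for (i) and (iii) via $\CTLforget(\top,V)\equiv\top$, and the reduction from propositional validity via $\varphi=p\rto\chi$, $V=\{p\}$ for (ii) --- all of which check out.
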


\section{Necessary and  Sufficient Conditions}
\label{ns_conditions}
In this section, we present the final key notions of our work:  namely, the \emph{strongest necessary condition} (SNC) and the \emph{weakest sufficient condition}  (WSC)  of a given \CTL\ specification.  As aforementioned in the introduction, these notions (introduced by E. Dijkstra in \cite{DBLP:journals/cacm/Dijkstra75}) correspond to the \emph{most general consequence} and the \emph{most specific abduction} of a specification, respectively, and have been central to a wide variety of tasks and studies (see Related Work). Our contribution, in particular, will be on computing SNC and WSC via forgetting under a given initial \MPK-structure and a set $V$ of atoms.  Let us give the formal definition. 
\begin{definition}[sufficient and necessary condition]\label{def:NC:SC}
Let $\phi$ be a formula (or an initial \MPK-structure), $\psi$ be a formula, $V \subseteq \Var(\phi)$, $q\in\Var(\phi)- V$
and $\Var(\psi)\subseteq V$.
\begin{itemize}
  \item $\psi$  is a {\em necessary condition} (NC in short) of $q$ on $V$ under $\phi$
    if $\phi \models q \rto \psi$.
  \item $\psi$  is a {\em sufficient condition} (SC in short) of $q$ on $V$ under $\phi$
    if $\phi \models \psi\rto q$.
  \item $\psi$  is a {\em strongest necessary condition} (SNC in short)
  of $q$ on $V$ under $\phi$
    if it is a NC of $q$ on $V$ under $\phi$, and $\phi\models\psi\rto\psi'$
    for any NC $\psi'$ of $q$ on $V$ under $\phi$.

    \item $\psi$  is a {\em weakest sufficient condition} (WSC in short)
  of $q$ on $V$ under $\phi$
    if it is a SC of $q$ on $V$ under $\phi$, and $\phi\models\psi'\rto\psi$
    for any SC $\psi'$ of $q$ on $V$ under $\phi$.
\end{itemize}
\end{definition}
Note that if both $\psi$ and $\psi'$ are SNC (WSC) of $q$ on $V$ under $\phi$, then
$\Mod(\psi)=\Mod(\psi')$, i.e., $\psi$ and $\psi'$ have the same models.
In this sense, the SNC (WSC) of $q$ on $V$ under $\phi$ is unique (up to semantic equivalence). The following result shows that the SNC and WSC are in fact dual notions.

\begin{proposition}[Dual]\label{dual}
 Let $V,q,\varphi$ and $\psi$ are defined as in Definition~\ref{def:NC:SC}.
 Then, $\psi$ is a SNC (WSC) of $q$ on $V$ under $\varphi$ iff $\neg \psi$ is a WSC (SNC)
    of $\neg q$ on $V$ under $\varphi$.
\end{proposition}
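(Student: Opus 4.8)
The plan is to reduce the whole statement to a single propositional observation, namely the contrapositive equivalence $q \rto \psi \equiv \neg\psi \rto \neg q$. First I would record the trivial but essential bookkeeping fact that $\Var(\neg\psi) = \Var(\psi)$, so the side-condition $\Var(\psi)\subseteq V$ is preserved under negation; hence $\neg\psi$ is an admissible candidate condition for $\neg q$ exactly when $\psi$ is one for $q$.

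Next I would establish the elementary correspondence between the two families of conditions. Since $\varphi\models q\rto\psi$ holds iff $\varphi\models\neg\psi\rto\neg q$, the formula $\psi$ is a necessary condition of $q$ on $V$ under $\varphi$ if and only if $\neg\psi$ is a sufficient condition of $\neg q$ on $V$ under $\varphi$; dually, $\psi$ is a sufficient condition of $q$ iff $\neg\psi$ is a necessary condition of $\neg q$, using double negation. In particular the map $\chi\mapsto\neg\chi$ is a bijection from the set of NCs of $q$ onto the set of SCs of $\neg q$ (and from the SCs of $q$ onto the NCs of $\neg q$).

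Then I would transfer the extremality requirement across this bijection. Suppose $\psi$ is the SNC of $q$: it is an NC and $\varphi\models\psi\rto\psi'$ for every NC $\psi'$ of $q$. Every SC $\psi''$ of $\neg q$ has the form $\psi''=\neg\psi'$ for a unique NC $\psi'$ of $q$, and by contraposition $\varphi\models\psi\rto\psi'$ is equivalent to $\varphi\models\neg\psi'\rto\neg\psi$, i.e. $\varphi\models\psi''\rto\neg\psi$. Thus $\neg\psi$ is an SC of $\neg q$ that is entailed by every SC of $\neg q$, which is precisely the defining property of the WSC of $\neg q$; the converse direction is symmetric. This yields $\psi$ is the SNC of $q$ iff $\neg\psi$ is the WSC of $\neg q$.

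Finally, the dual half of the biconditional ($\psi$ is the WSC of $q$ iff $\neg\psi$ is the SNC of $\neg q$) follows by instantiating the equivalence just proved with $\neg q$ in place of $q$ and $\neg\psi$ in place of the conditioning formula, simplifying via $\neg\neg q\equiv q$ and $\neg\neg\psi\equiv\psi$. I do not anticipate any genuine obstacle, as the argument is entirely propositional; the only point that warrants a moment's care is verifying that the map $\chi\mapsto\neg\chi$ simultaneously preserves the $\Var(\cdot)\subseteq V$ constraint and reverses the direction of the entailed implications, so that ``strongest among necessary conditions'' is converted exactly into ``weakest among sufficient conditions''.
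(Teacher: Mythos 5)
Your proof is correct and follows essentially the same route as the paper's: contraposition turns the entailment $\varphi\models q\rto\psi$ into $\varphi\models\neg\psi\rto\neg q$, and the extremality clause is transferred by taking an arbitrary SC $\psi'$ of $\neg q$, observing that $\neg\psi'$ is an NC of $q$, applying the SNC hypothesis, and contraposing back. The only cosmetic difference is that you package the correspondence as a bijection $\chi\mapsto\neg\chi$ (which is a bijection only up to double-negation equivalence, a harmless point), whereas the paper argues directly with the arbitrary condition $\psi'$.
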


%

In order to generalise Definition~\ref{def:NC:SC} to arbitrary formulas, one can replace $q$ (in the definition)  by any formula $\alpha$, and redefine  $V$ as a subset of $\Var(\alpha) \cup \Var(\phi)$.
    It turns out that the previous notions of SNC and WSC for an atomic variable can be lifted to any formula, or, conversely, the SNC and WSC of any formula can be reduced to that of an atomic variable, as the following result shows.
\begin{proposition}\label{formulaNS_to_p}
     Let $\Gamma$ and $\alpha$ be two formulas, $V \subseteq \Var(\alpha) \cup \Var(\Gamma)$  and $q$ be a new proposition not in $\Gamma$ and $\alpha$.
 Then, a formula $\varphi$ of $V$ is the SNC (WSC) of $\alpha$ on $V$ under  $\Gamma$ iff it is the SNC (WSC) of $q$ on $V$ under $\Gamma' = \Gamma \cup \{q \lrto \alpha\}$.
 \end{proposition}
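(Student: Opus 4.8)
The plan is to reduce both the SNC and the WSC equivalences to a single observation about entailment that is insensitive to the fresh atom $q$. First I would record that, since $q$ is new to $\Gamma$ and $\alpha$ and $V\subseteq\Var(\alpha)\cup\Var(\Gamma)$, we have $q\notin V$; hence every candidate condition (being a formula of $V$) is $q$-free, i.e.\ $\IR(\cdot,\{q\})$ holds for it.

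The technical engine is a relabelling lemma: given an initial structure $\Hm=(S,R,L,s_0)$, let $\Hm'=(S,R,L',s_0)$ be obtained by setting $q\in L'(s)$ iff $(\Hm,s)\models\alpha$, keeping all other atoms unchanged. Then $\Hm$ and $\Hm'$ share $S$ and $R$ and their labels agree modulo $\{q\}$, so the identity relation witnesses $(\Hm,s)\lrto_{\{q\}}(\Hm',s)$ for every $s$. By Theorem~\ref{thm:V-bisimulation:EQ}, for every formula $\beta$ with $\IR(\beta,\{q\})$ we get $(\Hm,s)\models\beta$ iff $(\Hm',s)\models\beta$; in particular $\alpha$, every formula of $\Gamma$, and any formula of $V$ keep their truth values, while by construction $(\Hm',s_0)\models q\lrto\alpha$. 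Thus every model of $\Gamma$ can be turned into a model of $\Gamma'=\Gamma\cup\{q\lrto\alpha\}$ agreeing on all $q$-free formulas.

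Using this I would establish three entailment equivalences, for any formula $\psi$ of $V$: (A) $\Gamma\models\alpha\rto\psi$ iff $\Gamma'\models q\rto\psi$; (B) $\Gamma\models\psi\rto\alpha$ iff $\Gamma'\models\psi\rto q$; and (C) for any $q$-free $\chi$, $\Gamma\models\chi$ iff $\Gamma'\models\chi$. Each $\Rto$ direction is immediate from $\Gamma'\models q\lrto\alpha$ together with $\Gamma\subseteq\Gamma'$; each $\Lto$ direction uses the relabelling lemma to lift an arbitrary model of $\Gamma$ to a model of $\Gamma'$ and transfer satisfaction back along the preserved $q$-free formulas. Equivalence (A) shows that $\psi$ is an NC of $\alpha$ on $V$ under $\Gamma$ iff it is an NC of $q$ on $V$ under $\Gamma'$, so the two families of necessary conditions literally coincide; likewise (B) identifies the two families of sufficient conditions.

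Finally I would assemble the SNC claim: $\psi$ is the SNC of $\alpha$ under $\Gamma$ iff it is an NC and $\Gamma\models\psi\rto\psi'$ for every NC $\psi'$. By (A) the NC-families are the same set of $V$-formulas, and by (C) applied to the $q$-free formula $\psi\rto\psi'$, the minimality condition $\Gamma\models\psi\rto\psi'$ holds iff $\Gamma'\models\psi\rto\psi'$; hence $\psi$ is the SNC of $\alpha$ under $\Gamma$ iff it is the SNC of $q$ under $\Gamma'$. The WSC case is symmetric, using (B) in place of (A). The main obstacle I anticipate is the relabelling lemma in the \CTL\ (rather than propositional) setting: one must ensure that redefining the truth of $q$ across all states does not perturb the temporal evaluation of $\alpha$ or of $\Gamma$. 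This is exactly what the $\{q\}$-bisimulation invariance of Theorem~\ref{thm:V-bisimulation:EQ} guarantees, and it is the step that makes the propositional reduction underlying Proposition~\ref{formulaNS_to_p} carry over to \CTL.
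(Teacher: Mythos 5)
Your proof is correct, but it follows a genuinely different route from the paper's. The paper derives Proposition~\ref{formulaNS_to_p} from its forgetting machinery: the key tool is Lemma~\ref{lem:KF:eq}, which gives $\CTLforget(\Gamma\wedge(q\lrto\alpha),q)\equiv\Gamma$, combined with postulate (\PP) to push any $q$-free consequence of $\Gamma'$ down to $\Gamma$ (the converse transfers come from monotonicity together with $\Gamma'\models q\lrto\alpha$, exactly as in your $\Rto$ directions). You instead prove the entailment equivalences (A)--(C) directly, by relabelling an arbitrary model $\Hm$ of $\Gamma$ so that $q$ holds precisely at the states satisfying $\alpha$, noting that the identity relation witnesses $\{q\}$-bisimilarity, and invoking Theorem~\ref{thm:V-bisimulation:EQ}. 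The two arguments are mathematically close --- your relabelling construction is precisely the one the paper uses \emph{inside} its proof of Lemma~\ref{lem:KF:eq} --- but your packaging is more elementary and self-contained: it needs only $V$-bisimulation invariance and never mentions the forgetting operator, so the proposition becomes independent of the forgetting results. The paper's route buys thematic economy instead: once Lemma~\ref{lem:KF:eq} and (\PP) are available, the transfer of necessary and sufficient conditions between $\Gamma$ and $\Gamma'$ is a short formal consequence. One presentational point to make explicit in your $\Lto$ directions of (A) and (B): the formulas $q\rto\psi$ and $\psi\rto q$ are themselves not $q$-free, so invariance cannot be applied to them directly; you should first use $(\Hm',s_0)\models q\lrto\alpha$ to replace $q$ by $\alpha$ inside the relabelled model, obtaining the $q$-free formula $\alpha\rto\psi$ (resp.\ $\psi\rto\alpha$), and only then transfer back to $\Hm$. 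Your argument supports this, but the current phrasing glosses over it.
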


To give an intuition for WSC, we give the following example. The intuition for SNC is dual.

\begin{example}[cont'd from Example~\ref{ex:2}]\label{examp:WSC}
Recall ${\cal K}_2$ in Figure~\ref{v1uv2}. Let $\psi = \EXIST \NEXT(s \wedge (\EXIST \NEXT se \vee \EXIST \NEXT \neg d))$, $\varphi = \EXIST \NEXT(s \wedge \EXIST \NEXT \neg d)$, $\Ha =\{d, s, se\}$ and $V = \{s, d\}$, then we can check  that the WSC of $\psi$ on $V$ under ${\cal K}_2$ is $\varphi$.

We verify this result by the following two steps:
\begin{enumerate}[(i)]
  \item Observe that $\varphi \models \psi$ and $\Var(\varphi) \subseteq V$. Besides, $(\Hm, s_0) \models \varphi \wedge \psi$, hence ${\cal K}_2 \models \varphi \rto \psi$, which means $\varphi$ is a SC of $\psi$ on $V$ under ${\cal K}_2$,
  \item We will show that for any SC $\varphi'$ of $\psi$ on $V$ under ${\cal K}_2$,  we have ${\cal K}_2 \models \varphi' \rto \varphi$. It is easy to see that if ${\cal K}_2 \not \models \varphi'$, then ${\cal K}_2\models \varphi' \rto \varphi$, trivially. Now let's assume ${\cal K}_2 \models \varphi'$. In this case, we have $\varphi' \models \psi$ since $\varphi'$ is a SC of $\psi$ on $V$ under ${\cal K}_2$. Therefore, there is $\varphi' \models \EXIST \NEXT(s \wedge \phi)$, in which $\phi$ is a formula such that $\phi\models \EXIST \NEXT se \vee \EXIST \NEXT \neg d$. And then $\phi \models \EXIST \NEXT \neg d$ since $\IR(\varphi', \overline V)$. Hence, $\varphi' \models \varphi$ and we get  ${\cal K}_2 \models \varphi' \rto \varphi$, as desired.
\end{enumerate}
\end{example}

The following result establishes the bridge between forgetting and the notion of SNC (WSC) which are central to our contribution.


\begin{theorem}\label{thm:SNC:WSC:forget}
 Let $\varphi$ be a formula, $V\subseteq\Var(\varphi)$ and $q\in\Var(\varphi)- V$.
 \begin{enumerate}[(i)]
   \item $\CTLforget (\varphi \land q$, $(\Var(\varphi) \cup \{q\}) - V)$
   is a SNC of $q$ on $V$ under $\varphi$.
   \item  $\neg\CTLforget (\varphi \land \neg q$, $(\Var(\varphi) \cup \{q\}) - V)$
   is a WSC of $q$ on $V$ under $\varphi$.
 \end{enumerate}
 \end{theorem}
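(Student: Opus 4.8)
The plan is to prove (i) directly from the Representation Theorem (Theorem~\ref{thm:close}) and then obtain (ii) by duality (Proposition~\ref{dual}). Write $W=(\Var(\varphi)\cup\{q\})-V$; since $q\in\Var(\varphi)$ we have $W=\Var(\varphi)-V$, so $q\in W$, $V\cap W=\emptyset$ and $V\cup W=\Var(\varphi)$. Set $\psi=\CTLforget(\varphi\wedge q,W)$. A preliminary, mildly delicate step is to check $\Var(\psi)\subseteq V$, as Definition~\ref{def:NC:SC} demands. The postulate (\textbf{IR}) already gives $\IR(\psi,W)$; for an atom $r\notin\Var(\varphi)$ I would use the characterization $\psi\equiv\{\phi\mid\varphi\wedge q\models\phi,\ \IR(\phi,W)\}$ from Theorem~\ref{thm:close}(ii): since $\varphi\wedge q$ is irrelevant to $r$, each such consequence $\phi$ is accompanied by its $r$-variant as another consequence of $\varphi\wedge q$, so $\psi$ is irrelevant to $r$ as well. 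Together these give $\Var(\psi)\subseteq V$.

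For the necessary-condition part of (i), I would invoke Weakening (\W): applied to the forgetting of $W$ from $\varphi\wedge q$ it yields $\varphi\wedge q\models\psi$. A one-line case split then upgrades this to $\varphi\models q\rto\psi$ — a model of $\varphi$ satisfying $q$ is a model of $\varphi\wedge q$, hence of $\psi$, whereas a model falsifying $q$ satisfies $q\rto\psi$ vacuously. With $\Var(\psi)\subseteq V$ this makes $\psi$ an NC of $q$ on $V$ under $\varphi$.

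The crux is the \emph{strongest} part, which I would settle with Positive Persistence (\PP). Let $\psi'$ be any NC of $q$ on $V$ under $\varphi$, so $\varphi\models q\rto\psi'$ and $\Var(\psi')\subseteq V$. Modus ponens gives $\varphi\wedge q\models\psi'$, and $\Var(\psi')\subseteq V$ with $V\cap W=\emptyset$ gives $\IR(\psi',W)$. Hence (\PP), applied again to the forgetting of $W$ from $\varphi\wedge q$, yields $\psi\models\psi'$, and in particular $\varphi\models\psi\rto\psi'$. Thus $\psi$ is the SNC, which proves (i). The two side conditions feeding (\PP) — that a $V$-formula is $W$-irrelevant and that $\varphi\wedge q\models\psi'$ — are both immediate, so this direction is a clean application of the postulates; the genuinely delicate point remains the preliminary verification that forgetting $\Var(\varphi)-V$ lands inside $V$ rather than merely avoiding $W$.

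For (ii) I would avoid redoing the work and instead use Proposition~\ref{dual}: $\neg\chi$ is a WSC of $q$ on $V$ under $\varphi$ iff $\chi$ is an SNC of $\neg q$ on $V$ under $\varphi$. Hence it suffices to show that $\CTLforget(\varphi\wedge\neg q,W)$ is an SNC of $\neg q$, and this follows verbatim from the two steps above with the formula $\neg q$ in place of the atom $q$: Weakening gives $\varphi\wedge\neg q\models\CTLforget(\varphi\wedge\neg q,W)$ and hence the NC property for $\neg q$, while Positive Persistence gives the strongest property (the arguments never used that $q$ is atomic, only that $\varphi\wedge q$, resp.\ $\varphi\wedge\neg q$, entails the candidate). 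Negating the resulting SNC then yields the claimed WSC $\neg\CTLforget(\varphi\wedge\neg q,W)$, completing the proof.
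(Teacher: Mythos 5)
Your proposal is correct and follows essentially the same route as the paper: Weakening (\W) for the necessary-condition part, Positive Persistence (\PP) for the strongest part, and Proposition~\ref{dual} to transfer (i) to (ii). The only difference is that you explicitly verify the side condition $\Var(\psi)\subseteq V$ (via (\textbf{IR}) and Theorem~\ref{thm:close}(ii)), a detail the paper's proof leaves implicit.
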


Following Theorem~\ref{thm:SNC:WSC:forget}, assume that $\beta = \CTLforget(\varphi \wedge q, (\Var(\varphi) \cup \{q\})- V)$.  Then, $\varphi \wedge q \models \beta$  by (\W). Moreover,  $\varphi \wedge q \models \beta$,  and then $\beta$ is a NC of $q$ on $V$ under $\varphi$.
In addition, for any $\psi$ with $\IR(\psi, (\Var(\varphi) \cup \{q\})- V)$ and $\varphi \wedge q \models \psi$, 
we have $\beta \models \psi$ by (\PP). Therefore, $\beta$ is the SNC of $q$ on $V$ under $\varphi$. This shows the intuition of how the SNC can be obtained from the forgetting.

Since any initial $\MPK$-structure can be characterized by a \CTL\ formula, by  Theorem~\ref{thm:SNC:WSC:forget} one can obtain the SNC (and its dual WSC) of a target property (a formula) under an initial $\MPK$-structure just by forgetting. This is shown in the following result.
\begin{theorem}\label{thm:inK:SNC}
Let ${\cal K}= (\Hm, s)$ be an initial \MPK-structure with $\Hm=(S,R,L,s_0)$ on the set $\Ha$ of atoms, $V \subseteq \Ha$ and $q\in V' = \Ha - V$. Then,
 \begin{enumerate}[(i)]
   \item the SNC of $q$ on $V$ under ${\cal K}$ is $\CTLforget({\cal F}_{\Ha}({\cal K}) \wedge q, V')$.
   \item the WSC of $q$ on $V$ under ${\cal K}$ is $\neg \CTLforget({\cal F}_{\Ha}({\cal K}) \wedge \neg q, V')$.
 \end{enumerate}
\end{theorem}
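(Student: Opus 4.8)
The plan is to reduce Theorem~\ref{thm:inK:SNC} to the already-established Theorem~\ref{thm:SNC:WSC:forget} by using the characterizing formula ${\cal F}_{\Ha}({\cal K})$ as a \CTL\ formula that ``stands in'' for the initial \MPK-structure ${\cal K}$. The key enabling fact is Theorem~\ref{CF}(i), which tells us that $(\Hm', s_0') \models {\cal F}_{\Ha}({\cal K})$ if and only if $(\Hm', s_0') \lrto_{\overline{\Ha}} {\cal K}$; but $\overline{\Ha} = \emptyset$, so $\lrto_\emptyset$ is simply full bisimulation, and hence $\Mod({\cal F}_{\Ha}({\cal K}))$ consists exactly of the models fully bisimilar to ${\cal K}$. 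Since \CTL\ cannot distinguish fully bisimilar structures (Theorem~\ref{thm:V-bisimulation:EQ} with $V = \emptyset$), the structure ${\cal K}$ and the formula ${\cal F}_{\Ha}({\cal K})$ entail exactly the same \CTL\ formulas. This means that ``under ${\cal K}$'' and ``under ${\cal F}_{\Ha}({\cal K})$'' are interchangeable in Definition~\ref{def:NC:SC}.

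\textbf{Main steps for part (i).}
First I would make the interchangeability precise: for any \CTL\ formula $\chi$, one has ${\cal K} \models \chi$ iff ${\cal F}_{\Ha}({\cal K}) \models \chi$. The forward direction needs that every model of ${\cal F}_{\Ha}({\cal K})$ is fully bisimilar to ${\cal K}$ (Theorem~\ref{CF}(i) with $V = \Ha$) and therefore agrees with ${\cal K}$ on $\chi$; the backward direction uses that ${\cal K}$ is itself a model of its own characterizing formula, which follows from the reflexivity of $\lrto_\emptyset$ (Lemma~\ref{lem:equive}) together with Theorem~\ref{CF}(i). Once this equivalence is in hand, note that $V \subseteq \Ha = \Var({\cal F}_{\Ha}({\cal K}))$ and $q \in \Ha - V$, so ${\cal F}_{\Ha}({\cal K})$ together with $V$ and $q$ satisfies exactly the hypotheses of Theorem~\ref{thm:SNC:WSC:forget}. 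Applying part (i) of that theorem to the formula ${\cal F}_{\Ha}({\cal K})$ yields that $\CTLforget({\cal F}_{\Ha}({\cal K}) \wedge q, (\Var({\cal F}_{\Ha}({\cal K})) \cup \{q\}) - V)$ is the SNC of $q$ on $V$ under ${\cal F}_{\Ha}({\cal K})$. Since $\Var({\cal F}_{\Ha}({\cal K})) = \Ha$ and $q \in \Ha$, the forgetting signature collapses to $\Ha - V = V'$, matching the statement exactly; and by the interchangeability of ${\cal K}$ with ${\cal F}_{\Ha}({\cal K})$, being the SNC under ${\cal F}_{\Ha}({\cal K})$ is the same as being the SNC under ${\cal K}$.

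\textbf{Part (ii) and the role of duality.}
For part (ii) I would run the identical argument with $\neg q$ in place of $q$ and apply Theorem~\ref{thm:SNC:WSC:forget}(ii) to ${\cal F}_{\Ha}({\cal K})$, obtaining that $\neg\CTLforget({\cal F}_{\Ha}({\cal K}) \wedge \neg q, V')$ is the WSC of $q$ on $V$ under ${\cal F}_{\Ha}({\cal K})$, hence under ${\cal K}$; alternatively one could invoke Proposition~\ref{dual} to pass from the SNC result to the WSC result directly.

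\textbf{Anticipated obstacle.}
The main subtlety is not the forgetting machinery, which is handed to us by Theorem~\ref{thm:SNC:WSC:forget}, but rather pinning down the claim that ${\cal K}$ and ${\cal F}_{\Ha}({\cal K})$ are genuinely interchangeable as the ``theory'' under which necessary and sufficient conditions are taken. Definition~\ref{def:NC:SC} explicitly allows $\phi$ to be ``a formula (or an initial \MPK-structure),'' so one must verify that the entailments $\phi \models q \rto \psi$ and $\phi \models \psi \rto q$ mean the same thing whether $\phi$ is read as the structure ${\cal K}$ or as the formula ${\cal F}_{\Ha}({\cal K})$. This hinges entirely on the equivalence ${\cal K} \models \chi \Leftrightarrow {\cal F}_{\Ha}({\cal K}) \models \chi$ established above, and the only delicate point there is confirming that $\lrto_\emptyset$ coincides with the full bisimulation preserving all of \CTL, which is exactly the $V = \emptyset$ (equivalently $\overline V = \Ha$) instance of Theorems~\ref{thm:V-bisimulation:EQ} and~\ref{CF}.
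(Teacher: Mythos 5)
Your proposal is correct and follows essentially the same route as the paper: replace ${\cal K}$ by its characterizing formula ${\cal F}_{\Ha}({\cal K})$ and invoke Theorem~\ref{thm:SNC:WSC:forget} (with duality for the WSC part), noting that $(\Var({\cal F}_{\Ha}({\cal K}))\cup\{q\})-V$ collapses to $V'$. The paper's own proof simply asserts the interchangeability of ${\cal K}$ and ${\cal F}_{\Ha}({\cal K})$, whereas you correctly justify it via Theorem~\ref{CF}(i) and Theorem~\ref{thm:V-bisimulation:EQ} at $\overline{V}=\emptyset$, which is a welcome filling-in of the one step the paper leaves implicit.
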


\section{An Algorithm for Forgetting in \CTL\ }
\label{section_algorithm}
The technical developments we have presented in previous sections naturally induce a procedure to compute forgetting in CTL. We think that it is useful to outline such a procedure explicitly in the form of an algorithm.  It is a model-based approach (presented in Algorithm~\ref{alg:compute:forgetting:by:VB}); that is,  it will compute the forgetting applied to a formula, simply  by considering all the possible models of that formula.  Its correctness is guaranteed by Lemma~\ref{lem:models:formula} and Theorem~\ref{CF}.

\begin{algorithm}[tb]
	\caption{\small A model-based \CTL\ forgetting procedure}
	\label{alg:compute:forgetting:by:VB}
	\KwIn{A \CTL formula $\varphi$ and a set $V$ of atoms}
	\KwOut{$\CTLforget(\varphi, V)$}
	$\psi \lto \bot$\;
	\ForEach{initial \MPK-structure $\cal K$ (over $\cal A$ and $\cal S$)}{
		\lIf{${\cal K}\not\models\varphi$}{{\bf continue}} 
		\ForEach{initial \MPK-structure ${\cal K}'$ with ${\cal K}\lrto_{ V}{\cal K}'$}{
			$\psi \lto \psi \lor {\cal F}_{\overline V}({\cal K}')$\;
		}
	}
	\Return $\psi$\;
\end{algorithm}

The example we give below echoes the initial example which was given in the introduction, and finalizes the running example with a simple intuition of forgetting.
\begin{example}\label{ex:6}
Recall the \MPK-structure ${\cal K}_1$  given in Figure~\ref{v1uv2}, and assume that we are given a property $\alpha = \EXIST\FUTURE(se \wedge sp)$. It is easy to see that ${\cal K}_1$ in Figure~\ref{v1uv2} satisfy $\alpha$. If $sp$ is intended to be removed, i.e., forgetting  $sp$ from $\alpha$,  then  $\CTLforget(\alpha,\{ sp\}) \equiv \EXIST\FUTURE se$. Hence, the company can use the new specification $\EXIST\FUTURE se$ to guide the new production process (which guarantees that the sedan car is eventually produced).
\end{example}



As we will show below, computing the forgetting by going through all the models is not very efficient, as one might expect. However, settling it is important from a theoretical point of view i.e., to see how costly is the naive approach.

\begin{proposition}\label{pro:time:alg1}
Let $\varphi$ be a \CTL\ formula and $V\subseteq \Ha$ with $|{\cal S}|=m$, $|\Ha|=n$ and $|V|=x$. Then the space complexity is $O((n-x)m^{2(m+2)}2^{nm} * \log m)$ and the time complexity of Algorithm~\ref{alg:compute:forgetting:by:VB} is at least the same as the space.
\end{proposition}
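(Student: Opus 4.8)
The plan is to carefully count the cost of Algorithm~\ref{alg:compute:forgetting:by:VB} by analyzing each nested loop and the per-iteration work, then combine these to obtain the claimed space bound; the time bound then follows trivially since the algorithm must at least write down and store everything it computes. The key quantity is the number of distinct initial $\MPK$-structures that must be enumerated. Since the state signature $\cal S$ has $|{\cal S}|=m$ fixed elements, a Kripke structure is determined by choosing a subset $S \subseteq \cal S$ of states, a transition relation $R \subseteq S \times S$, and a labelling $L: S \to 2^{\cal A}$. First I would bound the number of such structures (together with a choice of initial state): there are at most $2^{m^2}$ choices for $R$, at most $(2^n)^m = 2^{nm}$ choices for the labelling $L$ over $n = |\Ha|$ atoms, and at most $m$ choices for the initial state $s_0$. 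This already dominates the outer \textbf{foreach} loop.

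\textbf{Next} I would handle the inner \textbf{foreach} loop, which ranges over all initial $\MPK$-structures ${\cal K}'$ that are $V$-bisimilar to the current $\cal K$. In the worst case this is again bounded by the total number of initial $\MPK$-structures, contributing another factor of the same order. The per-iteration work is the cost of building and appending the disjunct ${\cal F}_{\overline V}({\cal K}')$. Here I would invoke the structure of the characterizing formula from Definition~5: the formula ${\cal F}_{\overline V}({\cal K}')$ is built from the tree-characterizing formulas $T(s') = {\cal F}_{\overline V}(\Tr_c(s'))$ up to the characterization number $c = ch({\cal M},\overline V)$, and conjoined over all states with $\ALL\GLOBAL$ guards. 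Each atom is written using $\log m$ bits under a reasonable encoding (or to account for the $m$ possible state-tags), and the number of literals mentioning the $n-x$ surviving atoms (those in $\overline V$, of which there are $n - |V| = n-x$) must be tallied across the tree, yielding the $(n-x)$ and $\log m$ factors. The $m^{2(m+2)}$ factor I would trace to the size of the characterizing formula itself: the tree $\Tr_c$ branches according to $R$ and the recursion depth is governed by $c$, which in turn is bounded polynomially in $m$, so the formula size is bounded by roughly $m^{2(m+2)}$ after accounting for the nested $\EXIST\NEXT$/$\ALL\NEXT$ structure over $m$ states iterated through the depth bound.

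\textbf{Finally} I would multiply the loop counts by the per-iteration formula size: the outer loop contributes a factor on the order of $2^{nm}$ (number of structures, absorbing the $2^{m^2}$ and $m$ factors into the dominant terms tracked in the statement), the inner loop contributes the characterizing-formula machinery, and the formula-writing cost contributes $(n-x) m^{2(m+2)} \log m$, which assembled together gives the stated $O((n-x)m^{2(m+2)}2^{nm}\log m)$ space bound. The time bound is then immediate: since the algorithm explicitly constructs and stores the disjunction $\psi$, its running time is bounded below by the size of the object it produces, hence at least the space complexity.

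\textbf{The main obstacle} I expect is getting the exponents in the $m^{2(m+2)}$ factor exactly right — this requires pinning down a tight bound on the characterization number $c = ch({\cal M},\overline V)$ and then bounding the size of ${\cal F}_{\overline V}({\cal K}')$ as a function of $c$, $m$, and the branching degree. Since $c$ can be as large as the number of states (in the worst case the bisimulation classes only stabilize after $m$ refinement rounds), and the tree $\Tr_c$ has size exponential in $c$ if expanded naively but only polynomial-in-$m$ per level when shared, the correct accounting of the nested $\EXIST\NEXT$ and $\ALL\NEXT$ operators over $m$ transitions across depth $c \le m$ is the delicate part. I would be careful to count the formula size compactly (sharing subformulas where Definition~5 reuses $T(s')$) rather than via naive tree expansion, as the latter would overshoot the claimed bound.
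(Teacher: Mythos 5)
Your overall plan is the same as the paper's: enumerate the candidate initial \MPK-structures for the two loops, bound the size of each characterizing formula ${\cal F}_{\overline V}({\cal K}')$, and multiply. However, there is a genuine gap at exactly the point you flag as ``the main obstacle'': the $m^{2(m+2)}$ factor is never actually derived, only promised. The paper closes this by an explicit recursion on the storage cost $P_i(y)$ of ${\cal F}_{\overline V}(\Tr_i(y))$, namely $P_0(y)=k$ and $P_i(y)=k+i\cdot P_{i-1}(y)$ with $k=n-x$, which solves to $P_i(y)=\frac{i^i-1}{i-1}k$; the whole characterizing formula then costs $N(i)=(i^2+i+1)P_i(y)$ under the explicit assumption that the $\EXIST\NEXT$ and $\ALL\NEXT$ parts share memory (the sharing you correctly anticipate is needed). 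Taking $i=m$ in the worst case $ch(\Hm,V)=m$ and multiplying by the number of structures yields the stated exponent. Without carrying out this recursion, your argument does not establish the bound; it only restates it.

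A second, more concrete problem is your count of transition relations. You bound the choices of $R$ by $2^{m^2}$ and claim this factor is ``absorbed into the dominant terms.'' It is not: the claimed bound contains $2^{nm}$ and $m^{2(m+2)}=2^{2(m+2)\log m}$, and under the paper's own working assumption $n\le m$ both are dominated by $2^{m^2}$, so a $2^{m^2}$ factor would break the stated space bound rather than disappear into it. The paper instead counts $|B|^{|B|}$ choices for $R$ (together with $m$ choices of initial state and $2^{nm}$ labellings, giving at most $m^{m+2}2^{nm}$ structures), which is what makes the final expression come out as claimed. Whether that smaller count of relations is itself fully justified for arbitrary (non-functional) transition relations is a separate question about the paper's proof, but as written your $2^{m^2}$ estimate is incompatible with the bound you are asked to prove, so you would need to either adopt the paper's tighter enumeration or restate the bound.
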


As expected, Algorithm~\ref{alg:compute:forgetting:by:VB} has a high cost; namely, \textsc{ExpSpace} complexity in the size of the state space and $\Ha$, which does not look encouraging. However, we believe that settling this result is important both from a theoretical and a practical point of view. Theoretically, it gives us a picture about the worst case, and urges us to come up with more efficient syntactical approaches which is a part of our future agenda. Moreover, we believe that model-based investigation and some of the structural observations we have made provide us with  informative valuable insights, which in turn could be useful in designing future algorithms which can exploit these observations, and potentially could lead to even efficient approximations with provably good bounds. Such future developments might prove important in developing practical algorithms as well.

\section{Concluding Remarks}
\label{conclusion}
\paragraph{Summary}
In this paper, we have presented the notion of forgetting for \CTL\
which enables computing weakest sufficient and strongest necessary conditions of specifications. In doing so, we introduced and employed the notion of $V$-bisimulation which can be considered as  a simple variable based generalisation of classical bisimulation. Furthermore, we have studied formal properties of forgetting, among them, homogeneity, modularity and commutativity. In particular, we have shown that our notion of forgetting satisfies the existing postulates of forgetting, which means it faithfully extends the notion of forgetting from classical propositional logic and modal logic \SFive\ to \CTL.
On the complexity theory side, we have investigated the model checking and the entailment problems of forgetting in the fragment $\CTL_{\ALL\FUTURE}$, which turn out to be $\textsc{NP}$-complete and range from co-$\textsc{NP}$ to $\Pi_2^\textsc{P}$-completeness, respectively.
And finally, we proposed a model-based algorithm which computes the forgetting of a given formula and a set of variables, and outlined its complexity.

\paragraph{Future work}
Note that, when a transition system $\cal M$ does not satisfy a specification $\phi$, one can evaluate the weakest sufficient condition  $\psi$ over a signature $V$ under which ${\cal M}$ satisfies $\phi$, viz., ${\cal M}\models\psi\rto \phi$ and $\psi$ mentions only atoms from $V$. It is worthwhile to explore how the condition $\psi$ can guide the design of a new transition system ${\cal M}'$ satisfying $\phi$.

Moreover, a further study regarding the computational complexity for other general fragments is required and part of the future research agenda. As mentioned in Section~\ref{section_algorithm}, these high complexity results are encouraging for other syntactic approaches e.g., proof-theoretic. Such investigation can be coupled with fine-grained parameterized analysis, as well as a search for approximation algorithms with provably good accuracy bounds.

\section*{Acknowledgements}
We kindly thank all the anonymous reviewers whose comments improved this work to a great extent.
Renyan Feng is funded by China Scholarship Council (CSC) grant number 201906670007.
Renyan Feng and Yisong Wang is supported by the National Natural Science Foundation of P.R. China under Grants 61976065, 61370161 and U1836105.
Erman Acar's research is funded by MaestroGraph research programme (NWO) with project number 612.001.552.

\bibliographystyle{kr}
\bibliography{ijcai20}

 \clearpage
 \appendix
 \section{Supplementary Material: Proof Appendix}
The results in the appendix follows the order in the text. Additional auxiliary lemmas and propositions in the appendix respect that order as well.\\

  \noindent\textbf{Section \ref{forgetting}   Forgetting in \CTL}\\

  \noindent\textbf{Section \ref{forgetting}.1  $V$-bisimulation}\\

  \noindent\begin{lemma}\label{lem:B:relations}
   Let  $\Hb_0, \Hb_1,\ldots$ be the ones in the definition of section \ref{forgetting}.1.
   Then,  for each $i\ge 0$,
   \begin{enumerate}[(i)]
      \item $\Hb_{i+1}\subseteq \Hb_i$;
      \item there is a (smallest) $k\ge 0$ such that $\Hb_{k+1}=\Hb_k$;
      \item $\Hb_i$ is reflexive, symmetric and transitive.
   \end{enumerate}
 \end{lemma}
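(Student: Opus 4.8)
The plan is to establish all three items by induction on the stage index $i$, exploiting the recursive shape of the definition of $\Hb_n$: each $\Hb_{n+1}$ is obtained from $\Hb_n$ by imposing a back-and-forth matching of $R$-successors on top of membership in $\Hb_0$. Items (i) and (iii) are pure inductions; item (ii) will combine (i) with the finiteness of the space of \MPK-structures. I would prove them in the order (i), (ii), (iii), since the stabilization argument for (ii) relies on the monotonicity established in (i).

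For (i) I would induct on $i$. The base case $\Hb_1\subseteq\Hb_0$ is immediate, because membership in $\Hb_1$ explicitly demands membership in $\Hb_0$. For the step, assume $\Hb_{j+1}\subseteq\Hb_j$ and take $({\cal K}_1,{\cal K}_2)\in\Hb_{j+2}$. Unfolding the definition gives $({\cal K}_1,{\cal K}_2)\in\Hb_0$ together with the back-and-forth matching of $R$-successors at level $\Hb_{j+1}$; the inductive hypothesis downgrades every witnessing pair from $\Hb_{j+1}$ to $\Hb_j$, which is exactly the clause defining $\Hb_{j+1}$. Hence $({\cal K}_1,{\cal K}_2)\in\Hb_{j+1}$.

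For (ii) I would first observe that, under the paper's standing assumption of a fixed finite state signature $\cal S$ and a fixed finite atom set $\Ha$, there are only finitely many initial structures and hence only finitely many \MPK-structures; thus each $\Hb_i$ is a subset of one finite set $X\times X$. By (i) the chain $\Hb_0\supseteq\Hb_1\supseteq\cdots$ is decreasing, so the cardinalities $|\Hb_0|\ge|\Hb_1|\ge\cdots$ form a non-increasing sequence of natural numbers, which can strictly decrease only finitely often; therefore $\Hb_{k+1}=\Hb_k$ for some $k$ (equality of sets following from equality of cardinality together with the inclusion $\Hb_{k+1}\subseteq\Hb_k$), and the smallest such $k$ exists by well-ordering. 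I would also remark that the fixpoint is genuine: once $\Hb_{k+1}=\Hb_k$, the recursion forces $\Hb_{k+2}=\Hb_{k+1}$, and hence $\Hb_{k+j}=\Hb_k$ for all $j$. The only real content here is the finiteness observation; the rest is the standard stabilization of a decreasing chain in a finite lattice.

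For (iii) I would prove reflexivity, symmetry, and transitivity by a single induction on $i$, each time reducing the $\Hb_{i+1}$-case to the $\Hb_i$-case. Reflexivity and symmetry are light: the $\Hb_0$-clauses are just reflexivity and symmetry of the set equality $L(s)-V=L(s)-V$, and for the step one reuses the same successor in the reflexive case and swaps the two back-and-forth clauses (which are already mirror images of each other) in the symmetric case. The one step needing care is transitivity: given $({\cal K}_1,{\cal K}_2),({\cal K}_2,{\cal K}_3)\in\Hb_{i+1}$, for each $(s_1,s_1')\in R_1$ I would first use the first pair to produce a matching $(s_2,s_2')\in R_2$ with successors related in $\Hb_i$, then feed that $R_2$-transition into the second pair to produce a matching $(s_3,s_3')\in R_3$, and finally compose the two $\Hb_i$-relationships via the inductive transitivity of $\Hb_i$; the converse direction is symmetric and the $\Hb_0$-component follows from transitivity of equality. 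Threading the existential witnesses through the intermediate structure ${\cal K}_2$ in the correct order is the only delicate bookkeeping in the whole lemma, but it is the standard back-and-forth composition and presents no genuine obstacle.
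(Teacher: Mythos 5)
Your proposal is correct and follows essentially the same route as the paper: the induction for (i) is the paper's argument verbatim in structure, and for (ii) and (iii) — which the paper dismisses as ``evident from (i) and the definition'' — you simply supply the standard details (stabilization of a decreasing chain over the finite space of \MPK-structures, and the back-and-forth composition for transitivity), all of which are sound given the paper's standing finiteness assumptions on $\cal S$ and $\Ha$.
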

 \begin{proof}
   (i)
   Base: it is clear for $i=0$ by the above definition.

   Step: suppose it holds for $i=n$, i.e., $\Hb_{n+1}\subseteq\Hb_n$. \\
   $(s,s')\in\Hb_{n+2}$\\
   $\Rto$ (a) $(s,s')\in  \Hb_0$,
     (b) for every $(s,s_1)\in R$, there is $(s',s_1')\in R'$
      such that $(s_1,s_1')\in \Hb_{n+1}$, and
     (c)  for every $(s',s_1')\in R'$, there is $(s,s_1)\in R$
     such that $(s_1,s_1')\in \Hb_{n+1}$\\
   $\Rto$ (a) $(s,s')\in  \Hb_0$,
   (b) for every $(s,s_1)\in R$, there is $(s',s_1')\in R'$
      such that $(s_1,s_1')\in \Hb_{n}$ by inductive assumption, and
   (c)  for every $(s',s_1')\in R'$, there is $(s,s_1)\in R$
     such that $(s_1,s_1')\in \Hb_{n}$ by inductive assumption\\
   $\Rto$ $(s,s')\in \Hb_{n+1}$.

   (ii) and (iii) are evident from (i) and the definition of $\Hb_i$.
 \end{proof}

 \noindent\textbf{Lemma}~\ref{lem:equive}  The relation $\lrto_V$ is an equivalence relation.
 \begin{proof}
 It is clear from Lemma~\ref{lem:B:relations} (ii) such that there is a $k \geq $ 0 where $\Hb_k = \Hb_{k+1}$ which is  $\lrto_V$, and it is reflexive, symmetric and transitive by (iii).
 \end{proof}

 \noindent\textbf{Proposition}~\ref{div}
 Let $i\in \{1,2\}$, $V_1,V_2\subseteq\cal A$, $s_i'$s be two states,
   $\pi_i'$s be two paths
 and ${\cal K}_i=({\cal M}_i,s_i)~(i=1,2,3)$ be \MPK-structures
  such that
 ${\cal K}_1\lrto_{V_1}{\cal K}_2$ and ${\cal K}_2\lrto_{V_2}{\cal K}_3$.
  Then:
  \begin{enumerate}[(i)]
   \item $s_1'\lrto_{V_i}s_2'~(i=1,2)$ implies $s_1'\lrto_{V_1\cup V_2}s_2'$;
   \item $\pi_1'\lrto_{V_i}\pi_2'~(i=1,2)$ implies $\pi_1'\lrto_{V_1\cup V_2}\pi_2'$;
   \item for each path $\pi_{s_1}$ of $\Hm_1$ there is a path $\pi_{s_2}$  of $\Hm_2$ such that $\pi_{s_1} \lrto_{V_1} \pi_{s_2}$, and vice versa;
   \item ${\cal K}_1\lrto_{V_1\cup V_2}{\cal K}_3$;
   \item If $V_1 \subseteq V_2$ then ${\cal K}_1 \lrto_{V_2} {\cal K}_2$.
  \end{enumerate}
 \begin{proof}
 In order to distinguish the relations $\Hb_0, \Hb_1, \dots$ for different set $V \subseteq \Ha$, by $\Hb_i^V$ we mean the relation $\Hb_1, \Hb_2, \dots$ for $V \subseteq \Ha$.
 Denote as $\Hb_0, \Hb_1, \dots$ when the underlying set $V$ is clear from the context. Moreover, for the ease of notation, we will refer to $\lrto_V$ by $\Hb$ (i.e., without subindex).

 (i) Base: it is clear for $n = 0$.\\
 Step: For $n > 0$, supposing if $({\cal K}_1, {\cal K}_2) \in \Hb_i^{V_1}$ and $({\cal K}_1, {\cal K}_2) \in \Hb_i^{V_2}$ then $({\cal K}_1, {\cal K}_2) \in \Hb_i^{V_1 \cup V_2}$ for all $0 \leq i \leq n$. We will show that if $({\cal K}_1, {\cal K}_2) \in \Hb_{n+1}^{V_1}$ and $({\cal K}_1, {\cal K}_2) \in \Hb_{n+1}^{V_2}$ then $({\cal K}_1, {\cal K}_2) \in \Hb_{n+1}^{V_1 \cup V_2}$.\\
 (a) It is evident that $L_1(s_1) - (V_1 \cup V_2) = L_2(s_2) - (V_1\cup V_2)$.\\
 (b) We will show that for each $(s_1, s_1^1) \in R_1$ there is a $(s_2, s_2^1) \in R_2$ such that $(s_1^1, s_2^1) \in \Hb_n^{V_1 \cup V_2}$. There is $({\cal K}_1^1, {\cal K}_2^1) \in \Hb_{n-1}^{V_1 \cup V_2}$
 due to $({\cal K}_1, {\cal K}_2) \in \Hb_n^{V_1 \cup V_2}$ by inductive assumption. Then we only need to prove for each $(s_1^1, s_1^2) \in R_1$ there is a $(s_2^1, s_2^2) \in R_2$ such that $({\cal K}_1^2, {\cal K}_2^2) \in \Hb_{n-2}^{V_1 \cup V_2}$ and for each $(s_2^1, s_2^2) \in R_2$ there is a $(s_1^1, s_1^2) \in R_1$ such that $({\cal K}_1^2, {\cal K}_2^2) \in \Hb_{n-2}^{V_1 \cup V_2}$. Therefore, we only need to prove that for each $(s_1^n, s_1^{n+1}) \in R_1$ there is a $(s_2^n, s_2^{n+1}) \in R_2$ such that $({\cal K}_1^{n+1}, {\cal K}_2^{n+1}) \in \Hb_0^{V_1 \cup V_2}$ and for each $(s_2^n, s_2^{n+1}) \in R_2$ there is a $(s_1^n, s_1^{n+1}) \in R_1$ such that $({\cal K}_1^{n+1}, {\cal K}_2^{n+1}) \in \Hb_0^{V_1 \cup V_2}$. It is evident that $L_1(s_1^{n+1}) - (V_1 \cup V_2) = L_1(s_2^{n+1}) - (V_1 \cup V_2)$ due to $({\cal K}_1, {\cal K}_2) \in \Hb_{n+1}^{V_1}$ and $({\cal K}_1, {\cal K}_2) \in \Hb_{n+1}^{V_2}$.
 Where ${\cal K}_i^j = (\Hm_i, s_i^j)$ with $i \in \{1, 2\}$ and $0 < j \leq n+1$.\\
 (c) It is similar with (b).

 (ii) It is clear from (i).

 (iii)
 The following property show our result directly.
 Let $V\subseteq\cal A$
 and ${\cal K}_i=({\cal M}_i,s_i)~(i=1,2)$ be \MPK-structures.
 Then $({\cal K}_1,{\cal K}_2)\in\cal B$ if and only if
   \begin{enumerate}[(a)]
     \item $L_1(s_1)- V = L_2(s_2)- V$,
     \item for every $(s_1,s_1')\in R_1$, there is $(s_2,s_2')\in R_2$
     such that $({\cal K}_1',{\cal K}_2')\in \Hb$, and
     \item for every $(s_2,s_2')\in R_2$, there is $(s_1,s_1')\in R_1$
     such that $({\cal K}_1',{\cal K}_2')\in \Hb$,
   \end{enumerate}
  where ${\cal K}_i'=({\cal M}_i,s_i')$ with $i\in\{1,2\}$.

  We prove it from the following two aspects:

  $(\Rto)$
 (a) It is evident that $L_1(s_1)- V = L_2(s_2)- V$;
 (b) 
 $({\cal K}_1, {\cal K}_2) \in \Hb$ iff $({\cal K}_1, {\cal K}_2) \in \Hb_i$ for all $i \geq 0$, then for each $(s_1, s_1') \in R_1$, there is a $(s_2, s_2')\in R_2$  such that  $({\cal K}_1', {\cal K}_2') \in \Hb_{i-1}$ for all $i > 0$ and then $L_1(s_1')- V = L_2(s_2')- V$. Therefore, $({\cal K}_1', {\cal K}_2') \in \Hb$.
 (c) 
  This is similar with (b).

 $(\Lto)$ Obviously, $L_1(s_1)- V = L_2(s_2)- V$ implies that $(s_1, s_2) \in \Hb_0$;
  (b) implies that for every $(s_1,s_1')\in R_1$, there is $(s_2,s_2')\in R_2$
     such that $({\cal K}_1',{\cal K}_2')\in \Hb_i$ for all $i \geq 0$;
 (c) implies that for every $(s_2,s_2')\in R_2$, there is $(s_1,s_1')\in R_1$
     such that $({\cal K}_1',{\cal K}_2')\in \Hb_i$ for all $i \geq 0$\\
 $\Rto$ $({\cal K}_1, {\cal K}_2) \in \Hb_i$ for all $i \geq 0$\\
 $\Rto$ $({\cal K}_1,{\cal K}_2)\in\cal B$.

 (iv) Let ${\cal M}_i=(S_i,R_i,L_i,s_i)~(i=1,2,3)$, $s_1 \lrto_{V_1} s_2$ via a binary relation $\Hb$, and $s_2 \lrto_{V_2} s_3$ via a binary relation $\Hb''$. Let $\Hb' = \{(w_1, w_3)| (w_1, w_2)\in \Hb$ and $(w_2, w_3)\in \Hb_2\}$. It's evident that $(s_1, s_3) \in \Hb'$. We prove $\Hb'$ is a $V_1 \cup V_2$-bisimulation containing $(s_1, s_3)$ from the (a), (b) and (c) of the previous step (iii) of $X$-bisimulation (where $X$ is a set of atoms). For all $(w_1, w_3) \in \Hb'$:
 \begin{enumerate}[(a)]
   \item there exists $w_2 \in S_2$ such that $(w_1,w_2)\in \Hb$ and $(w_2, w_3)\in \Hb''$, and for all $q \notin V_1$, $q \in L_1(w_1)$ iff $q \in L_2(w_2)$ by $w_1 \lrto_{V_1} w_2$ and for all $q' \notin V_2$, $q'\in L_2(w_2)$ iff $q'\in L_3(w_3)$ by $w_2 \lrto_{V_2} w_3$. Then we have for all $r\notin V_1 \cup V_2$, $r \in L_1(w_1)$ iff $r \in L_3(w_3)$.
   \item if $(w_1, u_1) \in \Hr_1$, then there exists $u_2\in S_2$ such that $(w_2, u_2) \in \Hr_2$ and $(u_1,u_2)\in \Hb$ (due to $(w_1,w_2)\in \Hb$ and $(w_2, w_3) \in \Hb''$ by the definition of $\Hb'$); and then there exists $u_3 \in S_3$ such that $(w_3, u_3) \in \Hr_3$ and $(u_2, u_3) \in \Hb''$, hence $(u_1, u_3) \in \Hb'$ by the definition of $\Hb'$.
   \item if $(w_3, u_3) \in \Hr_3$, then there exists $u_2\in S_2$ such that $(w_2, u_2) \in \Hr_2$ and $(u_2, u_3) \in \Hb_2$; and then there exists $u_1 \in S_1$ such that $(w_1, u_1) \in \Hr_1$ and $(u_1, u_2) \in \Hb$, hence $(u_1, u_3) \in \Hb'$ by the definition of $\Hb'$.
 \end{enumerate}

 (v) Let ${\cal K}_{i, j}=(\Hm_i, s_{i,j})$ and $(s_{i, k}, s_{i, k+1}) \in R_i$ mean that $s_{i, k+1}$ is the $(k+2)$-th node in the path
  $(s_i, s_{i, 1}, s_{i,2}, \dots , s_{i, k+1}, \dots)$ ($i=1,2$).
 We will show that $({\cal K}_1, {\cal K}_2) \in \Hb_n^{V_2}$ for all $n \ge 0$ inductively.

 Base: $L_1(s_1) - V_1 = L_2(s_2) - V_1$\\
 $\Rto$ for all $q \in {\cal A} - V_1$ there is $q \in L_1(s_1)$ iff $q \in L_2(s_2)$\\
 $\Rto$ for all $q \in {\cal A} - V_2$ there is $q \in L_1(s_1)$ iff $q \in L_2(s_2)$ due to $V_1 \subseteq V_2$\\
 $\Rto$ $L_1(s_1) - V_2 = L_2(s_2) - V_2$, i.e.,\ $({\cal K}_1, {\cal K}_2) \in \Hb_0^{V_2}$.

 Step: Supposing that $({\cal K}_1, {\cal K}_2) \in \Hb_i^{V_2}$ for all $0 \leq i \leq k$ ($k > 0)$, we will show $({\cal K}_1, {\cal K}_2) \in \Hb_{k+1}^{V_2}$.
 \begin{enumerate} [(a)]
   \item It is evident that $L_1(s_1) - V_2 = L_2(s_2) - V_2$ by base.
   \item For all $(s_1, s_{1,1}) \in R_1$, we will show that there is a $(s_2, s_{2, 1}) \in R_2$ s.t.\ $({\cal K}_{1,1}, {\cal K}_{2,1})\in \Hb_k^{V_2}$. $({\cal K}_{1,1}, {\cal K}_{2,1})\in \Hb_{k-1}^{V_2}$ by inductive assumption, we need only to prove the following points:\\
       (a) For all $(s_{1, k}, s_{1, k+1}) \in R_1$ there is a $(s_{2, k}, s_{2, k+1})\in R_2$ s.t.\ $({\cal K}_{1,k+1}, {\cal K}_{2,k+1})\in \Hb_0^{V_2}$ due to $({\cal K}_{1,1}, {\cal K}_{2,1})\in \Hb_{k}^{V_1}$. It is easy to see that $L_1(s_{1, k+1}) - V_1 = L_1(s_{2, k+1}) - V_1$, then there is $L_1(s_{1, k+1})- V_2 = L_1(s_{2, k+1}) - V_2$. Therefore, $({\cal K}_{1,k+1}, {\cal K}_{2,k+1})\in \Hb_0^{V_2}$.\\
       (b) For all $(s_{2, k}, s_{2, k+1}) \in R_1$ there is a $(s_{1, k}, s_{1, k+1}) \in R_1$ s.t.\ $({\cal K}_{1,k+1}, {\cal K}_{2,k+1})\in \Hb_0^{V_2}$ due to $({\cal K}_{1,1}, {\cal K}_{2,1})\in \Hb_{k}^{V_1}$. This can be proved as (a).
   \item For all $(s_2, s_{2,1}) \in R_1$, we will show that there is a $(s_1, s_{1, 1}) \in R_2$ s.t.\ $({\cal K}_{1,1}, {\cal K}_{2,1})\in \Hb_k^{V_2}$. This can be proved as (ii).
 \end{enumerate}

 \end{proof}


 \noindent\textbf{Theorem}\ref{thm:V-bisimulation:EQ}
 Let $V\subseteq\cal A$, ${\cal K}_i~(i=1,2)$ be two \MPK-structures such that
   ${\cal K}_1\lrto_V{\cal K}_2$ and $\phi$ a formula with $\IR(\phi,V)$. Then
   ${\cal K}_1\models\phi$ if and only if ${\cal K}_2\models\phi$.
 \begin{proof}
 This theorem can be proved by inducting on the formula $\phi$ and supposing $\Var(\phi) \cap V = \Empty$.
 Let ${\cal K}_1 = (\Hm, s)$ and ${\cal K}_2 = (\Hm', s')$.


 \textbf{Case} $\phi = p$ where $p \in \Ha - V$:\\
 $(\Hm, s) \models \phi$ iff $p\in L(s)$  \hfill  (by the definition of satisfiability) \\
 $\LRto$ $p \in L'(s')$ \hfill ($s \lrto_V s'$)\\
 $\LRto$ $(\Hm', s') \models \phi$

 \textbf{Case} $\phi = \neg \psi$:\\
 $(\Hm, s) \models \phi$ iff $(\Hm, s) \not \models \psi$ \\
 $\LRto$ $(\Hm', s') \not \models \psi$  \hfill   (induction hypothesis)\\
 $\LRto$ $(\Hm', s') \models \phi$

 \textbf{Case} $\phi = \psi_1 \vee \psi_2$:\\
 $(\Hm, s) \models \phi$\\
 $\LRto$ $(\Hm, s) \models \psi_1$ or $(\Hm, s) \models \psi_2$\\
 $\LRto$ $(\Hm', s') \models \psi_1$ or $(\Hm', s') \models \psi_2$   \hfill  (induction hypothesis)\\
 $\LRto$ $(\Hm', s') \models \phi$

 \textbf{Case} $\phi = \EXIST \NEXT \psi$:\\
 $\Hm, s \models \phi$ \\
 $\LRto$ There is a path $\pi = (s, s_1, ...)$ such that $\Hm, s_1 \models \psi$\\
 $\LRto$ There is a path $\pi' = (s', s_1', ...)$ such that $\pi \lrto_V \pi'$ \hfill   ($s \lrto_V s'$, Proposition~\ref{div})\\
 $\LRto$ $s_1 \lrto_V s_1'$  \hfill ($\pi \lrto_V \pi'$)\\
 $\LRto$ $(\Hm', s_1') \models \psi$  \hfill  (induction hypothesis)\\
 $\LRto$ $(\Hm', s') \models \phi$

 \textbf{Case} $\phi = \EXIST \GLOBAL \psi$:\\
 $\Hm, s \models \phi$ \\
 $\LRto$ There is a path $\pi =(s=s_0, s_1, ...)$ such that for each $i \geq 0$ there is $(\Hm, s_i) \models \psi$\\
 $\LRto$ There is a path $\pi' = (s'=s_0', s_1', ...)$ such that $\pi \lrto_V \pi'$   \hfill ($s \lrto_V s'$, Proposition~\ref{div})\\
 $\LRto$ $s_i \lrto_V s_i'$ for each $i \geq 0$ \hfill ($\pi \lrto_V \pi'$)\\
 $\LRto$ $(\Hm', s_i') \models \psi$ for each $i \geq 0$  \hfill  (induction hypothesis)\\
 $\LRto$ $(\Hm', s') \models \phi$

 \textbf{Case} $\phi = \EXIST [\psi_1 \UNTIL \psi_2]$:\\
 $\Hm, s \models \phi$ \\
 $\LRto$ There is a path $\pi= (s=s_0, s_1, ...)$ such that there is $i \geq 0$ such that $(\Hm, s_i) \models \psi_2$, and for all $0 \leq j < i$, $(\Hm, s_j) \models \psi_1$\\
 $\LRto$ There is a path $\pi' = (s=s_0', s_1', ...)$ such that $\pi \lrto_V \pi'$  \hfill  ($s \lrto_V s'$, Proposition~\ref{div})\\
 $\LRto$ $(\Hm', s_i') \models \psi_2$, and for all $0 \leq j < i$ $(\Hm', s_j') \models \psi_1$   \hfill   (induction hypothesis)\\
 $\LRto$ $(\Hm', s') \models \phi$
 \end{proof}

 \noindent\textbf{Proposition}~\ref{B_to_T}  Let $V\subseteq\cal A$ and $({\cal M}_i,s_i)~(i=1,2)$ be two \MPK-structures.
   Then
   \[(s_1,s_2)\in{\cal B}_n\mbox{ iff }
   \Tr_j(s_1)\lrto_V\Tr_j(s_2)\mbox{ for every $0\le j\le n$}.\]
 \begin{proof}
 We will prove this from two aspects:

 $(\Rto)$ If $(s_1, s_2) \in \Hb_n$, then $Tr_j(s_1) \lrto_V Tr_j(s_2)$ for all $0 \leq j \leq n$. $(s, s') \in \Hb_n$ implies both roots of $Tr_n(s_1)$ and $Tr_n(s_2)$ have the same atoms except those atoms in $V$.
 Besides, for any $s_{1,1}$ with $(s_1, s_{1,1}) \in R_1$, there is a $s_{2,1}$ with $(s_2, s_{2,1})\in R_2$ s.t. $(s_{1,1}, s_{2,1}) \in \Hb_{n-1}$ and vice versa.
 Then we have $Tr_1(s_1) \lrto_V Tr_1(s_2)$.
 Therefore,  $Tr_n(s_1) \lrto_V Tr_n(s_2)$ by use such method recursively, and then $Tr_j(s_1) \lrto_V Tr_j(s_2)$ for all $0 \leq j \leq n$.

 $(\Lto)$ If $Tr_j(s_1) \lrto_V Tr_j(s_2)$ for all $0\leq j \leq n$, then $(s_1, s_2) \in \Hb_n$.
 $Tr_0(s_1) \lrto_V Tr_0(s_2)$ implies $L(s_1) - V = L'(s_2) - V$ and then $(s, s') \in \Hb_0$.
 $Tr_1(s_1) \lrto_V Tr_1(s_2)$ implies $L(s_1) - V = L'(s_2)- V$ and for every successors $s$ of the root of one, it is possible to find a successor of the root of the other $s'$ such that
 $(s, s')\in \Hb_0$. Therefore $(s_1, s_2) \in \Hb_1$, and then we will have $(s_1, s_2) \in \Hb_n$ by use such method recursively.
 \end{proof}

 \noindent\textbf{Proposition}~\ref{pro:k}   Let $V\subseteq \Ha$, $\Hm$ be an initial structure and $s,s'\in S$
   such that $s\not\lrto_V s'$.
   There exists a least  $k$ such that
   $\Tr_k(s)$ and $\Tr_k(s')$ are not $V$-bisimilar.
 \begin{proof}
 If $s\not\lrto_V s'$, then there exists a least constant $c$ such that $(s_i, s_j) \notin \Hb_c$, and then there is a least constant $m$ ($m \leq c$) such that $\Tr_m(s_i)$ and $\Tr_m(s_j)$ are not V-bisimilar by Proposition~\ref{B_to_T}. Let $k=m$, the lemma is proved.
 \end{proof}

 \noindent\textbf{Section \ref{forgetting}.2  Characterization of initial \MPK-structure}\\

\noindent \textbf{Lemma}\ref{lem:Vb:TrFormula:Equ} Let $V\subseteq \Ha$, $\Hm$ and $\Hm'$ be two initial structures,
 $s\in S$, $s'\in S'$ and $n\ge 0$. If $\Tr_n(s) \lrto_{\overline V} \Tr_n(s')$, then ${\cal F}_V(\Tr_n(s)) \equiv {\cal F}_V(\Tr_n(s'))$.\\
 \begin{proof}
 This result can be proved by inducting on $n$.

 \textbf{Base.} It is evident that for any $s_n\in S$ and $s_n' \in S'$, if $\Tr_0(s_n) \lrto_{\overline V} \Tr_0(s_n')$ then ${\cal F}_V(\Tr_0(s_n)) \equiv {\cal F}_V(\Tr_0(s_n'))$ due to $L(s_n) - \overline V = L'(s_n') - \overline V$ by the definition of the $V$-bisimulation.

 \textbf{Step.} Supposing that for $k=m$ $(0< m \leq n)$ there is if $\Tr_{n-k}(s_k) \lrto_{\overline V} \Tr_{n-k}(s_k')$ then ${\cal F}_V(\Tr_{n-k}(s_k)) \equiv {\cal F}_V(\Tr_{n-k}(s_k'))$, then we will show if $\Tr_{n-k+1}(s_{k-1}) \lrto_{\overline V} \Tr_{n-k+1}(s_{k-1}')$ then ${\cal F}_V(\Tr_{n-k+1}(s_{k-1})) \equiv {\cal F}_V(\Tr_{n-k+1}(s_{k-1}'))$. Obviously that:\\
  ${\cal F}_V(\Tr_{n-k+1}(s_{k-1})) =$
  $\left(\bigwedge_{(s_{k-1},s_k)\in R}
     \EXIST \NEXT {\cal F}_V(\Tr_{n-k}(s_k))\right)
     \wedge \ALL \NEXT\left(\bigvee_{(s_{k-1},s_k)\in R}
     {\cal F}_V(\Tr_{n-k}(s_k) )\right)
     \wedge {\cal F}_V(\Tr_0(s_{k-1}))$\\
  ${\cal F}_V(\Tr_{n-k+1}(s_{k-1}')) =$
  $\left(\bigwedge_{(s_{k-1}',s_k')\in R}
     \EXIST \NEXT {\cal F}_V(\Tr_{n-k}(s_k'))\right)
     \wedge \ALL \NEXT\left(\bigvee_{(s_{k-1}',s_k')\in R}
     {\cal F}_V(\Tr_{n-k}(s_k') )\right)
     \wedge {\cal F}_V(\Tr_0(s_{k-1}'))$ by the definition of characterizing formula of the computation tree.
  Then we have for any $(s_{k-1}, s_k) \in R$ there is $(s_{k-1}', s_k') \in R'$ such that $\Tr_{n-k}(s_k) \lrto_{\overline V} \Tr_{n-k}(s_k')$ by $\Tr_{n-k+1}(s_{k-1}) \lrto_{\overline V} \Tr_{n-k+1}(s_{k-1}')$. Besides, for any $(s_{k-1}', s_k') \in R'$ there is $(s_{k-1}, s_k) \in R$ such that $\Tr_{n-k}(s_k) \lrto_{\overline V} \Tr_{n-k}(s_k')$ by $\Tr_{n-k+1}(s_{k-1}) \lrto_{\overline V} \Tr_{n-k+1}(s_{k-1}')$.
  Therefore, we have ${\cal F}_V(\Tr_{n-k+1}(s_{k-1})) \equiv {\cal F}_V(\Tr_{n-k+1}(s_{k-1}'))$ by induction hypothesis.
 \end{proof}


 \noindent\textbf{Theorem}~\ref{CF}
 Let $V\subseteq \Ha$, $\Hm=(S,R,L,s_0)$ and $\Hm'=(S',R', L',s_0')$ be two initial structures. Then,
 \begin{enumerate}[(i)]
 \item  $(\Hm',s_0') \models {\cal F}_V({\cal M},s_0)
 \text{ iff } ({\cal M},s_0) \lrto_{\overline V} ({\cal M}',s_0')$;

 \item  $s_0 \lrto_{\overline V} s_0'$ implies  ${\cal F}_V(\Hm, s_0) \equiv {\cal F}_V(\Hm', s_0')$.

 \end{enumerate}

 In order to prove Theorem~\ref{CF}, we prove the following two lemmas at first.

 \begin{lemma}\label{Bn:to:Tn}
 Let $V\subseteq \Ha$, $\Hm=(S, R, L,s_0)$ and $\Hm'=(S', R', L',s_0')$ be two initial structures,
 $s\in S$, $s'\in S'$ and $n\ge 0$.
 \begin{enumerate}[(i)]
   \item $({\cal M},s)\models{\cal F}_V(\Tr_n(s))$.
   \item If $({\cal M},s)\models{\cal F}_V(\Tr_n(s'))$ then
   $\Tr_n(s) \lrto_{\overline V} \Tr_n(s')$.
 \end{enumerate}
 \end{lemma}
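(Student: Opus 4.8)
The plan is to prove both parts by induction on the depth $n$, after first noting the key alignment between the two notions involved: the characterizing formula ${\cal F}_V$ records the truth values of the atoms \emph{in} $V$ at each node, whereas the relation $\lrto_{\overline V}$ between computation trees requires that labels agree modulo $\overline V$, i.e.\ $L(s)-\overline V=L'(s')-\overline V$. Since $L(s)-\overline V=L(s)\cap V$, both notions concern exactly the atoms in $V$, and this coincidence is what makes the statement go through cleanly. Throughout I would state both induction hypotheses for \emph{arbitrary} pairs of states at the fixed depth $k$, since part (ii) applies the hypothesis across the two structures $\Hm$ and $\Hm'$.

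For part (i), the base case $n=0$ is immediate: ${\cal F}_V(\Tr_0(s))$ merely asserts $p$ for each $p\in V\cap L(s)$ and $\neg q$ for each $q\in V-L(s)$, which $({\cal M},s)$ satisfies by the definition of the labelling. For the step, assuming $({\cal M},t)\models{\cal F}_V(\Tr_k(t))$ for every state $t$, I would verify the three conjuncts of ${\cal F}_V(\Tr_{k+1}(s))$ at $s$ one by one. The conjunct ${\cal F}_V(\Tr_0(s))$ holds as in the base case. Each $\EXIST\NEXT{\cal F}_V(\Tr_k(s'))$ with $(s,s')\in R$ is witnessed by the successor $s'$ itself together with $({\cal M},s')\models{\cal F}_V(\Tr_k(s'))$ from the hypothesis. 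Finally $\ALL\NEXT\bigl(\bigvee_{(s,s')\in R}{\cal F}_V(\Tr_k(s'))\bigr)$ holds because any successor $s_1$ of $s$ is itself one of the $s'$, so the disjunct ${\cal F}_V(\Tr_k(s_1))$ is satisfied at $s_1$ by the hypothesis.

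For part (ii), the base case again reduces to a labelling computation: $({\cal M},s)\models{\cal F}_V(\Tr_0(s'))$ forces $V\cap L(s)=V\cap L'(s')$ (the positive conjuncts give $V\cap L'(s')\subseteq L(s)$, the negative ones give the converse), which is exactly $L(s)-\overline V=L'(s')-\overline V$, i.e.\ $\Tr_0(s)\lrto_{\overline V}\Tr_0(s')$. In the step, from $({\cal M},s)\models{\cal F}_V(\Tr_{k+1}(s'))$ the third conjunct again yields the labelling condition, and the two successor conditions of the tree bisimulation are supplied by the remaining two conjuncts, each paired with the induction hypothesis. The $\ALL\NEXT$ conjunct says that every successor $t$ of $s$ satisfies $\bigvee_{(s',t')\in R'}{\cal F}_V(\Tr_k(t'))$, hence some $(s',t')\in R'$ has $({\cal M},t)\models{\cal F}_V(\Tr_k(t'))$, so $\Tr_k(t)\lrto_{\overline V}\Tr_k(t')$ by the hypothesis; symmetrically, for each $(s',t')\in R'$ the corresponding $\EXIST\NEXT{\cal F}_V(\Tr_k(t'))$ conjunct produces a successor $t$ of $s$ with $({\cal M},t)\models{\cal F}_V(\Tr_k(t'))$, again giving $\Tr_k(t)\lrto_{\overline V}\Tr_k(t')$. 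These establish both directions of the subtree-matching condition, so $\Tr_{k+1}(s)\lrto_{\overline V}\Tr_{k+1}(s')$.

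The argument is essentially mechanical once the bookkeeping is set up; the only genuinely delicate point is keeping the $V$/$\overline V$ duality straight and correctly pairing each of the two successor directions of the tree bisimulation with the matching conjunct of the characterizing formula. I expect that matching to be the main place where a careless argument could go wrong, so I would spell out explicitly that the $\EXIST\NEXT$ conjuncts yield the $s'$-to-$s$ direction while the single $\ALL\NEXT$ conjunct yields the $s$-to-$s'$ direction.
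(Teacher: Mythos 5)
Your proposal is correct and follows essentially the same route as the paper's own proof: both parts go by induction on depth, with (i) verified conjunct by conjunct using the successors themselves as witnesses, and (ii) obtaining the two directions of the subtree-matching condition from the $\ALL\NEXT$ conjunct (every successor of $s$ matches some successor of $s'$) and the $\EXIST\NEXT$ conjuncts (every successor of $s'$ is matched by some successor of $s$), exactly as in the paper. Your explicit remark about stating the induction hypothesis for arbitrary cross-structure pairs of states is a point the paper leaves implicit, but the argument is the same.
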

 \begin{proof}
 (i) It is evident from the definition of ${\cal F}_V(\Tr_n(s))$.
 Base. It is evident that $({\cal M},s)\models {\cal F}_V(\Tr_0(s))$.\\
 Step. For $k \geq 0$, supposing the result talked in (i) is correct in $k - 1$, we will show that $({\cal M},s)\models {\cal F}_V(\Tr_{k+1}(s))$, i.e.,:
 \begin{equation*}
 \resizebox{.91\linewidth}{!}{$
     \displaystyle
  ({\cal M},s)\models \left(\bigwedge_{(s,s')\in R}
     \EXIST \NEXT T(s')\right)
     \wedge \ALL \NEXT\left(\bigvee_{(s,s')\in R}
     T(s')\right)
     \wedge {\cal F}_V(\Tr_0(s)).
  $}
 \end{equation*}
 Where $T(s') ={\cal F}_V(\Tr_k(s'))$. It is evident that $({\cal M},s)\models {\cal F}_V(\Tr_0(s))$ by Base. It is evident that for any $(s,s') \in R$, there is $({\cal M}, s') \models {\cal F}_V(\Tr_k(s'))$ by inductive assumption. Then we have $({\cal M},s)\models \EXIST \NEXT {\cal F}_V(\Tr_k(s')$, and then $({\cal M},s)\models \left(\bigwedge_{(s,s')\in R}
     \EXIST \NEXT {\cal F}_V(\Tr_k(s'))\right)$. Similarly, we have that for any $(s,s') \in R$, there is $({\cal M}, s') \models \bigvee_{(s,s'')\in R}
     {\cal F}_V(\Tr_k(s'') )$. Therefore, $({\cal M},s)\models \ALL \NEXT\left(\bigvee_{(s,s'')\in R}
     {\cal F}_V(\Tr_k(s'') )\right)$.

 (ii)  \textbf{Base}. If $n=0$, then $(\Hm, s)  \models {\cal F}_V(\Tr_0(s'))$ implies $L(s) - \overline V = L'(s') - \overline V$. Hence, $\Tr_0(s) \lrto_{\overline V} \Tr_0(s')$.\\
     \textbf{Step}. Supposing $n>0$ and the result talked in (ii) is correct in $n-1$.\\
   (a) It is easy to see that $L(s) - \overline V = L'(s') - \overline V$.\\
   (b) We will show that for each $(s, s_1) \in R$, there is a $(s', s_1') \in R'$ such that $\Tr_{n-1}(s_1) \lrto_{\overline V} \Tr_{n-1}(s_1')$.
       Since $(\Hm, s) \models {\cal F}_V(\Tr_n(s'))$, then $(\Hm, s) \models \ALL \NEXT\left(\bigvee_{(s',s_1')\in R}{\cal F}_V(\Tr_{n-1}(s_1') )\right)$.
       Therefore, for each $(s, s_1) \in R$ there is a $(s', s_1') \in R'$ such that $(\Hm, s_1) \models {\cal F}_V(\Tr_{n-1}(s_1') )$. Hence, $\Tr_{n-1}(s_1) \lrto_{\overline V} \Tr_{n-1}(s_1')$ by inductive hypothesis.\\
   (c) We will show that for each $(s',s_1')\in R'$ there is a $(s,s_1)\in R$ such that $\Tr_{n-1}(s_1') \lrto_{\overline V} \Tr_{n-1}(s_1)$.
       Since $(\Hm, s) \models {\cal F}_V(\Tr_n(s'))$, then $(\Hm, s) \models  \bigwedge_{(s',s_1')\in R'} \EXIST \NEXT {\cal F}_V(\Tr_{n-1}(s_1'))$.
       Therefore, for each $(s',s_1')\in R'$ there is a $(s,s_1)\in R$ such that $(\Hm, s_1) \models {\cal F}_V(\Tr_{n-1}(s_1')$.
       Hence, $\Tr_{n-1}(s_1) \lrto_{\overline V} \Tr_{n-1}(s_1')$ by inductive hypothesis.
 \end{proof}

 A consequence of the previous lemma is:

 \begin{lemma}\label{div_s}
 Let $V\subseteq \Ha$, $\Hm=(S,R,L,s_0)$ an initial structure, $k={ch({\cal M},V)}$ and $s\in S$.
 \begin{enumerate}[(i)]
   \item $(\Hm, s)\models {\cal F}_V(\Tr_k(s))$, and
   \item for each $s'\in S$, $({\cal M},s) \lrto_{\overline V} ({\cal M},s')$
   if and only if $({\cal M},s')\models{\cal F}_V(\Tr_k(s))$.
 \end{enumerate}
 \end{lemma}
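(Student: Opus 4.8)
The plan is to read both parts off Lemma~\ref{Bn:to:Tn}, with essentially all the work falling on the non-trivial direction of (ii). Part (i) is immediate: Lemma~\ref{Bn:to:Tn}(i) asserts $(\Hm,s)\models{\cal F}_V(\Tr_n(s))$ for \emph{every} $n$, so instantiating $n:=k=ch(\Hm,V)$ gives the claim. For the forward direction of (ii), assume $(\Hm,s)\lrto_{\overline V}(\Hm,s')$. Then $(s,s')\in{\cal B}_k^{\overline V}$, so by Proposition~\ref{B_to_T} we get $\Tr_k(s)\lrto_{\overline V}\Tr_k(s')$, and Lemma~\ref{lem:Vb:TrFormula:Equ} yields ${\cal F}_V(\Tr_k(s))\equiv{\cal F}_V(\Tr_k(s'))$. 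Combining this equivalence with Lemma~\ref{Bn:to:Tn}(i) applied at $s'$ (which gives $(\Hm,s')\models{\cal F}_V(\Tr_k(s'))$) we conclude $(\Hm,s')\models{\cal F}_V(\Tr_k(s))$.

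The converse direction of (ii) is the heart of the matter. Assuming $(\Hm,s')\models{\cal F}_V(\Tr_k(s))$, Lemma~\ref{Bn:to:Tn}(ii) (reading the evaluated state as $s'$ and the tree's root as $s$) gives $\Tr_k(s')\lrto_{\overline V}\Tr_k(s)$, hence $\Tr_k(s)\lrto_{\overline V}\Tr_k(s')$ by symmetry of tree bisimilarity. What remains — and this is the only real work — is to \emph{lift} this single-depth agreement at the characterization depth $k$ to agreement at all depths, i.e.\ to $(s,s')\in{\cal B}_n$ for every $n$, which is precisely $s\lrto_{\overline V}s'$.

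To carry out the lifting I would first isolate the auxiliary equivalence $(t,t')\in{\cal B}_n^{\overline V}$ iff $\Tr_n(t)\lrto_{\overline V}\Tr_n(t')$, proved by a routine induction on $n$ since the recursive clauses defining ${\cal B}_{n+1}$ and $\Tr_{n+1}(\cdot)\lrto_{\overline V}\Tr_{n+1}(\cdot)$ have exactly the same shape (this is a mild refinement of Proposition~\ref{B_to_T}, which only gives the conjunction over all depths $\le n$). This immediately turns $\Tr_k(s)\lrto_{\overline V}\Tr_k(s')$ into $(s,s')\in{\cal B}_k$, so $(s,s')\in{\cal B}_n$ already holds for all $n\le k$ by the monotonicity ${\cal B}_k\subseteq{\cal B}_n$ of Lemma~\ref{lem:B:relations}(i). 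For $n>k$ it suffices to show the chain ${\cal B}_0\supseteq{\cal B}_1\supseteq\cdots$ has stabilized by depth $k$, i.e.\ ${\cal B}_k={\cal B}_{k+1}$: if $\Hm$ is not $V$-distinguishable this is exactly $k=ch(\Hm,V)=\min\{j\mid{\cal B}_j={\cal B}_{j+1}\}$; and if $\Hm$ is $V$-distinguishable, then any pair $(t,t')\in{\cal B}_k\setminus{\cal B}_{k+1}$ would, by the auxiliary equivalence together with Proposition~\ref{pro:k}, be first distinguished exactly at depth $k+1$, i.e.\ $\dis_V(\Hm,t,t',k+1)$, forcing $ch(\Hm,V)\ge k+1>k$ and contradicting $k=ch(\Hm,V)$. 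Hence ${\cal B}_n={\cal B}_k$ for all $n\ge k$, so $(s,s')\in{\cal B}_n$ for every $n$ and $s\lrto_{\overline V}s'$.

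The expected main obstacle is this final depth-lifting step: unlike the rest, it cannot be discharged by a single citation but relies on matching the value $k=ch(\Hm,V)$ to the stabilization depth of the $\overline V$-bisimulation sequence. The cleanest route, as sketched, is to first establish the single-depth correspondence between ${\cal B}_n$ and tree bisimilarity of depth $n$, after which both the passage from trees to ${\cal B}_k$ and the stabilization argument become short. The one point to handle carefully is the edge case where $\Hm$ is not $V$-distinguishable, where $k$ may be $0$ and the claim holds trivially because all states of $\Hm$ are already $\overline V$-bisimilar.
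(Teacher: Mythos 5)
Your proof is correct and follows essentially the same route as the paper: both parts rest on Lemma~\ref{Bn:to:Tn} together with the definition of $ch(\Hm,V)$, the only difference being that the paper handles the hard direction of (ii) by contraposition (if $s\not\lrto_{\overline V}s'$ then, by the choice of $k$, already $\Tr_k(s)\not\lrto_{\overline V}\Tr_k(s')$), whereas you prove the same bridge directly via stabilization of the chain ${\cal B}_0\supseteq{\cal B}_1\supseteq\cdots$ at depth $k$. Your version makes explicit the depth-lifting step that the paper leaves implicit, but the underlying argument is the same.
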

 \begin{proof}
 (i) It is evident from the (i) of Lemma~\ref{Bn:to:Tn}.

 (ii) Let $\phi = {\cal F}_V(\Tr_k(s))$, where $k$ is the V-characteristic number of $\Hm$. $(\Hm, s) \models \phi$ by the definition of ${\cal F}$, and then for all $s' \in S$, if $s \lrto_{\overline V} s'$ there is $(\Hm, s') \models \phi$ by Theorem~\ref{thm:V-bisimulation:EQ} due to $\IR(\phi, \Ha - V)$. Supposing $(\Hm, s')\models \phi$, if $s \nleftrightarrow_{\overline V} s'$, then $\Tr_k(s) \not \lrto_{\overline V} \Tr_k(s')$, and then $(\Hm, s')\not \models \phi$ by Lemma~\ref{Bn:to:Tn}, a contradiction.
 \end{proof}

 Now we are in the position of proving Theorem~\ref{CF}.\\
 \begin{proof}
 (i) Let ${\cal F}_V(\Hm, s_0)$ be the characterizing formula of $(\Hm, s_0)$ on $V$.
 It is evident that $\IR({\cal F}_V(\Hm, s_0), \overline V)$. We will show that $(\Hm, s_0) \models {\cal F}_V(\Hm, s_0)$ at first.

 It is evident that $(\Hm, s_0) \models {\cal F}_V(\Tr_c(s_0))$ by Lemma~\ref{Bn:to:Tn}.
 We must show that $(\Hm, s_0) \models \bigwedge_{s\in S} G(\Hm, s)$.
 Let ${\cal X} = {\cal F}_V(\Tr_c(s)) \rto \left(\bigwedge_{(s,s_1) \in R} \EXIST \NEXT {\cal F}_V(\Tr_c(s_1))\right)$ $\wedge \ALL \NEXT \left(\bigvee_{(s,s_1) \in R} {\cal F}_V(\Tr_c(s_1))\right)$, we will show for all $s\in S$, $(\Hm, s_0) \models G(\Hm, s)$. Where $G(\Hm, s)=\ALL\GLOBAL \cal X$.
 There are two cases we should consider:
 \begin{itemize}
   \item  If $(\Hm, s_0) \not \models {\cal F}_V(\Tr_c(s))$, it is evident that $(\Hm, s_0) \models {\cal X}$;
   \item  If $(\Hm, s_0) \models {\cal F}_V(\Tr_c(s))$:\\
          $(\Hm, s_0) \models {\cal F}_V(\Tr_c(s))$\\
         $\Rto$  $s_0 \lrto_{\overline V} s$ by the definition of characteristic number and Lemma~\ref{div_s}.

         For each $(s, s_1)\in R$ there is:\\
          $(\Hm, s_1) \models {\cal F}_V(\Tr_c(s_1))$  \hfill  ($s_1 \lrto_{\overline V} s_1$)\\
         $\Rto$ $(\Hm, s) \models \bigwedge_{(s,s_1)\in R}\EXIST \NEXT {\cal F}_V(\Tr_c(s_1))$\\
         $\Rto$ $(\Hm, s_0) \models$ $\bigwedge_{(s,s_1)\in R}\EXIST \NEXT {\cal F}_V(\Tr_c(s_1))$    \qquad  (by $\IR(\bigwedge_{(s,s_1)\in R}\EXIST \NEXT {\cal F}_V(\Tr_c(s_1)), \overline V)$, $s_0 \lrto_{\overline V} s$).

          For each $(s, s_1)$ there is:\\
           $\Hm, s_1 \models \bigvee_{(s, s_2)\in R}{\cal F}_V(\Tr_c(s_2))$\\
         $\Rto$ $(\Hm, s) \models \ALL \NEXT \left( \bigvee_{(s, s_2)\in R} {\cal F}_V(\Tr_c(s_2)) \right)$ \\
         $\Rto$ $(\Hm, s_0) \models$  $\ALL \NEXT \left( \bigvee_{(s, s_2)\in R} {\cal F}_V(\Tr_c(s_2)) \right)$   \qquad  (by $\IR(\ALL \NEXT \left( \bigvee_{(s, s_2)\in R} {\cal F}_V(\Tr_c(s_2)) \right), \overline V)$, $s_0 \lrto_{\overline V} s$)\\
         $\Rto$ $(\Hm, s_0) \models {\cal X}$.\\
 \end{itemize}
 For any other states $s'$ which can reach from $s_0$ can be proved similarly, i.e.,, $(\Hm,s')\models \cal X$.
 Therefore, for all $s\in S$, $(\Hm, s_0) \models G(\Hm, s)$, and then $(\Hm, s_0) \models {\cal F}_V(\Hm, s_0)$.

 We will prove this theorem from the following two aspects:

 $(\Lto)$ If $s_0 \lrto_{\overline V} s_0'$, then $(\Hm',s_0') \models {\cal F}_V(M,s_0)$. Since $(\Hm, s_0) \models {\cal F}_V(\Hm, s_0)$ and $\IR({\cal F}_V(\Hm, s_0), \overline V)$, hence
 $(\Hm',s_0') \models {\cal F}_V(M,s_0)$ by Theorem~\ref{thm:V-bisimulation:EQ}.

 $(\Rto)$ If $(\Hm',s_0') \models {\cal F}_V(M,s_0)$, then $s_0 \lrto_{\overline V} s_0'$. We will prove this by showing that for all $n \geq 0$, $Tr_n(s_0) \lrto_{\overline V} Tr_n(s_0')$.

 \textbf{Base}. It is evident that $Tr_0(s_0) \equiv Tr_0(s_0')$.

 \textbf{Step}. Supposing $\Tr_k(s_0) \lrto_{\overline V} \Tr_k(s_0')$ ($k > 0$), we will prove $\Tr_{k+1}(s_0) \lrto_{\overline V} \Tr_{k+1}(s_0')$. We should only show that $\Tr_1(s_k) \lrto_{\overline V} \Tr_1(s_k')$. Where $(s_0, s_1), (s_1, s_2)$, $\dots$, $(s_{k-1}, s_k) \in R$ and $(s_0', s_1'), (s_1', s_2'), \dots, (s_{k-1}', s_k') \in R'$, i.e., $s_{i+1}$ ($s_{i+1}'$) is an immediate successor of $s_i$ ($s_i'$) for all $0 \leq i \leq k-1$.

       (a) It is evident that $L(s_k) - \overline V = L'(s_k') - \overline V$ by inductive assumption.

       Before talking about the other points, note the following fact that:\\
       $(\Hm',s_0') \models {\cal F}_V(\Hm,s_0)$\\
       $\Rto$ For all $s'\in S'$, $(\Hm', s')\models {\cal F}_V(\Tr_c(s)) \rto$ \\ $\left(\bigwedge_{(s,s_1)\in R} \EXIST \NEXT {\cal F}_V(\Tr_c(s_1))\right)\wedge \ALL \NEXT \left( \bigvee_{(s,s_1)\in R} {\cal F}_V(\Tr_c(s_1))\right)$  for any $s\in S$.   \hfill  \textbf{(fact)}\\
       (I) $(\Hm', s_0') \models {\cal F}_V(\Tr_c(s_0)) \rto \left(\bigwedge_{(s_0, s_1) \in R} \EXIST \NEXT {\cal F}_V(\Tr_c(s_1))\right)$ $\wedge$ $\ALL \NEXT \left(\bigvee_{(s_0, s_1) \in R} {\cal F}_V(\Tr_c(s_1)) \right)$     \hfill  \textbf{(fact)}\\
         (II) $(\Hm', s_0') \models {\cal F}_V(\Tr_c(s_0)))$  \hfill  (known)\\
         (III) $(\Hm', s_0') \models \left(\bigwedge_{(s_0, s_1) \in R} \EXIST \NEXT {\cal F}_V(\Tr_c(s_1))\right)$ $\wedge$ $\ALL \NEXT \left(\bigvee_{(s_0, s_1) \in R} {\cal F}_V(\Tr_c(s_1)) \right)$  \hfill  ((I),(II))\\

         (b) We will show that for each $(s_k, s_{k+1}) \in R$ there is a $(s_k', s_{k+1}') \in R'$ such that $L(s_{k+1}) - \overline V = L'(s_{k+1}') - \overline V$.\\
         (1) $(\Hm', s_0') \models \bigwedge_{(s_0, s_1) \in R} \EXIST \NEXT {\cal F}_V(\Tr_c(s_1))$  \hfill  (III)\\
         (2) For all $(s_0, s_1) \in R$, there exists $(s_0', s_1') \in R'$ s.t.\ $(\Hm', s_1') \models {\cal F}_V(\Tr_c(s_1))$  \hfill  (2)\\
         (3) $\Tr_c(s_1) \lrto_{\overline V} \Tr_c(s_1')$  \hfill  ((2), Lemma~\ref{Bn:to:Tn}) \\
         (4) $L(s_1) - \overline V = L'(s_1') - \overline V$  \hfill   ((3), $c \geq 0)$\\
         (5) $(\Hm', s_1') \models {\cal F}_V(\Tr_c(s_1)) \rto \left(\bigwedge_{(s_1,s_2)\in R} \EXIST \NEXT {\cal F}_V(\Tr_c(s_2))\right) \wedge \ALL \NEXT \left(\bigvee_{(s_1,s_2)\in R} {\cal F}_V(\Tr_c(s_2))\right)$     \hfill  \textbf{(fact)}\\
         (6) $(\Hm', s_1') \models \left(\bigwedge_{(s_1,s_2)\in R} \EXIST \NEXT {\cal F}_V(\Tr_c(s_2))\right) \wedge \ALL \NEXT \left(\bigvee_{(s_1,s_2)\in R} {\cal F}_V(\Tr_c(s_2))\right)$ \hfill ((2), (5))\\
         (7) $\dots \dots$ \\
         (8) $(\Hm', s_k') \models \left(\bigwedge_{(s_k,s_{k+1})\in R} \EXIST \NEXT {\cal F}_V(\Tr_c(s_{k+1}))\right) \wedge \ALL \NEXT \left(\bigvee_{(s_k,s_{k+1})\in R} {\cal F}_V(\Tr_c(s_{k+1}))\right)$       \hfill (similar with (6))\\
         (9) For all $(s_k, s_{k+1}) \in R$, there exists $(s_k', s_{k+1}') \in R'$ s.t.\ $(\Hm', s_{k+1}') \models {\cal F}_V(\Tr_c(s_{k+1}))$  \hfill  (8)\\
         (10) $\Tr_c(s_{k+1}) \lrto_{\overline V} \Tr_c(s_{k+1}')$    \hfill ((9), Lemma~\ref{Bn:to:Tn}) \\
         (11) $L(s_{k+1}) - \overline V = L'(s_{k+1}') - \overline V$  \hfill   ((10), $c \geq 0)$\\

         (c) We will show that for each $(s_k', s_{k+1}') \in R'$ there is a $(s_k, s_{k+1})\in R$ such that $L(s_{k+1}) - \overline V = L'(s_{k+1}') - \overline V$.\\
         (1) $(\Hm', s_k') \models \ALL \NEXT \left(\bigvee_{(s_k,s_{k+1})\in R} {\cal F}_V(\Tr_c(s_{k+1}))\right)$  \hfill (by (8) talked above)\\
         (2) For all $(s_k', s_{k+1}') \in R'$, there exists $(s_k, s_{k+1}) \in R$ s.t.\ $(\Hm', s_{k+1}') \models {\cal F}_V(\Tr_c(s_{k+1}'))$  \hfill (1) \\
         (3) $\Tr_c(s_{k+1}) \lrto_{\overline V} \Tr_c(s_{k+1}')$    \hfill ((2), Lemma~\ref{Bn:to:Tn}) \\
         (4) $L(s_{k+1}) - \overline V = L'(s_{k+1}') - \overline V$  \hfill   ((3), $c \geq 0)$\\

 (ii) This is following Lemma~\ref{lem:Vb:TrFormula:Equ} and the definition of the characterizing formula of initial \MPK-structure ${\cal K}$ on $V$.

 \end{proof}

 \noindent\textbf{Lemma}~\ref{lem:models:formula} Let $\varphi$ be a formula. We have
   \begin{equation}
     \varphi\equiv \bigvee_{(\Hm, s_0)\in\Mod(\varphi)}{\cal F}_{\cal A}(\Hm, s_0).
     \end{equation}
 \begin{proof}
 Let $(\Hm', s_0')$ be a model of $\varphi$. Then $(\Hm', s_0') \models \bigvee_{(\Hm, s_0)\in \Mod(\varphi)} {\cal F}_{\Ha}(\Hm, s_0)$ due to $(\Hm', s_0') \models {\cal F}_{\Ha}(\Hm', s_0')$. On the other hand, suppose that $(\Hm', s_0')$ is a model of $\bigvee_{(\Hm, s_0)\in \Mod(\varphi)} {\cal F}_{\Ha}(\Hm, s_0)$. Then there is a $(\Hm, s_0)\in \Mod(\varphi)$ such that $(\Hm', s_0') \models {\cal F}_{\Ha}(\Hm, s_0)$. And then $(\Hm, s_0) \lrto_{\Empty} (\Hm', s_0')$ by Theorem~\ref{CF}. Therefore, $(\Hm, s_0)$ is also a model of $\varphi$ by Theorem~\ref{thm:V-bisimulation:EQ}.
 \end{proof}


 \noindent \textbf{Section \ref{forgetting}.3 Semantic properties of forgetting in CTL}
\\

\noindent\textbf{Theorem}~\ref{thm:PL:CTL}
Let $\varphi$ be a CPL formula and $V\subseteq \Ha$, then
\[
\CTLforget(\varphi, V) \equiv \Forget(\varphi, V).
\]
\begin{proof}
On one hand, for each $(\Hm, s) \in \Mod(\CTLforget(\varphi, V))$ there exists a $(\Hm', s') \in \Mod(\varphi)$ such that $s\lrto_V s'$. Thus, $(s, s') \in \Hb_0^V$. Hence, $(\Hm, s)$ is a model of $\Forget(\varphi, V)$.

On the other hand, for each $(\Hm, s) \in \Mod(\Forget(\varphi, V))$ with $\Hm = (S, R, L, s)$ there exists a $(\Hm', s') \in \Mod(\varphi)$ such that $(s, s') \in \Hb_0^V$. Construct an initial K-structure $(\Hm_1, s_1)$ such that $\Hm_1=(S_1, R_1, L_1, s_1)$ with $S_1= (S - \{s\}) \cup \{s_1\}$, $R_1$ is the same as $R$ except replace $s$ with $s_1$, and $L_1$ is the same as $L$ except $L_1(s_1) = L'(s')$, where $L'$ is the label function of $M'$. It is clear that $(\Hm_1, s_1)$ is a model of $\varphi$ and $s_1 \lrto_V s$. Hence, $(\Hm, s)$ is a model of $\CTLforget(\varphi, V)$.
\end{proof}

\noindent \textbf{Theorem}~\ref{thm:close}
 \textbf{(Representation theorem)}
Let $\varphi$ and $\varphi'$ be \CTL\ formulas and $V \subseteq \Ha$.
The following statements are equivalent:
\begin{enumerate}[(i)]
  \item $\varphi' \equiv \CTLforget(\varphi, V)$,
  \item $\varphi'\equiv \{\phi \mid\varphi \models \phi \text{ and } \IR(\phi, V)\}$,
  \item Postulates (\W), (\PP), (\NgP) and (\textbf{IR}) hold if $\varphi,   \varphi'$ and $V$ are as in (i) and (ii).
\end{enumerate}
 \begin{proof}
 $(i) \LRto (ii)$. To prove this, we will show that:
 \begin{align*}
  & \Mod(\CTLforget(\varphi, V)) = \Mod(\{\phi | \varphi \models \phi, \IR(\phi, V)\})\\
  & = \Mod(\bigvee_{\Hm, s_0\in \Mod(\varphi)} {\cal F}_{\Ha- V}(\Hm, s_0)).
 \end{align*}
 Firstly, suppose that $(\Hm', s_0')$ is a model of $\CTLforget(\varphi, V)$. Then there exists an initial \MPK-structure $(\Hm, s_0)$ such that $(\Hm, s_0)$ is a model of $\varphi$ and $(\Hm, s_0) \lrto_V (\Hm', s_0')$. By Theorem~\ref{thm:V-bisimulation:EQ}, we have $(\Hm', s_0') \models \phi$ for all $\phi$ such that $\varphi\models \phi$ and $\IR(\phi, V)$. Thus, $(\Hm', s_0')$ is a model of $\{\phi | \varphi \models \phi, \IR(\phi, V)\}$.

 Secondly, suppose that $(\Hm', s_0')$ is a models of $\{\phi | \varphi \models \phi, \IR(\phi, V)\}$. Thus, $(\Hm', s_0')$ $\models$ $\bigvee_{(\Hm, s_0)\in \Mod(\varphi)} {\cal F}_{\Ha- V}(\Hm, s_0)$ due to $\bigvee_{(\Hm, s_0)\in \Mod(\varphi)} {\cal F}_{\Ha- V}(\Hm, s_0)$ is irrelevant to $V$ and $\varphi \models$ $\bigvee_{(\Hm, s_0)\in \Mod(\varphi)} {\cal F}_{\Ha- V}(\Hm, s_0)$ by Lemma~\ref{lem:models:formula}.

 Finally, suppose that $(\Hm', s_0')$ is a model of $\bigvee_{\Hm, s_0\in \Mod(\varphi)} {\cal F}_{\Ha- V}(\Hm, s_0)$. Then there exists $(\Hm, s_0) \in \Mod(\varphi)$ such that $(\Hm', s_0') \models {\cal F}_{\Ha- V}(\Hm, s_0)$. Hence, $(\Hm, s_0)$ $\lrto_V$ $(\Hm', s_0')$ by Theorem~\ref{CF}. Thus $(\Hm', s_0')$ is also a model of $\CTLforget(\varphi,V)$.

 $(ii)\Rto (iii)$. For convenience, let $A=\{\phi | \varphi \models \phi \text{ and } \IR(\phi, V)\}$. First, it is easy to see that $\IR(A, V)$ since for any $\phi' \in A$ there is $\IR(\phi', V)$. Therefore, we have $\IR(\varphi',V)$. Second, $\varphi \models \phi'$ for any $\phi'\in A$, hence $\varphi \models \varphi'$. The $(\NgP)$ and $(\PP)$ are obvious from $A$.

 $(iii)\Rto (ii)$. Suppose that all postulates hold. By Positive Persistence, we have $\varphi' \models \{\phi | \varphi \models \phi, \IR(\phi, V)\}$.
 The  $\{\phi \mid \varphi \models \phi, \IR(\phi, V)\} \models \varphi'$ can be obtained from (\W) and (\textbf{IR}).
 Thus, $\varphi'$ is equivalent to $\{\phi | \varphi \models \phi, \IR(\phi, V)\}$.
 \end{proof}

  \noindent \textbf{Lemma}~\ref{lem:KF:eq} Let $\varphi$ and $\alpha$ be two \CTL\ formulae and $q\in
 		\overline{\Var(\varphi) \cup \Var(\alpha)}$. Then
 	$\CTLforget(\varphi \wedge (q\lrto\alpha), q)\equiv \varphi$.\\
     \begin{proof}
 	Let $\varphi' =\varphi \wedge (q\lrto\alpha)$. For any model $({\cal M},s)$ of $\CTLforget(\varphi', q)$ there is an initial \MPK-structure $({\cal M}',s')$ s.t.\ $({\cal M},s)\lrto_{\{q\}}({\cal M}',s')$ and $({\cal M}',s') \models \varphi'$. It's evident that $({\cal M}',s') \models \varphi$, and then $({\cal M},s) \models \varphi$ since $\IR(\varphi,\{q\})$ and $({\cal M},s)\lrto_{\{q\}}({\cal M}',s')$
 	by Theorem~\ref{thm:V-bisimulation:EQ}.

 	Let $(\Hm,s) \in \Mod(\varphi)$ with ${\cal M}=(S, R, L,s)$. We construct $(\Hm', s)$ with $\Hm' = (S, R, L',s)$ as follows:
     \begin{align*}
       & L':S \rto \Ha\ and\ \forall s^*\in S, L'(s^*) = L(s^*)\ if\ (\Hm, s^*) \not \models \alpha,\\
       & else\ L'(s^*) = L(s^*)\cup\{q\}, \\
       & L'(s) = L(s) \cup\{q\}\ if\ (\Hm, s) \models \alpha,\ and\ L'(s) = L(s)\\
       & otherwise.
     \end{align*}
 	It is clear that $({\cal M}',s) \models \varphi$, $({\cal M}',s) \models q\lrto \alpha$ and
 	$({\cal M}', s) \lrto_{\{q\}} ({\cal M}, s)$. Therefore $({\cal M}', s) \models \varphi \wedge (q\lrto\alpha)$, and then $({\cal M}, s) \models \CTLforget (\varphi \wedge (q\lrto\alpha), q)$ by
 	$({\cal M}', s) \lrto_{\{q\}} ({\cal M}, s)$.
 \end{proof}

 \noindent\textbf{Proposition}~\ref{disTF} \textbf{(Modularity)} Given a formula $\varphi \in \CTL$, $V$ a set of atoms and $p$ an atom such that $p \notin V$. Then,
 \[
 \CTLforget(\varphi, \{p\} \cup V) \equiv \CTLforget(\CTLforget(\varphi, p), V).
 \]
 \begin{proof}
 Let $(\Hm_1, s_1) $ with ${\cal M}_1=(S_1, R_1, L_1,s_1)$ be a model of $\CTLforget(\varphi, \{p\} \cup V)$. By the definition, there exists a model $(\Hm,s)$ with ${\cal M} = (S, R,L,s)$ of $\varphi$, such that $(\Hm_1, s_1)$ $\lrto_{\{p\} \cup V}$ $(\Hm, s)$. We construct an initial \MPK-structure $(\Hm_2, s_2)$ with ${\cal M}_2 = (S_2, R_2, L_2,s_2)$ as follows:
 \begin{enumerate}[(1)]
   \item for $s_2$: let $s_2$ be the state such that:
   \begin{itemize}
     \item $p \in L_2(s_2)$ iff $p \in L_1(s_1)$,
     \item for all $q \in V$, $q \in L_2(s_2)$ iff $q\in L(s)$,
     \item for all other atoms $q'$, $q' \in L_2(s_2)$ iff $q' \in L_1(s_1)$ iff $q'\in L(s)$.
   \end{itemize}
   \item for another:
   \begin{enumerate}[(i)]
     \item for all pairs  $w \in S$ and $w_1 \in S_1$ such that $w \lrto_{\{p\} \cup V} w_1$, let $w_2 \in S_2$ and
         \begin{itemize}
           \item $p \in L_2(w_2)$ iff $p \in L_1(w_1)$,
           \item for all $q \in V$, $q \in L_2(w_2)$ iff $q\in L(w)$,
           \item for all other atoms $q'$, $q' \in L_2(w_2)$ iff $q' \in L_1(w_1)$ iff $q'\in L(w)$.
         \end{itemize}
     \item if $(w_1', w_1)\in R_1$, $w_2$ is constructed based on $w_1$ and $w_2'\in S_2$ is constructed based on $w_1'$, then $(w_2', w_2)\in R_2$.
   \end{enumerate}
   \item delete duplicated states in $S_2$ and pairs in $R_2$.
 \end{enumerate}
 Then we have $(\Hm, s) \lrto_{\{p\}} (\Hm_2, s_2)$ and $(\Hm_2, s_2) \lrto_V (\Hm_1, s_1)$. Thus, $(\Hm_2, s_2) \models \CTLforget(\varphi, p)$. And therefore $(\Hm_1, s_1) \models \CTLforget(\CTLforget(\varphi, p), V)$.

 On the other hand, suppose that $(\Hm_1, s_1)$ is a model of $\CTLforget(\CTLforget(\varphi, p), V)$, then there exists an initial \MPK-structure $(\Hm_2, s_2)$ such that $(\Hm_2, s_2) \models \CTLforget(\varphi, p)$ and $(\Hm_2, s_2) \lrto_V (\Hm_1, s_1)$, and there exists $(\Hm, s)$ such that $(\Hm, s) \models \varphi$ and $(\Hm, s) \lrto_{\{p\}} (\Hm_2, s_2)$. Therefore, $(\Hm, s) \lrto_{\{p\} \cup V} (\Hm_1, s_1)$ by Proposition~\ref{div}, and consequently, $(\Hm_1, s_1) \models \CTLforget(\varphi, \{p\} \cup V)$.
 \end{proof}

 \noindent\textbf{Proposition}~\ref{pro:ctl:forget:1}
 Let $\varphi$, $\varphi_i$, $\psi_i$ ($i=1,2$) be formulas in \CTL\ and $V\subseteq \Ha$. We have
 \begin{enumerate}[(i)]
   \item $\CTLforget(\varphi, V)$ is satisfiable iff $\varphi$ is;
   \item If $\varphi_1 \equiv \varphi_2$, then $\CTLforget(\varphi_1, V) \equiv \CTLforget(\varphi_2, V)$;
   \item If $\varphi_1 \models \varphi_2$, then $\CTLforget(\varphi_1, V) \models \CTLforget(\varphi_2, V)$;
   \item $\CTLforget(\psi_1 \vee \psi_2, V) \equiv \CTLforget(\psi_1, V) \vee \CTLforget(\psi_2, V)$;
   \item $\CTLforget(\psi_1 \wedge \psi_2, V) \models \CTLforget(\psi_1, V) \wedge \CTLforget(\psi_2, V)$;
 \end{enumerate}
 \begin{proof}
 (i) ($\Rto$) Supposing $(\Hm, s)$ is a model of $\CTLforget(\varphi, V)$, then there is a model $(\Hm',s')$ of $\varphi$ s.t. $(\Hm,s) \lrto_V (\Hm',s')$ by the definition of $\CTLforget$.

 ($\Lto$) Supposing $(\Hm, s)$ is a model of $\varphi$, then there is an initial \MPK-structure $(\Hm',s')$ s.t. $(\Hm,s) \lrto_V (\Hm',s')$, and then $(\Hm',s') \models \CTLforget(\varphi, V)$ by the definition of $\CTLforget$.

 The (ii) and (iii) can be proved similarly.

 (iv) ($\Rto$) For all$(\Hm,s)\in \Mod(\CTLforget(\psi_1 \vee \psi_2, V))$, there exists $(\Hm',s')$ $\in$  $\Mod(\psi_1\vee \psi_2)$ s.t. $(\Hm,s) \lrto_V (\Hm',s')$ and $(\Hm',s') \models \psi_1$ or $(\Hm',s') \models \psi_2$ \\
 $\Rto$ there exists $(\Hm_1,s_1) \in \Mod(\CTLforget(\psi_1, V))$ s.t. $(\Hm',s') \lrto_V (\Hm_1,s_1)$ or there exists $(\Hm_2,s_2) \in \Mod(\CTLforget(\psi_2, V))$ s.t. $(\Hm',s') \lrto_V (\Hm_2,s_2)$ \\
 $\Rto$ $(\Hm,s) \models \CTLforget(\psi_1, V) \vee \CTLforget(\psi_2, V)$ by Theorem~\ref{thm:V-bisimulation:EQ}.

 ($\Lto$) for all $(\Hm,s) \in \Mod(\CTLforget(\psi_1, V) \vee \CTLforget(\psi_2, V))$\\
 $\Rto$ $(\Hm,s) \models \CTLforget(\psi_1,V)$ or $(\Hm,s) \models \CTLforget(\psi_2,V)$\\
 $\Rto$ there is an initial \MPK-structure $(\Hm_1,s_1)$ s.t. $(\Hm,s) \lrto_V (\Hm_1,s_1)$ and $(\Hm_1,s_1) \models \psi_1$ or  $(\Hm_1,s_1) \models \psi_2$\\
 $\Rto$ $(\Hm_1,s_1) \models \psi_1 \vee \psi_2$\\
 $\Rto$ there is an initial \MPK-structure $(\Hm_2,s_2)$ s.t. $(\Hm_1,s_1) \lrto_V (\Hm_2,s_2)$ and $(\Hm_2,s_2) \models \CTLforget(\psi_1 \vee \psi_2, V)$\\
 $\Rto$ $(\Hm,s) \lrto_V (\Hm_2,s_2)$ and $(\Hm,s) \models \CTLforget(\psi_1 \vee \psi_2, V)$.

 The (v) can be proved as (iv).
 \end{proof}

 \textbf{Proposition}~\ref{pro:ctl:forget:2} \textbf{(Homogeneity)}
 Let $V\subseteq\cal A$ and $\phi \in \CTL$,
   \begin{enumerate}[(i)]
     \item $\CTLforget(\ALL\NEXT\phi,V)\equiv \ALL\NEXT \CTLforget(\phi,V)$.
     \item $\CTLforget(\EXIST\NEXT\phi,V)\equiv\EXIST\NEXT \CTLforget(\phi,V)$.
     \item $\CTLforget(\ALL \FUTURE\phi,V)\equiv \ALL \FUTURE \CTLforget(\phi,V)$.
     \item $\CTLforget(\EXIST\FUTURE\phi,V)\equiv\EXIST\FUTURE \CTLforget(\phi,V)$.
   \end{enumerate}
 \begin{proof}
 Let $\Hm=(S, R, L,s_0)$ with initial state $s_0$ and $\Hm'=(S', R', L',s_0')$ with initial state $s_0'$, then we call $\Hm', s_0'$ be a sub-structure of $\Hm,s_0$ if:
 \begin{itemize}
   \item $S' \subseteq S$ and $S'=\{s' | s'$ is reachable from $s_0'\}$,
   \item $R' =\{(s_1, s_2)| s_1, s_2 \in S'$ and $(s_1, s_2) \in R\}$,
   \item $L': S' \rto 2^\Ha$ and for all $s_1 \in S'$ there is $L'(s_1) = L(s_1)$, and
   \item $s_0'$ is $s_0$ or a state reachable from $s_0$.
 \end{itemize}

 (i) In order to prove $\CTLforget(\ALL \NEXT \phi, V) \equiv \ALL \NEXT(\CTLforget(\phi, V))$, we only need to prove $\Mod(\CTLforget(\ALL \NEXT \phi, V)) = \Mod( \ALL\NEXT\CTLforget(\phi, V))$:

 $(\Rto)$ For all $(\Hm', s') \in \Mod(\CTLforget(\ALL \NEXT \phi, V))$ there exists an initial \MPK-structure $(\Hm, s)$ s.t. $(\Hm, s)\models \ALL \NEXT \phi$ and $(\Hm, s) \lrto_V (\Hm',s')$\\
 $\Rto$ for any sub-structure $(\Hm_1, s_1)$ of $(\Hm, s)$ there is $(\Hm_1, s_1) \models \phi$, where $s_1$ is a directed successor of $s$ \\
 $\Rto$ there is an initial \MPK-structure $(\Hm_2, s_2)$ s.t. $(\Hm_2, s_2) \models \CTLforget(\phi,V)$ and $(\Hm_2, s_2) \lrto_V (\Hm_1,s_1)$\\
 $\Rto$ it is easy to construct an initial \MPK-structure $(\Hm_3, s_3)$ by $(\Hm_2, s_2)$ s.t. $(\Hm_2, s_2)$ is a sub-structure of $(\Hm_3, s_3)$ with $s_2$ is a direct successor of $s_3$ and $(\Hm_3, s_3) \lrto_V (\Hm,s)$\\
 $\Rto$ $(\Hm_3, s_3) \models \ALL \NEXT (\CTLforget(\phi,V))$ and $(\Hm_3, s_3) \lrto_V (\Hm',s')$\\
 $\Rto$ $(\Hm', s') \models \ALL \NEXT (\CTLforget(\phi,V))$.

 $(\Lto)$ For all $(\Hm_3, s_3) \in \Mod(\ALL \NEXT (\CTLforget(\phi,V)))$, then for any sub-structure $(\Hm_2, s_2)$ with $s_2$ is a directed successor of $s_3$ there is $(\Hm_2, s_2) \models \CTLforget(\phi,V)$\\
 $\Rto$ for any $(\Hm_2, s_2)$ there is an initial \MPK-structure $(\Hm_1, s_1)$ s.t. $(\Hm_1, s_1) \models \phi$ and $(\Hm_1, s_1) \lrto_V (\Hm_2, s_2)$\\
 $\Rto$ it is easy to construct an initial \MPK-structure $(\Hm,s)$ by $(\Hm_1, s_1)$ s.t. $(\Hm_1, s_1)$ is a sub-structure of $(\Hm, s)$ with $s_1$ is a direct successor of $s$ and $(\Hm, s)\lrto_V (\Hm_3, s_3)$\\
 $\Rto$ $(\Hm, s) \models \ALL \NEXT \phi$ and then $(\Hm_3, s_3) \models \CTLforget(\ALL \NEXT \phi, V)$.

 (ii) In order to prove $\CTLforget(\EXIST \NEXT \phi, V) \equiv \EXIST\NEXT\CTLforget(\phi, V)$, we only need to prove $\Mod$ $(\CTLforget(\EXIST \NEXT \phi$, $V)) = \Mod( \EXIST\NEXT\CTLforget(\phi, V))$:

 $(\Rto)$ For all $(\Hm', s') \in \Mod(\CTLforget(\EXIST \NEXT \phi, V))$ there exists an initial \MPK-structure $(\Hm, s)$ s.t. $(\Hm, s) \models \EXIST \NEXT \phi$ and $(\Hm, s) \lrto_V (\Hm',s')$\\
 $\Rto$ there is a sub-structure $(\Hm_1, s_1)$ of $(\Hm, s)$ s.t. $(\Hm_1, s_1) \models \phi$, where $s_1$ is a directed successor of $s$\\
 $\Rto$ there is an initial \MPK-structure $(\Hm_2, s_2)$ s.t. $(\Hm_2, s_2) \models \CTLforget(\phi,V)$ and $(\Hm_2, s_2) \lrto_V (\Hm_1,s_1)$\\
 $\Rto$ it is easy to construct an initial \MPK-structure $(\Hm_3, s_3)$ by $(\Hm_2, s_2)$ s.t. $(\Hm_2, s_2)$ is a sub-structure of $(\Hm_3, s_3)$ that $s_2$ is a direct successor of $s_3$ and $(\Hm_3, s_3) \lrto_V (\Hm,s)$\\
 $\Rto$ $(\Hm_3, s_3) \models \EXIST \NEXT (\CTLforget(\phi,V))$\\
 $\Rto$ $(\Hm', s') \models \EXIST \NEXT (\CTLforget(\phi,V))$.

 $(\Lto)$ For all $(\Hm_3, s_3) \in \Mod(\EXIST \NEXT (\CTLforget(\phi,V)))$, there exists a sub-structure $(\Hm_2, s_2)$ of $(\Hm_3, s_3)$ s.t. $(\Hm_2, s_2) \models \CTLforget(\phi,V)$\\
 $\Rto$ there is an initial \MPK-structure $(\Hm_1, s_1)$ s.t. $(\Hm_1, s_1) \models \phi$ and $(\Hm_1, s_1) \lrto_V (\Hm_2, s_2)$\\
 $\Rto$ it is easy to construct an initial \MPK-structure $(\Hm,s)$ by $(\Hm_1, s_1)$ s.t. $(\Hm_1, s_1)$ is a sub-structure of $(\Hm, s)$ that $s_1$ is a direct successor of $s$ and $(\Hm, s)\lrto_V (\Hm_3, s_3)$\\
 $\Rto$ $(\Hm, s) \models \EXIST \NEXT \phi$ and then $(\Hm_3, s_3) \models \CTLforget(\EXIST \NEXT \phi, V)$.

 (iii) and (iV) can be proved as (i) and (ii) respectively.
 \end{proof}

 \noindent\textbf{Section \ref{forgetting}.4 Complexity Results}
\\

\noindent \textbf{Proposition}\ref{modelChecking}  \textbf{(Model Checking on Forgetting)}  Given an initial \MPK-structure $(\Hm,s_0)$, $V\subseteq{\cal A}$ and $\varphi \in \CTL_{\ALL\FUTURE}$,  deciding $(\Hm,s_0) \models^? \CTLforget(\varphi, V)$ is \textsc{NP}-complete.
\begin{proof}
Membership:
Assume that $(\Hm,s_0)\models\CTLforget(\varphi, V )$, then
there must be an initial \MPK-structure $(\Hm', s_0')$ such that (a) $(\Hm', s_0')\models\varphi$ and
(b) $(\Hm, s_0) \leftrightarrow_V (\Hm', s_0')$. Recall that the condition (a) can be checked in polynomial time in the size of $\Hm'$ and $\varphi$~\cite{DBLP:books/daglib/0007403}. We can also show that it takes polynomial time to check the condition (b) in a similar manner to the proof of Corollary 7.45 in~\cite{Baier:PMC:2008}. Thus, this problem is in $\textsc{NP}$ since guessing such an initial \MPK-structure $(\Hm', s_0')$ which is polynomial in the size of $(\Hm,s_0)$ can be done in polynomial time.
The hardness follows from the fact that the model checking for propositional variable
forgetting is $\textsc{NP}$-hard~\cite{Zhang2008Properties} (considering that propositional variable
forgetting is a special case of forgetting by Theorem~\ref{thm:PL:CTL}).


\end{proof}

\noindent\textbf{Theorem}~\ref{thm:comF} \textbf{(Entailment)} Let $\varphi$ and $\psi$ be two $\CTL_{\ALL \FUTURE}$ formulas and $V$ be a set of atoms. Then,
\begin{enumerate}[(i)]
  \item deciding  $\CTLforget(\varphi, V ) \models^? \psi$ is co-$\textsc{NP}$-complete,
  \item deciding  $\psi \models^? \CTLforget(\varphi, V)$ is $\Pi_2^{\textsc{P}}$-complete,
  \item deciding $\CTLforget(\varphi, V) \models^? \CTLforget(\psi, V)$ is $\Pi_2^{\textsc{P}}$-complete.
\end{enumerate}
\begin{proof}
(i) It is known that deciding whether $\varphi$ is satisfiable is $\textsc{NP}$-Complete~\cite{meier2009complexity}. The hardness follows by setting $\CTLforget(\varphi, \Var(\varphi))\equiv \top$, i.e., deciding whether $\psi$ is valid.
Concerning membership, by Theorem~\ref{thm:close}, we have $\CTLforget(\varphi, V ) \models \psi$ iff $\varphi \models \psi$ and $\IR(\psi, V)$.
Clearly, in $\CTL_{\ALL \FUTURE}$, deciding $\varphi\models \psi$ is in co-$\textsc{NP}$~\cite{meier2009complexity}. We show that deciding whether $\IR(\psi, V )$ is also
in co-NP. W.l.o.g., we assume that $\psi$ is satisfiable.
 Then $\psi$ has a model in the polynomial size of $\psi$.
 We consider the complement of the problem: deciding whether $\psi$ is \emph{not} irrelevant to $V$ (or \emph{relevant}) i.e., $\neg \IR(\psi, V)$. It is easy to see that $\neg \IR(\psi, V)$ iff there exists a model $(\Hm, s_0)$ of $\psi$ and an
initial \MPK-structure $(\Hm',s_0')$ which has a polynomial size in the size of $\psi$ such that
$(\Hm, s_0) \leftrightarrow_V (\Hm',s_0')$ and $(\Hm',s_0')\not \models \psi$. So deciding $\neg \IR(\psi, V)$ can be achieved in two steps: (1) guess two initial \MPK-structures $(\Hm,s_0)$ and $(\Hm',s_0')$ which is of polynomial size   in the size of $\psi$ such that $(\Hm,s_0) \models \psi$ and $(\Hm',s_0')\not \models \psi$, and (2) check
$(\Hm, s_0) \leftrightarrow_V (\Hm',s_0')$. Obviously, both (1) and (2) can be done in polynomial time.

(ii) Membership: We consider the complement of the
problem. We may guess an initial \MPK-structure $(\Hm, s_0)$ which has  polynomial size in the size of $\psi$ satisfying $\psi$ and check whether $(\Hm,s_0)$ $\not \models \CTLforget($ $\varphi$, $V)$. By Proposition~\ref{modelChecking}, we know that it is in $\Sigma_2^{\textsc{P}}$. So the original problem is in $\Pi_2^{\textsc{P}}$. Hardness: Let $\psi \equiv \top$. Then the problem is reduced to decide the validity of  $\CTLforget(\varphi, V )$. Since propositional forgetting is a special case (of forgetting in \CTL) by Theorem~\ref{thm:PL:CTL}, the hardness is directly followed from the proof of Proposition 24 in~\cite{DBLP:journals/jair/LangLM03}.

(iii) Membership: Assume that $\CTLforget(\varphi, V) \not \models \CTLforget(\psi, V)$. Then, there exists an initial \MPK-structure $(\Hm, s)$ such that $(\Hm, s)\models \CTLforget(\varphi, V)$ but $(\Hm, s) \not \models \CTLforget(\psi, V)$, i.e., there is a $(\Hm_1, s_1)$ with $(\Hm_1, s_1) \lrto_V (\Hm, s)$ such that $(\Hm_1, s_1) \models \varphi$ but  for every $(\Hm_2, s_2)$ with $(\Hm, s) \lrto_V (\Hm_2, s_2)$ where $(\Hm_2, s_2) \not \models \psi$. Observe  that such $(\Hm, s)$ and $(\Hm_1, s_1)$ (with the corresponding testing conditions) can be computed in polynomial time in the size of $\varphi, \psi$ and $V$ (since the tasks (a) and (b) in the proof of Proposition~\ref{modelChecking} can be performed in polynomial time).
It is obvious that guessing such $(\Hm, s)$, $(\Hm_1, s_1)$ in the polynomial size of $\varphi$ with $(\Hm_1, s_1) \lrto_V (\Hm, s)$ and checking $(\Hm_1, s_1)\models \varphi$ are feasible while checking $(\Hm_2, s_2) \not \models \psi$ for every $(\Hm, s) \lrto_V (\Hm_2, s_2)$ can be done in polynomial time in the size of $\psi$, and $\Hm_2$.

This shows that the problem is in $\Pi_2^{\textsc{P}}$.

Hardness: It follows from (ii) due to the fact that $\CTLforget(\varphi, V) \models \CTLforget(\psi, V)$ iff $\varphi \models \CTLforget(\psi, V)$ by $\IR(\CTLforget(\psi, V), V)$.

\end{proof}

 \noindent\textbf{Section~\ref{ns_conditions} Necessary and Sufficient Conditions}\\

 \noindent\textbf{Proposition}~\ref{dual} \textbf{(dual)} Let $V,q,\varphi$ and $\psi$ are like in Definition~\ref{def:NC:SC}.
  The $\psi$ is a SNC (WSC) of $q$ on $V$ under $\varphi$ iff $\neg \psi$ is a WSC (SNC)
     of $\neg q$ on $V$ under $\varphi$.
 \begin{proof}
      (i) Suppose $\psi$ is the SNC of $q$. Then $\varphi \models q \rto \psi$. Thus $\varphi \models \neg \psi \rto \neg q$. So $\neg \psi$ is a
 SC of $\neg q$. Suppose $\psi'$ is any other SC of $\neg q$: $\varphi \models \psi' \rto \neg q$. Then $\varphi \models q \rto \neg \psi'$, this means $\neg \psi'$ is a NC of $q$ on $V$ under $\varphi$.
 Thus $\varphi \models \psi \rto \neg \psi'$ by the assumption. So $\varphi \models \psi' \rto \neg \psi$. This proves that $\neg \psi$ is the WSC of $\neg q$.
 The proof of the other part of the proposition is similar.

 (ii) The WSC case can be proved similarly with SNC case.
     \end{proof}

 \noindent\textbf{Proposition}~\ref{formulaNS_to_p}   Let $\Gamma$ and $\alpha$ be two formulas, $V \subseteq \Var(\alpha) \cup \Var(\Gamma)$  and $q$ be a new proposition not in $\Gamma$ and $\alpha$.
  Then, a formula $\varphi$ of $V$ is the SNC (WSC) of $\alpha$ on $V$ under  $\Gamma$ iff it is the SNC (WSC) of $q$ on $V$ under $\Gamma' = \Gamma \cup \{q \lrto \alpha\}$.

 \begin{proof}
     We prove this for SNC. The case for WSC is similar.
     Let $\emph{SNC}(\varphi,\alpha,V,\Gamma)$ denote that $\varphi$ is the SNC of $\alpha$ on $V$ under $\Gamma$, and  $\emph{NC}(\varphi,\alpha,V,\Gamma)$ denote that $\varphi$ is the NC of $\alpha$ on $V$ under $\Gamma$.

     ($\Rto$) We will show that if $\emph{SNC}(\varphi,\alpha,V,\Gamma)$ holds, then $\emph{SNC}(\varphi,q,V,\Gamma')$ will be true. According to $\emph{SNC}(\varphi,\alpha,V,\Gamma)$ and $\alpha\equiv q$, we have $\Gamma' \models q\rto \varphi$, which means $\varphi$ is a NC of $q$ on $V$ under $\Gamma'$. Suppose $\varphi'$ is any NC of $q$ on $V$ under $\Gamma'$, then $\CTLforget(\Gamma',q)\models \alpha \rto \varphi'$ due to $\alpha\equiv q$, $\emph{IR}(\alpha \rto \varphi', \{q\})$ and $(\PP)$, i.e., $\Gamma \models \alpha \rto \varphi'$ by Lemma \ref{lem:KF:eq}, this means $\emph{NC}(\varphi',\alpha,V,\Gamma)$. Therefore, $\Gamma \models \varphi \rto \varphi'$ by the definition of SNC and $\Gamma' \models \varphi \rto \varphi'$. Hence, $\emph{SNC}(\varphi,q,V,\Gamma')$ holds.

     ($\Lto$) We will show that if $\emph{SNC}(\varphi,q,V,\Gamma')$ holds, then $\emph{SNC}(\varphi,\alpha,V,\Gamma)$ will be true. According to $\emph{SNC}(\varphi,q,V,\Gamma')$, it's not difficult to know that $\CTLforget(\Gamma', \{q\})\models \alpha \rto \varphi$ due to $\alpha\equiv q$, $\emph{IR}(\alpha \rto \varphi, \{q\})$ and $(\PP)$, i.e., $\Gamma \models \alpha \rto \varphi$ by Lemma \ref{lem:KF:eq}, this means $\emph{NC}(\varphi,\alpha,V,\Gamma)$. Suppose $\varphi'$ is any NC of $\alpha$ on $V$ under $\Gamma$. Then $\Gamma' \models q \rto \varphi'$ since $\alpha\equiv q$ and $\Gamma'=\Gamma \cup \{q\equiv \alpha\}$, which means $\emph{NC}(\varphi',q,V,\Gamma')$. According to $\emph{SNC}(\varphi,q,V,\Gamma')$, $\emph{IR}(\varphi \rto \varphi', \{q\})$ and $(\PP)$, we have
     $\CTLforget(\Gamma', \{q\})\models \varphi \rto \varphi'$, and $\Gamma \models \varphi \rto \varphi'$ by Lemma \ref{lem:KF:eq}. Hence, $\emph{SNC}(\varphi,\alpha,V, \Gamma)$ holds.
     \end{proof}

 \noindent\textbf{Theorem}~\ref{thm:SNC:WSC:forget} Let $\varphi$ be a formula, $V\subseteq\Var(\varphi)$ and $q\in\Var(\varphi)- V$.
  \begin{enumerate}[(i)]
   \item $\CTLforget (\varphi \land q$, $(\Var(\varphi) \cup \{q\})- V)$
   is a SNC of $q$ on $V$ under $\varphi$.
   \item  $\neg\CTLforget (\varphi \land \neg q$, $(\Var(\varphi) \cup \{q\})- V)$
   is a WSC of $q$ on $V$ under $\varphi$.
  \end{enumerate}
 \begin{proof}
  We will prove the SNC part, while it is not difficult to prove the WSC part according to Proposition \ref{dual}.
  Let ${\cal F}=\CTLforget(\varphi \wedge q, (\Var(\varphi) \cup \{q\})- V)$.

   The ``NC" part: It's easy to see that $\varphi \wedge q \models {\cal F}$ by {\bfseries (W)}. Hence, $\varphi\models q \rto {\cal F}$, this means
   ${\cal F}$ is a NC of $q$ on $V$ under $\varphi$.



     The ``SNC" part: We will show that for all NC $\psi'$ of $q$ on $V$ under $\varphi$ (i.e $\varphi\models q \rto \psi'$) there is $\varphi \models {\cal F} \rto \psi'$.
    We know that if $\varphi \wedge q \models \psi'$ then ${\cal F} \models \psi'$ by {\bfseries (PP)} due to $\emph{IR}(\psi', (\Var(\varphi) \cup \{q\})- V)$. Therefore, we have $\varphi \wedge {\cal F} \models \psi'$ since $\psi'$ is a NC of $q$ on $V$ under $\varphi$ and then $\varphi \models {\cal F} \rto \psi'$, i.e.  ${\cal F}$ is the SNC of $q$ on $V$ under $\varphi$.
  \end{proof}

 \noindent \textbf{Theorem}~\ref{thm:inK:SNC} Let ${\cal K}= (\Hm, s)$ be an initial \MPK-structure with $\Hm=(S,R,L,s_0)$ on the set $\Ha$ of atoms, $V \subseteq \Ha$ and $q\in V' = \Ha - V$. Then:
  \begin{enumerate}[(i)]
   \item the SNC of $q$ on $V$ under ${\cal K}$ is $\CTLforget({\cal F}_{\Ha}({\cal K}) \wedge q, V')$.
   \item the WSC of $q$ on $V$ under ${\cal K}$ is $\neg \CTLforget({\cal F}_{\Ha}({\cal K}) \wedge \neg q, V')$.
  \end{enumerate}
 \begin{proof}
 (i)
 As we know that any initial \MPK-structure ${\cal K}$ can be described as a characterizing formula ${\cal F}_{\Ha}({\cal K})$, then the SNC of $q$ on $V$ under ${\cal F}_{\Ha}({\cal K})$ is $\CTLforget({\cal F}_{\Ha}({\cal K}) \wedge q, \Ha - V)$.

 (ii) This is proved by the dual property.
 \end{proof}


 \noindent\textbf{Section \ref{section_algorithm} An Algorithm Computing CTL Forgetting}\\

 \noindent\textbf{Proposition}\ref{pro:time:alg1} Let $\varphi$ be a \CTL\ formula and $V\subseteq \Ha$ with $|{\cal S}|=m$, $|\Ha|=n$ and $|V|=x$. The space complexity is $O((n-x)m^{2(m+2)}2^{nm} * \log m)$ and the time complexity of Algorithm~\ref{alg:compute:forgetting:by:VB} is at least the same as the space.
\begin{proof}
Supposing each state or atom occupy $\log m$ (supposing $n\leq m$), then a state pair $(s, s')$ occupy $2* \log m$ bits.
For any $B\subseteq {\cal S}$ with $B \not = \emptyset$ and $s_0\in B$, we can construct an initial \MPK-structure $(\Hm, s_0)$ with $\Hm=(B, R, L, s_0)$, in which there is at most $\frac{|B|^2}{2}$ state pairs in $R$ and $|B|*n$ pairs $(s, A)$ ($A \subseteq \Ha$) in $L$. Hence, the $(\Hm, s_0)$ occupy at most $(|B|+|B|^2 + |B|*n)*\log m$ bits.
Besides, for the set $B$ of states we have $|B|$ choices for the initial state, $|B|^{|B|}$ choices for the $R$ and $(2^n)^{|B|}$ choices for the $L$. In the worst case, i.e., when $|B|=m$, we have $m*(m^m * 2^{nm} * m)$ number of initial \MPK-structures.
Therefore, there is at most $m^{m+2}*2^{nm}$ number of initial \MPK-structures, hence it will at most cost $(m^{m+2}*2^{nm}*(m+m^2+nm))*\log m$ bits.

Let $k=n-x$, for any initial \MPK-structure ${\cal K}=(\Hm, s_0)$ with $i\geq 1$ nodes and $\Hm=(B, R, L, s_0)$, in the worst case, i.e., when $ch(\Hm,V)=i$, we will spend $N(i)=P_i(s_0) + i * (P_i(s) + i * P_i(s'))$ space to store the characterizing formula of ${\cal K}$ on $\overline{V}$. Where $s', s\in B$ and $P_i(y)$ is the space spend to store ${\cal F}_{\overline{V}}(\Tr_i(y))$ with $y\in B$.
(We suppose the formulas in $\EXIST \NEXT$ and $\ALL \NEXT$ parts share the same memory.)
In the following, we compute inductively the space needed to store the ${\cal F}_{\overline{V}}(\Tr_n(y))$ with $0\leq n \leq i$
\begin{align*}
    & (1)\ n = 0, && P_0(y) = k\\ 
    & (2)\ n =1, && P_1(y) = k + i *k = k + i* P_0(y)\\
    & (3)\ n =2, && P_2(y) = k + i*(k + i*k) = k + i*P_1(y)\\
    & \dots && \dots\\
    & (i+1)\ n = i, && P_i(y) =k+ i *P_{i-1}(y).
\end{align*}
Therefore, we have
\begin{align*}
 P_i(y) &= k + i*k+ i^2*k\dots + i^i * k = \frac{i^i -1}{i-1} k, \text{ and }\\
 N(i)&= P_i(s_0) + i * (P_i(s) + i * P_i(s'))\\
 = & (i^2 +i +1) P_i(y)\\
 =& (i^2+i +1) \frac{i^i -1}{i-1} k.
\end{align*}
In the worst case, i.e., there is $m^{m+2}*2^{nm}$ initial \MPK-structures with $m$ nodes, we will spent $(m^{m+2}*2^{nm}*N(m))*\log m$ bits to store the result of forgetting.

Therefore, the space complexity is $O((n-x)m^{2(m+2)}2^{nm} * \log m)$ and the time complexity is at least the same as the space.
\end{proof}


\clearpage

\end{document}